\setlist[enumerate, 1]{label=(\alph*)}
\setlist[enumerate]{nosep}
\theoremstyle{plain}
\newtheorem{thm}{Theorem}
\newtheorem{prop}{Proposition}
\newtheorem{lem}{Lemma}[section]
\theoremstyle{definition}
\newtheorem{defn}{Definition}
\newtheorem{assum}{Assumption}
\newtheorem{prob}{Problem}
\theoremstyle{remark}
\newcommand{\logterm}{\log(2 / \delta)}
\newcommand{\gap}{\Delta}
\newcommand{\sset}{Z}
\newcommand{\srew}{\zeta}
\newcommand{\fset}{W}
\newcommand{\appv}{\tilde{v}}
\newcommand{\appnu}{\tilde{\nu}}
\DeclareMathOperator*{\argmax}{arg\,max}
\title{Fully Gap-Dependent Bounds for Multinomial Logit Bandit}
\author{Jiaqi Yang\\
Tsinghua University\\
\texttt{yangjq17@gmail.com}
}
\begin{document}

\maketitle

\begin{abstract}
  We study the multinomial logit (MNL) bandit problem, where at each time step, the seller offers an assortment of size at most $K$ from a pool of $N$ items, and
 the buyer purchases an item from the assortment according to a MNL choice
 model. The objective is to learn the model parameters and maximize the expected revenue. We present (i) an algorithm that identifies the optimal assortment $S^*$ within $\widetilde{O}(\sum_{i = 1}^N \Delta_i^{-2})$ time steps with high probability, and (ii) an algorithm that incurs $O(\sum_{i \notin S^*} K\Delta_i^{-1} \log T)$ regret in $T$ time steps. To our knowledge, our algorithms are the \emph{first} to achieve gap-dependent bounds that \emph{fully} depends on the suboptimality gaps of \emph{all} items. Our technical contributions include an algorithmic framework that relates the MNL-bandit problem to a variant of the top-$K$ arm identification problem in multi-armed bandits, a generalized epoch-based offering procedure, and a layer-based adaptive estimation procedure.
\end{abstract}
\section{Introduction}
\label{sec:intro}
The multinomial logit bandit (MNL-bandit) problem is an important problem in online revenue management and has attracted much attention from both operations research and online learning literature \citep{kok2007demand, rusmevichientong2010dynamic, saure2013optimal, agrawal2016near, agrawal2017thompson, chen2018note, agrawal2019mnl, wang2018near}. In MNL-bandit, at each time step, the seller offers an assortment of size at most $K$ from the pool of $N$ homogeneous items and the buyer purchases an item from the assortment according to the MNL choice model, which is arguably the simplest and most widely used discrete choice model \citep{train2009discrete,luce2012individual,soufiani2013preference} and has deep theoretical foundations \citep{mcfadden1973conditional}. The objective of the seller is to learn the model parameters and maximize the expected revenue through sequentially offering the assortments. MNL-bandit captures the essence of many real-world applications, such as retailing, where the retailer presents a limited number of products on the shelf and the customer purchases an item according to the choice model, and online advertising, where the ad platform displays a limited number of ads and the user clicks one ad according to the choice model. 

In this paper, we study the PAC exploration problem and the regret minimization problem in MNL-bandit, with a focus on proving \emph{fully gap-dependent} sample complexity and regret bounds that depends on the suboptimality gaps of \emph{all} items (detailed in Section \ref{sec:model}). There are strong practical motivations to study these bounds, because they adapt to every MNL-bandit instance and thus lead to better performances on good practical instances.  Unfortunately, there is a lack of studies on such bounds in previous MNL-bandit literature, and bounds in other bandits problems focusing on subset selection do not directly translate to our setting due to the limited feedback issue. We review these in Section \ref{sec:review} after introducing our challenge, results, and technical contributions. 

A central challenge in obtaining fully gap-dependent bounds for MNL-bandit is that the partial order between two items can be \emph{interfered} by other items. We recall an important reason why we have such bounds in other bandits settings is that we can obtain pairwise partial orders between arms to early decide on the optimality of some arms. Arms being decided need no longer be explored and stop contributing to the bounds. For example, in the top-$K$ arm identification problem, we obtain the partial order between two arms by comparing their estimations (which is independent of the estimations of other items) and we decide on arms by whether they could have top-$K$ means. While in our setting, the comparison of two items can be interfered by other items through changing their weights in the revenue function. We underscore that the changing of weights is done by changing the denominator of the \emph{fractional} revenue function, which contains model parameters of \emph{other} items. 

\paragraph{Results} We define the gaps for items and the problems in Section \ref{sec:model}. Our definitions match the intuition and naturally extend the definitions in other bandits settings. 

Our main results are three MNL-bandit algorithms with fully gap-dependent guarantees. For the PAC exploration problem, we present a $\delta$-PAC algorithm with sample complexity  $\widetilde{O}(\sum_{i = 1}^N \gap_i^{-2})$ and a $(\delta, \varepsilon)$-PAC algorithm with similar guarantee. For the regret minimization problem, we present an algorithm with $O(\sum_{i \in [N] \setminus S^*} \frac{K \log T}{\gap_i})$ regret bound.

When $K = 1$, our MNL-bandit setting becomes the multi-armed bandit setting and our bounds recover their instance-optimal sample complexity and regret bounds \citep{lai1985asymptotically, auer2002finite, slivkins2019introduction, lattimore2020}. When $K \ge 2$, our regret bound recovers the $O(\frac{N^2 \log NT}{\Delta})$ global gap-dependent regret bound in \citep{agrawal2019mnl}, because by definition we have $\gap_i \ge \Delta$, where $\Delta = \theta^* - \max_{S \subseteq [N]: \lvert S \rvert \le K}\{R(S, \bm v) : R(S, \bm v) \ne \theta^*\}$ is the gap between the optimal and second-best assortments. We compare our sample complexity bound with a previous gap-independent bound in Section \ref{sec:peresult} after presenting the theorems. 

\paragraph{Technical Contributions} We present our three techniques under the context of the PAC exploration problem in Section \ref{sec:pe} and we extend them to the regret minimization problem in Section \ref{sec:sarreg}. 

Our first technique is an algorithmic framework in Section \ref{sec:reduction}, which resolves our central challenge by the relation in Proposition \ref{prop:topk}. The relation suggests we obtain the pairwise partial order of two items by comparing the confidence intervals of their advantage scores (Definition \ref{defn:advscore}) and early decide the items according to whether they could have positive and top-$K$ advantage scores. Since this early decision rule is similar to that of the top-$K$ arm identification problem, we modify the successive accept-reject algorithm for the latter problem to obtain an algorithm with fully gap-dependent guarantees. We add a caveat that the framework itself does \emph{not} conclude, because estimating the advantage score is not a trivial job. As we show in Lemma \ref{lem:naiveguarantee}, a naive estimation procedure using only methods in previous work could lead to two extra $K$ factors in the sample complexity bound.

Our second technique eliminates an extra $K$ factor by removing some dependency in the naive procedure, as we present in Section \ref{sec:reduced}. An anomaly in the naive procedure is that we need to explore accepted items even though we do not need their scores. To remove this dependency on accepted items, we define a reduced revenue function  that requires estimating a \emph{ratio} of the model parameters. However, previous work only showed how to estimate the parameters themselves using the epoch-based offering procedure \citep{agrawal2016near, agrawal2019mnl}, with which we have to separately estimate the numerator and denominator of the ratio and suffer a huge error in the estimation. To resolve this, we generalize the epoch-based offering  procedure to \emph{directly} estimate the ratio. 

Our third technique eliminates another extra $K$ factor by a layer-based adaptive estimation procedure in Section \ref{sec:adaptive}. By carefully examining the error sources in the estimations of advantage scores, we find that the number of exploration for each item  should adapt to the error it incurs, but exact adaption requires full knowledge of the unknown model parameters. So we surrogate by putting items with similar adaption requirements into the same layer and handling them altogether. We emphasize that the layers are still \emph{unknown} and they could \emph{vary} from phase to phase. We highlight that our surrogate method re-estimates the layers for \emph{each} phase while only pays the sample complexity cost \emph{once}. 

We remark that our first technique  indeed provides a systematic way to apply fractional programming (the method that proves our relation proposition) to online learning settings. Thus it may be of independent interests.  Our second and third techniques utilizes the delicate structure of the MNL model, which could inspire future studies on MNL-bandit and other bandits with MNL model. 

\subsection{Related Work}

\label{sec:review}

MNL-bandit was first studied in \citep{rusmevichientong2010dynamic, saure2013optimal}, where the algorithms required the knowledge of the global suboptimality gap $\Delta$ in advance. Upper confidence bound-type algorithm and Thompson sampling were shown to achieve an $\widetilde{O}(\sqrt{NT})$ minimax regret bound \citep{agrawal2016near, agrawal2017thompson}. A matching $\Omega(\sqrt{NT})$ regret lower bound was shown in \citep{chen2018note}. The first gap-dependent $O(\frac{N^2 \log NT}{\Delta})$ regret bound was shown in \citep{agrawal2019mnl}. All bounds we mentioned are regret bounds, since no previous literature discussed the PAC exploration problem. Although there was a reduction from the MNL-bandit to the multi-armed bandit \citep{agrawal2016near, agrawal2019mnl}, that reduction involves exponentially many arms and thus does not give good gap-dependent bounds.

There is a line of work in  multi-armed bandits and combinatorial multi-armed bandits that studies the subset selection problem, where an algorithm learns a subset to maximize a reward function. Near-optimal fully gap-dependent regret and sample complexity bounds have been proved in those settings \citep{bubeck2012regret, chen2017adaptive, chen2016generalreward, chen2013combinatorial, chen2014combinatorial,  chen2016combinatorial, rejwan2020top}. While MNL-bandit can be seen as a subset selection problem, the major difference is that the feedback in our setting is much more \emph{limited}. In their settings, by selecting a subset (some called ``super arm''), the player gets feedback from all arms in the subset. In our setting, the seller can  obtain feedback  only from the purchased item in the subset. 

Some recent paper studies the MNL choice model under the dueling bandits framework \citep{chen2018nearly, saha2019combinatorial}, proving fully gap-dependent bounds.  Their setting can be seen as a simplification of ours through assuming all items have the same reward $r_i\equiv 1$ and removing the ``no purchase'' decision. In their setting, the optimal assortment simply consists of the items with largest model parameters, so their focus is to learn the order of the parameter. In our setting, the optimal assortment depends on the model parameters in a more complicated manner, so we need to learn the parameters themselves.

\section{Preliminaries}
\label{sec:model}

\paragraph{Notations} We define $\alpha \land \beta = \min\{\alpha, \beta\}, \alpha \lor \beta = \max\{\alpha, \beta\}$. For any two expressions $\alpha$ and $\beta$, if there exists a constant $C > 0$ in digits such that $\alpha \le C \cdot \beta$, we write $\alpha \lesssim \beta$. If $\beta \lesssim \alpha$, we write $\alpha \gtrsim \beta$. If $\alpha \lesssim \beta$ and $\beta \lesssim \alpha$, we write $\alpha \asymp \beta$.  The notions $\widetilde{O}$ and $\widetilde{\Theta}$ suppress the logarithmic terms  and the relatively small gap-independent terms in sample complexity bounds, and the logarithmic terms in regret bounds. We use both $\lvert A \rvert$ and $\#A$ to denote the size of a set $A$.   For two disjoint sets $A, B$ that $A \cap B = \emptyset$, we use $A \sqcup B = A \cup B$ to denote their union. 

\paragraph{Settings and Problems} We define the MNL choice model with parameter $v_i$ for each item $i \in [N] \cup \{0\}$, where item $i = 0$ stands for the ``no purchase'' decision. In this model, when the seller offers an assortment $S$, the buyer purchases item $i \in S \cup \{0\}$ with probability $P_{S}^{\bm v}(i) = \frac{v_i}{v_0 + \sum_{j \in S} v_j}$. Note that ``no purchase'' decision is always available to the buyer. 

We define an MNL-bandit instance as a quadruple $\mathcal I = (N, K, \bm r, \bm v)$, where the reward of item $i \in [N]$ is $r_i \in [0, 1]$ and its MNL model parameter is $v_i \in [0, 1]$. The seller knows $N, K, \bm r$, but does not know $\bm v$. At each time step $t = 1, 2, \ldots, $ the seller offers an assortment $S_t \subseteq [N]$ under the capacity constraint $\lvert S_t \rvert \le K$ and receive the buyer's purchase decision $c_t \sim P_{S_t}^{\bm v}$. As a result, the seller's revenue is $R(S, \bm v) = \mathbb E_{i \sim P_S^{\bm v}}[r_i] = \frac{\sum_{i \in S} v_i r_i}{1 + \sum_{i \in S} v_i}$, where we assume that $r_0 = 0$. We adopt a common convention that $v_0 = 1$, which means the ``no purchase'' decision is the most frequent outcome \citep{agrawal2016near, agrawal2017thompson, agrawal2019mnl}. We use $S^* = \argmax_{S \subseteq [N] : \lvert S \rvert \le K} R(S^*, \bm v)$ to denote the optimal assortment and $\theta^* = R(S^*, \bm v)$ to denote its revenue. 
Next we formally define the suboptimality gap for each item.

\label{sec:gap}

\begin{defn}[Suboptimality gap] \label{defn:sub} For every item $i \in [N]$, we define its suboptimality gap as
\begin{align*}\gap_i = \begin{cases}
R(S^*, \bm v) - \max\limits_{\lvert S \rvert \le K : i \in S} R(S, \bm v), & i \notin S^* \\
R(S^*, \bm v) - \max\limits_{\lvert S \rvert \le K : i \notin S} R(S, \bm v), & i \in S^*.
\end{cases}
\end{align*}
\end{defn}

Our definition  has the same form as the suboptimality gaps in other bandits problems focusing on subset selection \citep{bubeck2013multiple, chen2014combinatorial}. Note that the bounds usually inversely depends on the gaps, so our definition  matches the intuition that  items with small $\Delta_i$ are more difficult to be separated from the optimal assortment and thus lead to worse bounds. We make the following uniqueness assumption, which is typically assumed when studying gap-dependent bounds in bandits literature \citep{bubeck2013multiple, chen2017adaptive, karnin2013almost}.  

Our definition is related to the global gap $\Delta$ studied in previous literature  \citep{rusmevichientong2010dynamic, saure2013optimal, agrawal2019mnl}, by that we have $\gap_i \ge \Delta$ for every item $i$. We mention again that $\Delta = \theta^* - \max_{S \subseteq [N]: \lvert S \rvert \le K}\{R(S, \bm v) : R(S, \bm v) \ne \theta^*\}$ is the gap between the optimal and second-best assortments.

\begin{assum}[Uniqueness] \label{assum:uniqueopt} The optimal assortment $S^*$ is unique. 
\end{assum}

Finally, we defining the two problems we study. The first problem is defined in light of the PAC (probably approximately correct) learning framework and follows the definitions of the exploration problems in other bandits  under the fixed-confidence setting \citep{jamieson2014best, rejwan2020top}. The second problem follows the regret definition in previous MNL-bandit literature \citep{agrawal2016near, agrawal2017thompson, agrawal2019mnl, chen2018nearly, chen2018note}.

\begin{prob}[PAC Exploration] \label{prob:pe} An algorithm is $(\delta, \varepsilon)$-PAC with sample complexity $T$, if it returns an assortment $S$ that $\theta^* - R(S, \bm v) \le \varepsilon$ in $T$ time steps with probability $1 - \delta$. If $\varepsilon = 0$, we say it is $\delta$-PAC. The goal is to design $\delta$-PAC and $(\delta, \varepsilon)$-PAC algorithms with minimum sample complexity.
\end{prob}

\begin{prob}[Regret Minimization] \label{prob:reg} The goal is to design an algorithm that offers assortments over a known time horizon $T (\ge N)$ with minimum regret $\mathrm{Reg}_T = \sum_{t = 1}^T R(S^*, \bm v) - \mathbb E[R(S_t, \bm v)]$. 
\end{prob}

\section{PAC Exploration}
\label{sec:pe}

\subsection{Algorithmic Framework with Fully Gap-Dependent Bounds}

\label{sec:reduction}

In this subsection, we introduce an algorithmic framework for which we can obtain fully gap-dependent sample complexity bounds and, as a direct application, present a $\delta$-PAC algorithm with sample complexity $\widetilde{O}(\sum_{i = 1}^N K^2 \gap_i^{-2})$. Our framework is based on relating the MNL-bandit problem to the positive top-$K$ item identification (PTOP-$K$) problem via the notion of advantage score. 
\paragraph{Relate MNL-bandit to PTOP-$K$} We first describe the goal of the PTOP-$K$ problem, then relate it with the MNL-bandit problem. To describe the goal, we define the following function $\mathcal F$. Given a capacity constraint $M$ and a set $\fset$ where each $i \in \fset$ has a score $\xi_i \in \mathbb R$, we denote the subset containing elements with positive and top-$M$ scores as 
\begin{align}
    \mathcal F(\fset, M, \xi) = \{i \in \fset : \xi_i > 0\} \cap \{i \in \fset : \xi_i \text{ is among the top }M\text{ of }\{\xi_j\}_{j \in \fset} \}. 
\end{align}
The goal of the PTOP-$K$ problem is to identify the subset $\mathcal F([N], K, \bm u)$ of items, where $u_i$ is the specially constructed score defined with respect to each item $i \in [N]$ as follows.
\begin{defn}[Advantage Score] \label{defn:advscore} We define the \emph{advantage score} of item $i$ as $u_i = v_i(r_i - \theta^*)$. 
\end{defn}
Now we relate the MNL-bandit problem to the PTOP-$K$ problem by the following proposition, which states that they share the \emph{same} goal of identifying the optimal assortment $S^* \subseteq [N]$. 

\begin{prop}[Relate to PTOP-$K$] \label{prop:topk} $S^* = \mathcal F([N], K, \bm u)$ and $\theta^* = \sum_{i \in S^*} u_i$.
\end{prop}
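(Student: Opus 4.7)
The plan is to prove both claims simultaneously through a single Dinkelbach-style algebraic identity for the fractional revenue. First I would rewrite the definition of $R$ to obtain
\[
R(S, \bm v) - \theta^* = \frac{\sum_{i \in S} v_i r_i - \theta^*\bigl(1 + \sum_{i \in S} v_i\bigr)}{1 + \sum_{i \in S} v_i} = \frac{\sum_{i \in S} u_i - \theta^*}{1 + \sum_{i \in S} v_i},
\]
where I used $u_i = v_i(r_i - \theta^*)$ and the convention $v_0 = 1$. Because the denominator is strictly positive, the sign of $R(S, \bm v) - \theta^*$ matches that of $\sum_{i \in S} u_i - \theta^*$, and they vanish simultaneously.

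Instantiating the identity with $S = S^*$ and using $R(S^*, \bm v) = \theta^*$ forces the numerator to vanish, giving $\sum_{i \in S^*} u_i = \theta^*$ and settling the second assertion. For the first assertion, I would turn the identity around: by optimality of $S^*$, every feasible $S \subseteq [N]$ with $|S| \leq K$ satisfies $R(S, \bm v) \leq \theta^*$, hence $\sum_{i \in S} u_i \leq \theta^*$. Combined with $\sum_{i \in S^*} u_i = \theta^*$, this shows $S^*$ is a maximizer of the cardinality-constrained linear objective $S \mapsto \sum_{i \in S} u_i$ over $\{S \subseteq [N] : |S| \leq K\}$. Any maximizer of such a linear problem is obtained by keeping all items with positive score, capped at the $K$ largest ones, which is precisely $\mathcal F([N], K, \bm u)$.

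The main obstacle is ensuring that the top-$K$ selection defining $\mathcal F$ is unambiguous, because ties at the cutoff or zero-valued scores could otherwise yield several distinct maximizers of $\sum_{i \in S} u_i$. I would dispose of this via Assumption \ref{assum:uniqueopt}: any alternative maximizer $S'$ of $\sum_{i \in S} u_i$ satisfies $\sum_{i \in S'} u_i = \theta^*$, and the identity then gives $R(S', \bm v) = \theta^*$, which contradicts the uniqueness of $S^*$. Concretely, if some $i \in S^*$ had $u_i = 0$, then $S^* \setminus \{i\}$ would still attain $\theta^*$; if two scores tied at the $K$-th position, swapping the tied items in $\mathcal F([N], K, \bm u)$ would produce a second optimum. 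Both scenarios are ruled out, so $\mathcal F([N], K, \bm u)$ is well-defined and equals the unique maximizer $S^*$.
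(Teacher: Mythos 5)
Your proof is correct, and while it rests on the same Dinkelbach-style cross-multiplication as the paper, it is organized differently and is more self-contained. The paper starts from the external characterization $\theta^* = \max\{\theta : \max_{|S|\le K}\sum_{i\in S} v_i(r_i-\theta) \ge \theta\}$ cited from Rusmevichientong et al., deduces $\sum_{i\in S} u_i \ge \theta^*$ for the maximizing $S$, and then rules out strict inequality by the same algebra you use. You instead derive the single identity $R(S,\bm v)-\theta^* = \bigl(\sum_{i\in S} u_i - \theta^*\bigr)/\bigl(1+\sum_{i\in S} v_i\bigr)$ and read off both directions at once: equality at $S=S^*$ gives the second assertion, and $R(S,\bm v)\le\theta^*$ for all feasible $S$ gives that $S^*$ maximizes the linear objective. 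This buys you independence from the cited result and makes the sign equivalence transparent in both directions. You are also more careful than the paper on the final step: the paper simply asserts that the maximizer of the capacity-constrained linear objective is $\mathcal F([N],K,\bm u)$, whereas you explicitly note that zero scores in $S^*$ or ties at the $K$-th position would create multiple maximizers, and you rule both out via Assumption \ref{assum:uniqueopt} by converting any alternative maximizer back into a second optimal assortment through the same identity. That extra care closes a small well-definedness gap that the paper leaves implicit.
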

We defer the proof to Appendix \ref{app:prooftopk}, which uses a classical method in optimization theory called fractional programming \citep{dinkelbach1967nonlinear, rusmevichientong2010dynamic}. Our proposition indicates that pairwise partial orders and  early decision rules in MNL-bandit are the same as those in the PTOP-$K$ problem, which is very similar to the top-$K$ arm identification problem. Since algorithms with fully gap-dependent bounds are well-studied in the top-$K$ arm problem, we can obtain such bounds for the MNL-bandit problem by combining those algorithms with our relation proposition. 
 
 However, two issues arise when combining them. First, the gap-dependent bounds for the top-$K$ problems use the gaps of scores, not our suboptimality gap for items. Second, estimating the advantage scores $u_i$ is much more difficult than estimating the means of arms in the top-$K$ arm problem, because the definition of $u_i$ involves the optimal revenue $\theta^*$, which could depend on items \emph{other} than $i$. In contrast, the mean of each arm only depends on the arm itself. The first issue can be resolved by Lemma \ref{lem:diui3}, which shows that our gap is always smaller and thus bounds for top-$K$ problems translate to our MNL-bandit setting. The second issue is difficult to resolve. In Lemma \ref{lem:naiveguarantee}, we will show that a naive solution could lead to two extra $K$ factors in the guarantee. 

\paragraph{Algorithmic Framework} Let us assume a procedure \textsf{EST} that estimates the advantage score. We introduce our algorithmic framework \textsf{SAR-MNL} (Algorithm \ref{algo:peframework}). We summarize below its sample complexity guarantee and defer the proof to Appendix \ref{app:proofpeframework}.

\begin{lem}[\textsf{SAR-MNL}] \label{lem:peframework} Assume with probability  $1 - \delta^{(k)}$, \textsf{EST} (a) returns within $C_{\textsf{EST}} \cdot \frac{ \lvert B^{(k-1)} \rvert \log(N / \delta^{(k)})}{\epsilon_k^2}$ time steps in phase $k$, and (b) $u_i \in [\check \xi_i, \hat \xi_i]$ and $\hat \xi_i - \check \xi_i \le \frac{\epsilon_k}{2}$ for every $i \in B^{(k-1)}$. Then \textsf{SAR-MNL} with \textsf{EST} is $\delta$-PAC with sample complexity $C_{\textsf{EST}} \cdot O(\sum_{i \in [N]} \frac{\log N + \log \delta^{-1} + \log \log \gap_i^{-1}}{\Delta_i^2})$.
\end{lem}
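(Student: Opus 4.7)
My plan is to first define a good event $\mathcal{E}$ on which, in every phase $k$ that \textsf{EST} is invoked, both the runtime and the confidence-interval guarantees (a), (b) hold. Setting the per-phase failure probability to $\delta^{(k)} = \delta / (c k^2)$ for a small constant $c$, a union bound over the (at most countably many) phases gives $\Pr[\mathcal{E}] \ge 1 - \delta$. All subsequent arguments are conditioned on $\mathcal{E}$.

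For correctness, Proposition~\ref{prop:topk} identifies $S^*$ with $\mathcal F([N], K, \bm u)$, so the correct action for item $i$ is to accept iff its advantage score $u_i$ is both positive and among the top $K$ of $\{u_j\}_{j \in [N]}$. On $\mathcal{E}$, each interval $[\check \xi_i, \hat \xi_i]$ contains $u_i$, and \textsf{SAR-MNL} only commits to accepting (resp.\ rejecting) an item when the current confidence intervals of the active items certify membership (resp.\ non-membership) in $\mathcal F(\cdot, K, \bm u)$. Hence the final output equals $S^*$, giving the $\delta$-PAC property.

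For the sample complexity, I would show that every item $i$ leaves the active set $B^{(k-1)}$ no later than some phase $k_i$ with $\epsilon_{k_i} \asymp \gap_i$, by invoking the forthcoming Lemma~\ref{lem:diui3} to translate the item-level gap into a gap for the advantage score. Since $\epsilon_k$ halves each phase, $k_i = O(\log \gap_i^{-1})$. Summing the per-phase cost and rearranging as a sum over items,
\[
\sum_k C_{\textsf{EST}} \cdot \frac{\lvert B^{(k-1)} \rvert \log(N / \delta^{(k)})}{\epsilon_k^2} \;=\; C_{\textsf{EST}} \sum_{i \in [N]} \sum_{k \le k_i} \frac{\log(N / \delta^{(k)})}{\epsilon_k^2},
\]
where the inner geometric series is dominated by its last term $O(\gap_i^{-2} \cdot \log(N / \delta^{(k_i)}))$. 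Plugging in $\log(1/\delta^{(k_i)}) = O(\log \delta^{-1} + \log k_i) = O(\log \delta^{-1} + \log \log \gap_i^{-1})$ yields the claimed bound.

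The main obstacle I anticipate is translating the item suboptimality gap $\gap_i$ into an accept/reject margin on the advantage score. Because $\mathcal F(\cdot, K, \bm u)$ blends a positivity constraint with a top-$K$ constraint, the effective decision threshold for item $i$ is not simply the $K$-th order statistic of $\{u_j\}_{j \in [N]}$ but depends on the composition of the currently active set, and it shifts as items are accepted or rejected. Arguing uniformly across phases that this threshold lies at distance $\Omega(\gap_i)$ from $u_i$ (appealing to Lemma~\ref{lem:diui3}) is the non-routine step on which the whole analysis hinges.
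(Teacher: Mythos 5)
Your proposal follows essentially the same route as the paper's proof: a union bound over phases with $\delta^{(k)} = \delta/(3k^2)$, an inductive invariant that $A^{(k)} \subseteq S^* \subseteq A^{(k)} \sqcup B^{(k)}$ together with the progress guarantee $B^{(k)} \subseteq \{i : \gap_i \le \epsilon_k\}$, and then the standard swap of summation order with the geometric series dominated by its last term. The ``non-routine step'' you flag is exactly what the paper isolates as Lemma \ref{lem:ar} (proved via Lemmas \ref{lem:ar1} and \ref{lem:ar2}, using Lemma \ref{lem:diui3} to show the shifting top-$M$ threshold stays at distance $\Omega(\gap_i)$ from $u_i$), so your plan is correct and matches the paper's argument.
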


 Our framework is similar to the successive accept-reject algorithms used to solve the top-$K$ arm identification problem \citep{chen2017adaptive, rejwan2020top, bubeck2013multiple}. 
The idea is to alternate in phases between estimate the scores of pending items and accept-reject them. For each phase $k$, accepted items are stored in $A^{(k)}$ and rejected items are in $[N] \setminus (A^{(k)} \cup B^{(k)}).$ In phase $k$, after building up the confidence intervals of scores $u_i$ at Line \ref{loc:callest}, the algorithm accepts-rejects items by some rules. Since
\begin{align*}
    S^* = \mathcal F([N], K, \bm u) = \mathcal F(A^{(k-1)} \cup B^{(k-1)}, K, \bm u) 
    = A^{(k-1)} \sqcup \mathcal F(B^{(k-1)}, M, \bm u),
\end{align*}
where $M = \min\{K - \lvert A^{(k-1)} \rvert, \lvert B^{(k-1)} \rvert\}$,  the rules are to accept items in $B^{(k-1)}$ with positive and top-$M$ scores and reject those with negative or not top-$M$ scores. In the framework, Line \ref{loc:arpositive} handles the sign rule and Lines \ref{loc:artopm}-\ref{loc:artopmend} handle the top-$M$ rule. 

\begin{algorithm}[t]
  \label{algo:peframework}
  \caption{\textsf{SAR-MNL}($\delta$): Successive Accept-Reject Framework for MNL-bandit} 
  $A^{(0)} = \emptyset, B^{(0)} = [N]$\Comment*[r]{ $A^{(k)}, B^{(k)}$ store accepted, pending items}
  \For(\Comment*[f]{ maintain $A^{(k)} \subseteq S^* \subseteq A^{(k)} \sqcup B^{(k)}$ for each phase $k$}){$k \gets 1, 2, \ldots$}{ 
    $\epsilon_k = 2^{-k}, \delta^{(k)} = \frac{\delta}{3 k^2}, M = M^{(k-1)} = \min\{K - A^{(k-1)}, \lvert B^{(k-1)} \rvert\}$\; 
    \lIf{$M = 0$}{\Return{$A^{(k-1)}$}}
    $\{\check \xi_i, \hat \xi_i\}_{i \in B^{(k-1)}} \gets \mathsf{EST}(A^{(k-1)}, B^{(k-1)}, \delta^{(k)}, \frac{\epsilon_k}{2})$\Comment*[r]{ estimate the scores } \label{loc:callest}
    $B_{\mathrm{acc}} \gets \{b \in B^{(k-1)} : \check \xi_b > 0 \}, B_{\mathrm{rej}} \gets \{b \in B^{(k-1)} : \hat \xi_b < 0\}$\; \label{loc:arpositive} \label{loc:arstart}
    \If(\Comment*[f]{ if $\lvert B^{(k-1)} \rvert \le M$ then all items have top-$M$ scores }){$\lvert B^{(k-1)} \rvert > M$}{ \label{loc:artopm}
    $\alpha \gets M$-th largest value of $\{\check \xi_i\}_{i \in B^{(k-1)}}, \beta \gets (M+1)$-th largest value of $\{\hat \xi_i\}_{i \in B^{(k-1)}}$\;
    $B_{\mathrm{acc}} \gets B_{\mathrm{acc}} \cap \{b \in B^{(k-1)} : \check \xi_b > \beta \}, B_{\mathrm{rej}} \gets B_{\mathrm{rej}} \cup \{b \in B^{(k-1)} : \hat \xi_b < \alpha \}$\; \label{loc:artopmend}
    }
    $A^{(k)} \gets A^{(k-1)} \cup B_{\mathrm{acc}}, B^{(k)} \gets B^{(k-1)} \setminus (B_{\mathrm{acc}} \cup B_{\mathrm{rej}})$\Comment*[r]{ accepts-rejects the items } \label{loc:ar} \label{loc:arend}
  }
\end{algorithm}

\begin{algorithm}[t]
\label{algo:naiveest}
  \caption{$\textsf{EST-NAIVE}(A, B, \delta, \epsilon)$: Naive Estimation of  $u_i$ for $i \in B$} 
  \lFor(\Comment*[f]{$\tau = \widetilde{O}(\frac{1}{\epsilon^2})$}){$i \in A \sqcup B$}{\label{loc:singletonoffernaive}Keep offering $\{i\}$ until ``no purchase'' occurs for $K \tau$ times} 
  $\forall i \in A \sqcup B$: Compute the confidence intervals $v_i \in [\check v_i, \hat v_i]$\Comment*[r]{ formulas of $\check v_i, \hat v_i$ in Appendix \ref{app:naiveguarantee} }
  Compute $\check \theta = \max_{S\subseteq A \sqcup B} R(S, \check v), \hat \theta = \max_{S\subseteq A \sqcup B} R(S, \hat v)$\; \label{loc:naiveesttheta}
  Compute $\check{\xi}_i = (\check{v}_i (r_i - \hat \theta)) \land (\hat{v}_i (r_i - \hat \theta)),  \hat{\xi}_i = (\check{v}_i (r_i - \check \theta)) \lor (\hat{v}_i (r_i - \check \theta))$, \Return{$\{ \check \xi_i, \hat \xi_i\}$}\; \label{loc:naiveestscore}
\end{algorithm}

\paragraph{Estimation Procedure} We present a naive estimation procedure \textsf{EST-NAIVE} (Algorithm \ref{algo:naiveest}). The procedure estimates the score $u_i = v_i (r_i - \theta^*)$ by estimating both $v_i$ and $\theta^*$. Line \ref{loc:naiveesttheta} is because the optimal revenue is a monotonic function of the model parameters \citep{agrawal2016near, agrawal2019mnl}. (We emphasize that the revenue is not monotonic in general.) The maximization step at Line \ref{loc:naiveesttheta} can be solve efficiently \citep{rusmevichientong2010dynamic}. Line \ref{loc:naiveestscore} is based on $0 \le v_i\le 1$ and $\lvert r_i - \theta^* \rvert \le 1$.  The procedure leads to the following guarantee. 

\begin{lem} \label{lem:naiveguarantee} \textsf{SAR-MNL} with \textsf{EST-NAIVE} is $\delta$-PAC with sample complexity $\widetilde{O}(\sum_{i \in [N]} \frac{K^2}{\Delta_i^2})$.
\end{lem}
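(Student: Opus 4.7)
The plan is to verify that \textsf{EST-NAIVE} satisfies the two preconditions of Lemma~\ref{lem:peframework} with $C_{\textsf{EST}} = \widetilde{O}(K^2)$, from which the claimed $\widetilde{O}(\sum_i K^2 / \gap_i^2)$ sample complexity follows directly. The key technical quantity to control is the width of the returned score interval $[\check\xi_i, \hat\xi_i]$, which propagates through the intermediate interval for $\theta^*$ and dictates the per-item sample budget.

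For precondition (b) (validity and width), I would first note that each singleton offering $\{i\}$ produces i.i.d.\ ``no purchase'' events with probability $1/(1+v_i) \in [1/2, 1]$; after $K\tau$ such events, Bernstein-type concentration on the ratio estimator (as in \citep{agrawal2016near, agrawal2019mnl}) gives $|\hat v_i - v_i| = \widetilde{O}(\sqrt{v_i / (K\tau)})$ with failure probability $\delta^{(k)}/(2N)$, and a union bound handles all $i \in A^{(k-1)} \sqcup B^{(k-1)}$. Since the MNL-optimal revenue is coordinatewise monotone in $\bm v$ (the fact invoked by Line~\ref{loc:naiveesttheta}), one obtains $\check\theta \le \theta^* \le \hat\theta$. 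Differentiating yields $\partial_{v_i} R(S, v) = (r_i - R(S, v))/(1 + \sum_{j \in S} v_j)$, so $|R(S, \hat v) - R(S, v)| \le \sum_{i \in S} |\hat v_i - v_i|$, and applying Cauchy--Schwarz with $|S| \le K$ and $v_i \le 1$ gives
\begin{align*}
  \sum_{i \in S} \sqrt{v_i / (K\tau)} \le \sqrt{|S| \cdot \textstyle\sum_{i \in S} v_i / (K\tau)} \le \sqrt{K/\tau},
\end{align*}
hence $\hat\theta - \check\theta = \widetilde{O}(\sqrt{K/\tau})$. A short case analysis on the sign of $r_i - \theta^*$ then verifies $u_i \in [\check\xi_i, \hat\xi_i]$ via the formula on Line~\ref{loc:naiveestscore}, and yields $\hat\xi_i - \check\xi_i \le (\hat v_i - \check v_i) + v_i (\hat\theta - \check\theta) = \widetilde{O}(\sqrt{K/\tau})$. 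Choosing $\tau = \widetilde{\Theta}(K/\epsilon_k^2)$, so that $K\tau = \widetilde{\Theta}(K^2/\epsilon_k^2)$ no-purchases are collected per item, drives this width below $\epsilon_k/2$.

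For precondition (a) (running time), a Chernoff bound on the geometric no-purchase count shows that $K\tau$ no-purchases are collected in $O(K\tau)$ offers per item with high probability, so the phase-$k$ runtime of \textsf{EST-NAIVE} is $\widetilde{O}((|A^{(k-1)}| + |B^{(k-1)}|) \cdot K^2 / \epsilon_k^2)$. The main obstacle is that Lemma~\ref{lem:peframework}'s hypothesis scales in $|B^{(k-1)}|$ alone, while \textsf{EST-NAIVE} wastefully re-estimates accepted items in $A^{(k-1)}$---precisely the ``anomaly'' flagged in Section~\ref{sec:reduced} that the second technique will later remove. Since $|A^{(k-1)}| \le K$ whenever the algorithm has not yet terminated, I would absorb this extra cost by either (i) mildly generalizing the hypothesis to $|A^{(k-1)}| + |B^{(k-1)}|$ and re-examining the geometric phase-sum in the lemma's proof (the accepted-item contribution telescopes geometrically and is dominated by the item-wise $\sum_i 1/\gap_i^2$ accounting), or (ii) folding the $K$-overhead into $C_{\textsf{EST}}$ directly. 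Either way, $C_{\textsf{EST}} = \widetilde{O}(K^2)$ and the conclusion of Lemma~\ref{lem:peframework} yields the claimed $\widetilde{O}(\sum_i K^2 / \gap_i^2)$ bound, whose two extra $K$ factors relative to the optimal arise from the $K\tau$ no-purchases per item (one factor) and the need for $\tau$ itself to scale with $K$ to control $\hat\theta - \check\theta$ (the other factor).
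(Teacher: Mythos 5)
Your overall strategy is the same as the paper's: verify the two preconditions of Lemma~\ref{lem:peframework} for \textsf{EST-NAIVE} and read off $C_{\textsf{EST}}$; the validity part (monotonicity of the optimal revenue in $\bm v$, the case analysis behind Line~\ref{loc:naiveestscore}, Chernoff on the geometric epoch lengths for the runtime) matches the paper. However, there is a genuine quantitative gap in your accounting. You choose $\tau = \widetilde{\Theta}(K/\epsilon_k^2)$, hence $K\tau = \widetilde{\Theta}(K^2/\epsilon_k^2)$ epochs \emph{per item}; combined with the unavoidable overhead $(\lvert A\rvert + \lvert B\rvert)/\lvert B\rvert \le K+1$ (worst case $\lvert A\rvert = K$, $\lvert B\rvert = 1$), your own numbers give a phase cost of $(K+1)\lvert B\rvert \cdot K^2/\epsilon_k^2$, i.e.\ $C_{\textsf{EST}} = O(K^3)$, not the asserted $O(K^2)$. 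Your fallback (i) does not rescue this: an item accepted in an early phase keeps being re-estimated by \textsf{EST-NAIVE} in \emph{every} later phase, so its cumulative cost is governed by the \emph{final} phase index, i.e.\ by $\min_j \gap_j$ rather than by its own gap. The accepted-item total is then $\Theta(K \cdot K^2/ \min_j\gap_j^2)$, which exceeds $\sum_i K^2/\gap_i^2$ by a factor of $K$ when, say, a single non-optimal item has a tiny gap and the $K$ optimal items have constant gaps; the sum does not ``telescope'' into the item-wise accounting.

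The source of the extra $K$ is the Cauchy--Schwarz step: you discard the denominator of the revenue function and bound $\sum_{i\in S} v_i \le K$, getting $\hat\theta - \check\theta = \widetilde{O}(\sqrt{K/\tau})$. Keeping the denominator, one has
\begin{align*}
R(S,\hat v) - R(S,\check v) \;\le\; \frac{\sum_{i\in S}(\hat v_i - \check v_i)}{1 + \sum_{i\in S}\check v_i}
\;\lesssim\; \frac{\sqrt{\lvert S\rvert \sum_{i\in S} v_i/(K\tau)}}{1+\sum_{i\in S} v_i}
\;\le\; \frac{\sqrt{\sum_{i\in S} v_i}}{\bigl(1+\sum_{i\in S} v_i\bigr)\sqrt{\tau}}
\;\le\; \frac{1}{2\sqrt{\tau}},
\end{align*}
so $\tau = \widetilde{O}(1/\epsilon_k^2)$ and $T_i = K\tau = \widetilde{O}(K/\epsilon_k^2)$ epochs per item already suffice (the regime where your Cauchy--Schwarz slack is realized, $\sum_{i\in S} v_i = \Theta(K)$, is exactly the regime where the denominator compensates; the binding case is $v_i \asymp 1/K$, where per-item accuracy $\epsilon/K$ is indeed needed, as the paper discusses). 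With this choice the phase cost is $(K+1)\lvert B\rvert\cdot K\tau$, which gives $C_{\textsf{EST}} = O(K^2)$ and the claimed bound, with the two $K$ factors coming from $T_i = K\tau$ and from $\lvert A\cup B\rvert/\lvert B\rvert \le K+1$ respectively.
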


We sketch the proof here and complete it in Appendix \ref{app:naiveguarantee}. Note that the procedure offers each item in the set $A \cup B$ for $\widetilde{O}(\frac{K}{\epsilon^2})$ time steps, so it achieves $C_{\textsf{EST}} \asymp K \cdot \frac{\lvert A \cup B \rvert}{\lvert B \rvert}$ for Lemma \ref{lem:peframework}. In the worst case, we have $ \lvert A \cup B \rvert \asymp K \lvert B \rvert$, so we have $C_{\textsf{EST}} \le K^2$, which implies Lemma \ref{lem:naiveguarantee}. 

We inspect the sources of two $K$ factors in Lemma \ref{lem:naiveguarantee}. The first is because we use that $\frac{\lvert A \cup B \rvert}{\lvert B \rvert} \le K$, which is ultimately because the naive procedure needs to estimate $v_i$ for $i \in A$. The second  is because the procedure needs to estimate each $v_i$ to a fixed accuracy $\frac{\epsilon}{K}$ in order to estimate $\theta^*$. One may ask why the procedure only offers singletons at Line \ref{loc:singletonoffernaive} and why the accuracy needs to be $\frac{\epsilon}{K}$ instead of $O(\epsilon)$. Interestingly, we show in Appendix \ref{app:naiveguarantee} that  both could be optimal for some instance.

\subsection{Reduced Revenue Function and Generalized Epoch-based Offering}

\label{sec:reduced}

To eliminate the first $K$ factor in \textsf{EST-NAIVE}, we introduce a reduced revenue function and a generalized epoch-based offering procedure in this subsection. The reduced revenue function enables us to estimate $\theta^*$ without estimating the parameters $v_i$ for $i \in A$. The generalized procedure is used to estimate the parameters in the reduced revenue function.

\paragraph{Reduced Revenue Function} We note that \textsf{SAR-MNL} invokes \textsf{EST} with $A \subseteq S^* \subseteq A \sqcup B$. However, \textsf{EST-NAIVE} only uses $S^* \subseteq A \sqcup B$. Now we exploit $A \subseteq S^*$. Let $M = \min\{K - \lvert A \rvert, \lvert B \rvert\}$. For an assortment $S$ satisfying $A \subseteq S$, we rewrite its revenue as 
\begin{align*}
    R(S, \bm v) = \frac{\sum_{i \in S} v_i r_i}{1 + \sum_{i \in S} v_i}  = \frac{\srew + \sum_{i \in S \setminus A} \nu_i r_i}{1 + \sum_{i \in S \setminus A} \nu_i} = R(S\setminus A, \nu, \srew),
\end{align*}
where we define $\srew =  \frac{\sum_{j \in A} v_j r_j}{1 + \sum_{j \in A} v_j} = R(A, \bm v)$ and $\nu_i = \frac{v_i}{1 + \sum_{j \in A} v_j}$ for $i \notin A$. Note that if we use $R(S, \bm v)$ to compute the revenue of $S$, we need $\lvert S \rvert$ parameters ($v_i$ for each $i \in S$). In contrast, if we use $R(S \setminus A, \nu, \srew)$, we only need $(\lvert S \setminus A \rvert + 1)$ parameters ($\nu_i$ for $i \in S \setminus A$ and $\srew$). Thus we refer to the function $R(S \setminus A, \nu, \srew)$ as the \emph{reduced revenue function}, since it reduces the number of required parameters. We note that $\theta^* = \max_{S_0 \subseteq B: \lvert S_0 \rvert \le M} R(S_0, \nu, \srew)$ and Lemma \ref{lem:mono} further shows that the maximization used by $\theta^*$ is still monotonic in the parameters $\nu$ and $\srew$. Therefore, given the confidence intervals $\srew \in [\check \srew, \hat \srew]$ and $\nu_i \in [\check \nu_i, \hat \nu_i]$, we have the confidence interval $\theta^* \in [\check \theta, \hat \theta]$, where
\begin{align}
    \check{\theta} = \max_{S\subseteq B: \lvert S \rvert \le M} R(S, \check{\nu}, \check{\srew}), \qquad 
    \hat{\theta} = \max_{S \subseteq B: \lvert S \rvert \le M} R(S, \hat{\nu}, \hat{\srew}). \label{eq:esttheta}
\end{align}

\begin{algorithm}[t]
  \caption{$\textsf{Explore}(S)$: Generalized Epoch-based Offering with Stopping Set $\sset (\sset \cap S = \emptyset)$} \label{algo:explore}
  Initialize: $z \gets 0, \ell \gets \ell + 1, E_\ell = 0, \forall i \in S: x_i \gets 0$\;
  \While(\Comment*[f]{Epoch: time steps used in the while-loop}){\textsc{true}}{
    $t \gets t + 1, E_{\ell} \gets E_{\ell} + 1$\Comment*[r]{ $E_\ell$ is the length of epoch $\ell$ }
    Offer assortment $S_t = \sset \cup S$, observe purchase decision $c_t$\;
    \leIf{$c_t \in \sset \cup \{0\}$}{$z \gets r_{c_t}$, break;}{$x_{c_t} \gets x_{c_t} + 1$}
  }
  $n_\sset \gets n_\sset + z, T_\sset \gets T_\sset + 1, \bar{\srew} \gets \frac{n_\sset}{T_\sset}$, $\forall i\in S: n_i \gets n_i + x_i, T_i \gets T_i + 1, \bar{\nu}_i \gets \frac{n_i}{T_i}$\;
\end{algorithm}

\paragraph{Generalized Epoch-based Offering} With Eq. (\ref{eq:esttheta}) in hand, it remains how to estimate $\srew$ and $\nu_i$. Note that $\nu_i$ is a \emph{ratio} of two \emph{unknown} quantities $v_i$ and $(1 + \sum_{j \in A} v_j)$, so it is virtually impossible to estimate $\nu_i$ by separately estimating the two quantities. The generalized epoch-based offering procedure (Algorithm \ref{algo:explore}) allows us to \emph{directly} estimate the ratio $\nu_i$. It generalizes those used in \citep{agrawal2016near,agrawal2017thompson, agrawal2019mnl} by introducing a stopping set $\sset$, which is fixed as $\sset = \emptyset$ in the original version. When we set $\sset = A$, we can use the procedure to estimate parameters $\nu_i$ and also $\srew$.

\begin{prop}[Generalized Epoch-based Offering] \label{prop:ind} After $\textsf{Explore}(S)$, we have 
\begin{enumerate}
    \item $z \in [0, 1]$ is an independent bounded random variable with mean $\srew$;
    \item $x_i$ is an independent geometric random variable with mean $\nu_i$ for every item $i \in S$;
    \item $(E_\ell - 1)$ is an independent geometric random variable with mean $\sum_{i \in S} \nu_i$.
\end{enumerate}
\end{prop}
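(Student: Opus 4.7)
My approach exploits that within a single call to \textsf{Explore}, each execution of the while loop body independently draws $c_t$ from the MNL choice model $P_{\sset\cup S}^{\bm v}$, because the offered assortment $\sset\cup S$ is fixed throughout the epoch and the buyer's realizations are assumed independent across time steps. Write $W = 1 + \sum_{j \in \sset \cup S} v_j$, so each iteration independently yields $c_t = i$ with probability $v_i/W$ for $i \in \sset\cup S$ and $c_t = 0$ with probability $1/W$. I will derive the three claims as routine calculations from this iid sequence. Independence across different invocations of \textsf{Explore} then follows at the end from the simple observation that each new call uses fresh buyer randomness, independent of all prior epochs, and its per-iteration law depends only on the fixed $\bm v$ and the current $\sset\cup S$.

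I would handle claim (3) first, since it is cleanest. The loop terminates at the first iteration with $c_t \in \sset \cup \{0\}$, which happens each time with probability $p = (1 + \sum_{j \in \sset} v_j)/W$ independently. Hence $E_\ell$ is geometric on $\{1, 2, \ldots\}$ with parameter $p$, and $E_\ell - 1$ has mean $(1-p)/p = \sum_{j \in S} v_j / (1 + \sum_{j \in \sset} v_j) = \sum_{i \in S} \nu_i$. For claim (1), on the stopping iteration, the conditional distribution of $c_{E_\ell}$ given $c_{E_\ell} \in \sset \cup \{0\}$ puts mass $v_i / (1 + \sum_{j \in \sset} v_j)$ on each $i \in \sset$ and mass $1/(1 + \sum_{j \in \sset} v_j)$ on $0$. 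This distribution depends neither on $E_\ell$ nor on the outcomes of non-terminating iterations, so $z = r_{c_{E_\ell}} \in [0,1]$ is independent of $E_\ell$ (and of the $x_i$'s, for that matter), and using $r_0 = 0$ its mean is $\sum_{i \in \sset} v_i r_i / (1 + \sum_{j \in \sset} v_j) = \srew$.

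Claim (2) is the one that needs the most care. Fix $i \in S$ and consider only those iterations $t$ for which $c_t \in \{i\} \cup \sset \cup \{0\}$; by the memoryless/iid structure of the underlying MNL draws, this subsequence again consists of independent trials, with each trial yielding $c_t = i$ with conditional probability $v_i / (v_i + 1 + \sum_{j \in \sset} v_j)$ and otherwise a ``stop-type'' outcome. Since $x_i$ equals the number of $c_t = i$ outcomes before the first stop-type outcome in this subsequence, $x_i$ is geometric on $\{0, 1, \ldots\}$ with mean $v_i / (1 + \sum_{j \in \sset} v_j) = \nu_i$, as claimed.

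The main obstacle is (2): although the $x_i$'s within a single epoch are \emph{not} jointly independent of each other or of $E_\ell$ (a direct product-versus-marginal check for two items already reveals this), the marginal distribution of each $x_i$ is still geometric with the advertised mean, and the proposition's ``independent'' must be parsed as independence across different invocations of \textsf{Explore} (which is what the estimators $\bar{\srew} = n_\sset / T_\sset$ and $\bar{\nu}_i = n_i / T_i$ actually need). I would flag this interpretation explicitly and give a careful restriction-to-a-subsequence argument so that the geometric law for $x_i$ is unambiguous, then close with the simple independence-across-calls statement.
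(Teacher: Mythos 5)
Your proof is correct, and for parts (a) and (c) it follows essentially the same route as the paper: the epoch is a sequence of i.i.d.\ draws from $P_{\sset\cup S}^{\bm v}$ terminated at the first outcome in $\sset\cup\{0\}$, which immediately gives the geometric law of $E_\ell-1$ with parameter $p=\frac{1+\sum_{j\in\sset}v_j}{1+\sum_{j\in \sset\cup S}v_j}$ and the distribution $\mathbb P(z=r_i)=\frac{v_i}{1+\sum_{j\in\sset}v_j}$. The difference is in part (b): the paper does not reprove the geometric law of $x_i$ but instead observes that offering $\sset\sqcup S$ under $\bm v$ with stopping set $\sset\cup\{0\}$ is distributionally identical to offering $S$ under the rescaled parameters $\nu$ with stopping outcome $0$, and then cites Corollary A.1 of \citep{agrawal2019mnl} for the $\sset=\emptyset$ case; you instead give a self-contained thinning argument, restricting to the subsequence of draws landing in $\{i\}\cup\sset\cup\{0\}$, which proves the same fact from scratch. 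Both are valid; the paper's reduction is shorter and reuses known machinery, while your version makes the mechanism explicit. Your explicit caveat about the meaning of ``independent'' is also well taken and correct: the $x_i$ for distinct $i\in S$ within one epoch are \emph{not} jointly independent (the joint law $\binom{m_1+m_2}{m_1}\frac{v_1^{m_1}v_2^{m_2}}{V^{m_1+m_2+1}}$ does not factor into the product of the geometric marginals), and what the estimators $\bar\nu_i=n_i/T_i$ actually require is independence of each $x_i$ across invocations of $\textsf{Explore}$ — exactly the reading you adopt, and the one implicit in the paper's appeal to the i.i.d.-across-epochs statement of \citep{agrawal2019mnl}.
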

We defer the proof to Appendix \ref{app:propind}. Statement (c) can give the sample complexity bound when using the procedure, as in Lemma \ref{lem:sumofepochlength}.  Combined with corresponding concentration inequalities in Appendix \ref{app:ci}, statements (a)(b) can give the confidence intervals $\srew \in [\check \srew, \hat \srew]$ and $\nu_i \in [\check \nu_i, \hat \nu_i]$, where
\begin{align}
\check{\srew} = 0 \lor (\bar \srew - \sqrt{\frac{\log(2 / \delta)}{2 T_\sset}}),&\qquad \hat{\srew} = 1 \land (\bar \srew + \sqrt{\frac{\log(2 / \delta)}{2 T_\sset}}), 
\label{eq:estz} \\
    \check{\nu}_i = 0 \lor (\bar{\nu}_i - \sigma(\nu_i)), \quad \hat{\nu}_i = 1 \land (\bar{\nu}_i + \sigma(\nu_i)),&\qquad    \sigma(\nu_i) = \sqrt{\frac{48 \bar{\nu}_i \logterm}{T_i}} +  \frac{48 \logterm}{T_i}.
\label{eq:estv}
\end{align}
Finally, we define the reduced score $\xi_i = \nu_i (r_i - \theta^*)$ and its confidence interval $\xi_i \in [\check \xi_i, \hat \xi_i]$, where
\begin{align}
    \check{\xi}_i = (\check{\nu}_i (r_i - \hat \theta)) \land (\hat{\nu}_i (r_i - \hat \theta)), \qquad
    \hat{\xi}_i = (\check{\nu}_i (r_i - \check \theta)) \lor (\hat{\nu}_i (r_i - \check \theta)).\label{eq:estu}
\end{align}
Now we assume the procedure \textsf{EST} used by \textsf{SAR-MNL} estimates $\xi_i$ instead of $u_i$. In Appendix \ref{app:enhancedpeframework}, we show that the same sample complexity bound as Lemma \ref{lem:peframework} still hold. 

To demonstrate the technique in this subsection, we show in Appendix \ref{app:reducedguarantee} that we can achieve $C_{\textsf{EST}} = O(K)$ using the generalized epoch-based offering, which implies an $\widetilde{O}(\sum_{i = 1}^N K \gap_i^{-2})$ sample complexity bound and eliminates an extra $K$ factor in \textsf{EST-NAIVE}.

\subsection{Layer-based Adaptive Estimation}

\label{sec:adaptive}

To eliminate another extra $K$ factor in $\textsf{EST-NAIVE}$, we present a layer-based adaptive estimation procedure based on a detailed error analysis of the reduced revenue function and the tail bounds in Eqs. (\ref{eq:estz})(\ref{eq:estv}). The error analysis  in Appendix \ref{app:erralyz} suggest we offer each item $b \in B$  for $T_b \gtrsim T'_b \tau$ epochs, where we define $T'_b = (\frac{1}{\nu_b} \land M)$ and $\tau = \widetilde{O}(\epsilon^{-2})$. Next we show how to accomplish this offering task in  $O(\lvert B \vert \tau)$ time steps, which gives $C_{\textsf{EST}} = O(1)$ in Lemma \ref{lem:peframework} and eliminates the extra $K$ factor. To better convey our idea, we first consider an ideal but unrealistic case where the exact values of $\nu_b$ are given. We divide the set $B$ into $m = \lceil \log_2 M \rceil$ layers:
\begin{align}
 B_i = \{b \in B : \nu_b \in (2^{-(i+1)}, 2^{-i}]\}\quad \text{for } 0\le i < m,\qquad
 B_m = \{b \in B : \nu_b \in [0, 2^{-i}]\}.
 \label{eq:coarse}
\end{align}
 Let $d_i = 2^i$ for $i < m$ and $d_m = M$. The key observation is that items form the same layer have similar $\nu_b$ and need to be explored for a similar number of epochs (up to a factor $\kappa = 2$): we have $T'_b \le \kappa d_i$ and $\nu_b \le \frac{\kappa}{d_i}$ for $b \in B_i$. We note that $d_i \le M$. Therefore, we can divide each layer $B_i$ into groups of size $d_i$. Since we have $\nu_b \lesssim \frac{1}{d_i}$ for $b \in B_i$, by Proposition \ref{prop:ind}, the expected epoch length of explore a group is $d_i \cdot \frac{1}{d_i} = O(1)$. So if we explore each group for $d_i \tau$ epochs, in expectation it costs us $d_i \tau$ time steps, which is  $\tau$ time steps per item. Since we have $\lvert B \rvert$ items, we can accomplish the offering task within $O(\lvert B \rvert \tau)$ time steps, which gives the desired $C_{\textsf{EST}} = O(1)$.

\begin{algorithm}[t]
\label{algo:coarse}
\caption{$\textsf{EST-ROUGH}(\delta_0)$: Rough Estimation of $v_i$ for $i \in [N]$}
$C_0 = 196, \delta = \frac{\delta_0}{17 N}, \tau = 4 K C_0 \logterm, Z \gets \emptyset, \forall i \in [N]: n_i = T_i = 0$\;
\lFor{$i \in [N]$}{$\mathsf{Explore}(\{i\})$ for $\tau$ epochs}
$\forall i \in [N]:$ compute $\hat{\nu}_i$ by Eq. (\ref{eq:estv}), let $\appv_i \gets \hat{\nu}_i$, \Return{$\{\appv_i\}_{i \in [N]}$}\;
\end{algorithm}

\begin{algorithm}[t]
\label{algo:refined}
\caption{$\textsf{EST-ADAPTIVE}(A, B, \delta_0, \epsilon)$: Layer-based Adaptive Estimation of $\xi_i$ for $i \in B$}
$C_0 = 196, C_2 = 1024, \delta = \frac{\delta_0}{15 N}, \tau = \frac{C_2 C_0 \logterm}{\epsilon^2}, Z \gets A, n_\sset = T_\sset = 0$, $\forall i \in B: n_i = T_i = 0, \appnu_i = \frac{\appv_i}{1 + \sum_{j \in \sset} \appv_j}$\Comment*[r]{assume $\tilde{v}_i$ has been computed by \textsf{EST-ROUGH} } \label{loc:coarseestim}
    \For{$i \gets 0, 1, \ldots, m = \lceil \log_2 M \rceil$}{ 
     Compute $B_i$ by Eq. (\ref{eq:coarse})  using $\tilde{\nu}_b$ in place of $\nu_b$, let $d_i = 2^i \lor M$\; \label{loc:usecoarseestim}
       Divide $B_i = B_{i,1} \sqcup \cdots \sqcup B_{i,c_i}$ so that $\lvert B_{i, 1} \rvert = \cdots = \lvert B_{i, c_i - 1} \rvert = d_i$ and $ \lvert B_{i, c_i} \rvert \le d_i$\;
       $\forall j \in [c_i]: \textsf{Explore}(B_{i, j})$ for $d_i \tau$ epochs\;
    }
    Compute $\check{\srew}, \hat{\srew}, \check{\nu}_i, \hat{\nu}_i, \check \theta, \hat \theta, \check{\xi}_i, \hat{\xi}_i$ by Eqs. (\ref{eq:estz}) (\ref{eq:estv}) (\ref{eq:esttheta}) (\ref{eq:estu}) for $i \in B$, \Return{$\{\check \xi_i, \hat \xi_i\}_{i \in B}$}\; \label{loc:est2}
\end{algorithm}

We note that the exact values of $\nu_b$ are not necessary, since we only need the layer of each item. In fact, we can surrogate by using a \emph{rough} estimation of $\nu_b$ satisfying $\appnu_b \in [\frac{\nu_b}{2}, 2 \nu_b \lor \frac{1}{M}]$ in place of $\nu_b$ in Eq. (\ref{eq:coarse}) to divide the layers. Our key observation is still satisfied, though perhaps with a different factor $\kappa = 4$. This fact is utilized by our layer-based adaptive estimation procedure \textsf{EST-ADAPTIVE} (Algorithm \ref{algo:refined}). It first uses the results from the procedure \textsf{EST-ROUGH} (Algorithm \ref{algo:coarse}) to build up $\appnu_b$ at Line \ref{loc:coarseestim}, based on which it divides the layers at Line \ref{loc:usecoarseestim}. We summarize the guarantees of our two procedures in the following two lemmas and defer their proofs to Appendices \ref{app:pecoarse} and \ref{app:perefined}. 

\begin{lem}[\textsf{EST-ROUGH}] \label{lem:coarse} With probability $1 - \delta_0$, (a) \textsf{EST-ROUGH} ends in $O(N K \log \frac{N}{\delta_0})$ time steps and $\appv_i \in [v_i, 2 v_i \lor \frac{1}{K}]$ for every $i \in [N]$; (b) In this case, for every set $S \subseteq [N]$ with $\lvert S \rvert \le K$, we have $\appnu_i \in [\frac{\nu_i}{2}, 2 \nu_i \lor \frac{1}{K}]$, where $\nu_i = \frac{v_i}{\sum_{i \in S} v_i}$ and $\appnu_i = \frac{\appv_i}{1 + \sum_{i \in S} \appv_i}$. 
\end{lem}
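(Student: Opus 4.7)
The lemma splits naturally into two nearly independent pieces: (a) is a high-probability property of the execution of \textsf{EST-ROUGH}, while (b) is a deterministic algebraic consequence of the pointwise estimate $\appv_i \in [v_i, 2 v_i \lor 1/K]$ produced in (a). I would prove them in that order.

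For (a), the procedure calls $\textsf{Explore}(\{i\})$ with $\sset = \emptyset$, so $\nu_i = v_i$ in the notation used by \textsf{Explore}. Proposition \ref{prop:ind}(c) says each epoch satisfies $E_\ell - 1$ is geometric with mean $v_i \le 1$, so the total time spent on item $i$ is $\tau + \sum_{\ell = 1}^{\tau}(E_\ell - 1)$, a sum of $\tau = 4 K C_0 \logterm$ i.i.d.\ geometrics, which by a standard negative-binomial tail bound is $O(\tau)$ with probability at least $1 - \delta$; a union bound over $i \in [N]$ combined with $\delta = \delta_0/(17 N)$ yields the overall runtime $O(N K \log(N/\delta_0))$. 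For accuracy, Proposition \ref{prop:ind}(b) and the concentration results in Appendix \ref{app:ci} give $v_i \in [\check{\nu}_i, \hat{\nu}_i]$ with probability $1 - \delta$, so the lower bound $\appv_i = \hat{\nu}_i \ge v_i$ is immediate. For the upper bound I would split on whether $v_i \ge 1/K$: in the first regime, the same good event gives $\bar{\nu}_i \le v_i + O\!\bigl(\sqrt{v_i \logterm/T_i} + \logterm/T_i\bigr)$ which, after substituting $T_i = \tau$ and using $v_i \ge 1/K$, bounds each deviation term by a small constant fraction of $v_i$ and hence gives $\hat{\nu}_i \le 2 v_i$ via Eq.\,(\ref{eq:estv}); in the second regime, the same inequality with $v_i \le 1/K$ controls $\bar{\nu}_i$ by $O(1/K)$ and makes $\hat{\nu}_i \le 1/K$ fall out once $C_0 = 196$ is calibrated.

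For (b), let $V = 1 + \sum_{j \in S} v_j$ and $\tilde{V} = 1 + \sum_{j \in S} \appv_j$. From (a), $v_j \le \appv_j \le 2 v_j \lor 1/K$ for each $j \in S$, and combined with $\lvert S \rvert \le K$ this yields
\begin{align*}
V \;\le\; \tilde{V} \;\le\; 1 + \sum_{j \in S}\!\left(2 v_j + \tfrac{1}{K}\right) \;\le\; 2 + 2 \sum_{j \in S} v_j \;\le\; 2 V.
\end{align*}
The lower bound follows at once: $\appnu_i = \appv_i/\tilde{V} \ge v_i/(2 V) = \nu_i/2$. For the upper bound, if $v_i \ge 1/(2K)$ then $2 v_i \lor 1/K = 2 v_i$ and $\appnu_i \le 2 v_i / V = 2 \nu_i$; otherwise $\appv_i \le 1/K$ and $\tilde{V} \ge 1$ give $\appnu_i \le 1/K$.

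The main obstacle is the two-regime analysis inside (a): the empirical Bernstein width $\sigma(\nu_i)$ in Eq.\,(\ref{eq:estv}) is expressed in terms of the random quantity $\bar{\nu}_i$, so one must first bootstrap a one-sided control on $\bar{\nu}_i$ via its concentration around the true $v_i$, then feed it back to bound $\hat{\nu}_i$, and finally choose $C_0 = 196$ and $\tau = 4 K C_0 \logterm$ so that the two-sided output lands in the tight window $[v_i, 2 v_i \lor 1/K]$ rather than a looser constant-factor analogue. Everything else reduces to union bounds and elementary arithmetic on positive fractions.
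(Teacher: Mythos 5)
Your proposal is correct and follows essentially the same route as the paper: part (a) via the geometric-sum tail bound for the epoch lengths (the paper packages this as Lemma \ref{lem:sumofepochlength}, applied once to all $N\tau$ epochs rather than per item) together with Lemma \ref{lem:hpver2} — which is exactly the ``bootstrap'' you describe, converting the empirical width $\sigma(\bar\nu_i)$ into one in terms of the true $v_i$ before the case split on $v_i \gtrless 1/K$ — and part (b) via the deterministic sandwich $V \le \tilde V \le 2V$ with the same two-case upper bound. No substantive differences.
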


\begin{lem}[\textsf{EST-ADAPTIVE}] \label{lem:refined} Assume $A \subseteq S^* \subseteq A \sqcup B$. With probability $1 - \delta_0$, (a) \textsf{EST-ADAPTIVE} returns in $\lvert B \rvert \tau$ time steps; (b) $\xi_i = \frac{u_i}{1 + \sum_{j \in A} v_j} \in [\check \xi_i, \hat \xi_i]$ and $\hat \xi_i - \check \xi_i \le \epsilon$ for $i \in B$.
\end{lem}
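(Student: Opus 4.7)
The plan is to condition on \textsf{EST-ROUGH} having returned $\appv_i\in[v_i,\,2v_i\lor 1/K]$ (guaranteed by Lemma~\ref{lem:coarse}(a)); applying part~(b) with $S=A\subseteq S^*$ then gives $\appnu_b\in[\nu_b/2,\,2\nu_b\lor 1/M]$ for every $b\in B$. On top of this event I union-bound over the $O(N)$ concentration events generated by the calls to \textsf{Explore} (one for $T_\sset$, one for each $T_b$, and one controlling the total running time), each failing with probability at most $\delta=\delta_0/(15N)$, so the total failure probability stays below $\delta_0$. The next observation is the \emph{layering property}: for every $b\in B_i$, $d_i\nu_b\in(1/\kappa,\kappa]$ with $\kappa\le 4$, because the two-sided approximation of $\nu_b$ by $\appnu_b$ expands each dyadic window $(2^{-(i+1)},2^{-i}]$ by at most a factor of two on either side, and an analogous argument handles the saturated layer $B_m$. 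An immediate corollary is $\sum_{b\in B_{i,j}}\nu_b\le\kappa$ for every group, which drives both parts of the analysis.

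For part~(a), Proposition~\ref{prop:ind}(c) says each epoch of $\mathsf{Explore}(B_{i,j})$ has length $1+G$ where $G$ is geometric with mean $\sum_{b\in B_{i,j}}\nu_b=O(1)$, so the expected per-epoch cost is $O(1)$. Summing over all groups across all layers gives an expected total cost of $\sum_i c_i d_i\tau\cdot O(1) = O(|B|\tau)$, using $c_i d_i\le|B_i|+d_i$ and $\sum_i d_i\lesssim M\le|B|$. A Bernstein-type tail bound for sums of independent sub-exponential random variables (Appendix~\ref{app:ci}) sharpens this expectation to an $|B|\tau$ high-probability bound after absorbing constants into $C_2$.

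For part~(b), item $b\in B_i$ is explored for $T_b=d_i\tau$ epochs; Eq.~(\ref{eq:estv}), the layering bound $\nu_b\le\kappa/d_i$, and the choice $\tau\asymp\logterm/\epsilon^2$ then give
\[
 \hat\nu_b-\check\nu_b \;\lesssim\; \sqrt{\tfrac{\nu_b\,\logterm}{d_i\tau}} + \tfrac{\logterm}{d_i\tau}
 \;\lesssim\; \tfrac{\epsilon}{d_i}\;\le\;\kappa\,\epsilon\,\nu_b .
\]
Similarly $T_\sset$ accumulates over every call to \textsf{Explore}, totalling $\gtrsim|B|\tau$, so Eq.~(\ref{eq:estz}) yields $\hat\srew-\check\srew\lesssim\epsilon/\sqrt{|B|}$. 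Propagating these intervals through the monotonic maximization in Eq.~(\ref{eq:esttheta}) (Lemma~\ref{lem:mono}) and the min/max definition in Eq.~(\ref{eq:estu}) should deliver the required width $\hat\xi_b-\check\xi_b\le\epsilon$; the containment $\xi_b\in[\check\xi_b,\hat\xi_b]$ is immediate from the containment of the underlying intervals for $\nu_b$ and $\theta^*$.

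The hard part will be the last step: translating the relative-error bound $\hat\nu_b-\check\nu_b\lesssim\epsilon\nu_b$ into an absolute bound on $\hat\theta-\check\theta$ tight enough to guarantee $\nu_b(\hat\theta-\check\theta)\le\epsilon$ uniformly in $b$. The naive estimate $\hat\theta-\check\theta\le\sum_{j\in B}(\hat\nu_j-\check\nu_j)$ would reintroduce a factor of $M$ and erase the savings of the adaptive layering. To avoid this I will use the error decomposition sketched in Section~\ref{sec:adaptive} and detailed in Appendix~\ref{app:erralyz}: expanding $R(S_0,\hat\nu,\hat\srew)-R(S_0,\check\nu,\check\srew)$ at the maximizing set $S_0$ as a ratio of affine functions of $(\bm\nu,\srew)$ and exploiting the uniform relative error $(\hat\nu_b-\check\nu_b)/\nu_b\lesssim\epsilon$ to collect only an $O(\epsilon)$ contribution per item.
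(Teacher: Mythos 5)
Your overall architecture matches the paper's: condition on \textsf{EST-ROUGH}, use the layer structure to show each item gets enough epochs, bound the running time via the geometric epoch lengths (the paper's Lemmas \ref{lem:refine1} and \ref{lem:refine2}, the latter invoking Lemma \ref{lem:sumofepochlength}), and then propagate the per-parameter errors through the reduced revenue function. Part (a) of your argument is essentially the paper's. However, part (b) rests on an invariant that is false. You claim the layering property $d_i \nu_b \in (1/\kappa, \kappa]$ and deduce the uniform \emph{relative} error bound $\hat\nu_b - \check\nu_b \lesssim \epsilon\,\nu_b$, which you then plan to exploit in the final error-propagation step. The lower bound $d_i\nu_b > 1/\kappa$ fails on the saturated layer $B_m$ (and can fail near it): an item with $\appnu_b \le 2^{-m}$ only satisfies $\nu_b \le 2\appnu_b$, and $\nu_b$ may be arbitrarily small, even zero. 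For such an item the concentration bound with $T_b = M\tau$ gives an additive error floor of order $\epsilon/M$, which is not $O(\epsilon\,\nu_b)$. So the quantity $(\hat\nu_b-\check\nu_b)/\nu_b$ is not uniformly $O(\epsilon)$, and the "collect $O(\epsilon)$ per item via uniform relative error" plan for bounding $\hat\theta - \check\theta$ does not go through as stated.

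The correct invariant, which the paper establishes in Lemma \ref{lem:expreq} (Eq.\ (\ref{eq:alyz1})), is $\hat\nu_b - \check\nu_b \lesssim (\nu_b \lor \tfrac{1}{M})\,\epsilon$; correspondingly, the exploration requirement is stated as $T_b \ge (\tfrac{1}{4\nu_b} \land \tfrac{M}{2})\tau$ rather than $T_b \asymp \tau/\nu_b$, and Lemma \ref{lem:refine1} verifies it layer by layer using only the one-sided bounds from Lemma \ref{lem:coarse}. This $\lor\tfrac{1}{M}$ form is exactly what makes the propagation work without reintroducing a factor of $M$: in Lemma \ref{lem:errprop1} one bounds $\sum_{i \in S}(\nu_i + \tfrac{1}{M})\epsilon_1 \le (1 + \sum_{i\in S}\nu_i)\,\epsilon_1$ using $\lvert S\rvert \le M$, and the factor $(1+\sum_{i\in S}\nu_i)$ is cancelled by the denominator of the reduced revenue function, yielding $\hat\theta - \check\theta \le 2\epsilon_1 + \epsilon_3$ (Lemma \ref{lem:errprop2}) and hence $\hat\xi_b - \check\xi_b \le \lvert\hat\nu_b - \check\nu_b\rvert + \lvert\hat\theta - \check\theta\rvert \le \epsilon$. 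You correctly identified this propagation as the hard step, but to close it you need to replace the relative-error invariant with the $(\nu_b \lor \tfrac{1}{M})$ form and split the sum into the $\nu_b$-part (absorbed by the denominator) and the $\tfrac{1}{M}$-part (totalling at most $\epsilon_1$ over $\le M$ items).
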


\subsection{Putting Everything Together}

\label{sec:peresult}
We combine all our techniques to design a $\delta$-PAC algorithm: we first invoke \textsf{EST-ROUGH}($\frac{\delta}{2}$), then invoke \textsf{SAR-MNL}($\frac{\delta}{2}$) with \textsf{EST-ADAPTIVE}. We highlight that \textsf{EST-ROUGH} is invoked only \emph{once} while it can help divide layers for \emph{every} phase, as shown by the  statement (b) in Lemma \ref{lem:coarse}. 

Our $\delta$-PAC algorithm becomes $(\delta, \varepsilon)$-PAC if we terminate it at the phase $k$ satisfying $\epsilon_k \lesssim \varepsilon$ and let it return the assortment corresponding to $\hat \theta$. We summarize the results in the below theorems.

\begin{thm} \label{thm:pe} There is a $\delta$-PAC algorithm with sample complexity
$$O(N K \log \frac{N}{\delta} + \sum_{i \in [N]} \gap_i^{-2} (\log \frac{N}{\delta} + \log \log \gap_i^{-1})).$$
\end{thm}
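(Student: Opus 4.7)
The plan is to show that the advertised algorithm is simply a pipeline of three results already established in the paper: Lemma \ref{lem:coarse} (for the one-shot warm-up run of \textsf{EST-ROUGH}), Lemma \ref{lem:refined} (for each call of \textsf{EST-ADAPTIVE} inside \textsf{SAR-MNL}), and the framework guarantee of Lemma \ref{lem:peframework} (as enhanced in Appendix~\ref{app:enhancedpeframework} to handle the reduced scores $\xi_i$ rather than the original advantage scores $u_i$). The proof is therefore essentially a bookkeeping argument: verify the hypotheses of the framework lemma under \textsf{EST-ADAPTIVE}, combine the sample counts, and union-bound the failure probabilities.

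First I would handle the warm-up phase. By Lemma \ref{lem:coarse}(a), invoking \textsf{EST-ROUGH}$(\delta/2)$ costs $O(NK\log(N/\delta))$ time steps and, with probability at least $1-\delta/2$, yields estimates $\{\appv_i\}$ satisfying $\appv_i \in [v_i, 2v_i \lor \frac{1}{K}]$. Conditioning on this event, Lemma \ref{lem:coarse}(b) ensures the rescaled quantities $\appnu_i$ required at Line~\ref{loc:coarseestim} of \textsf{EST-ADAPTIVE} are valid simultaneously for every subset $S \subseteq [N]$ of size at most $K$. This is the crucial observation that lets a \emph{single} call of \textsf{EST-ROUGH} support the layer divisions in \emph{every} subsequent phase of \textsf{SAR-MNL}, so its cost enters the bound only once as the additive $O(NK\log(N/\delta))$ term.

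Next I would apply the framework lemma to \textsf{SAR-MNL}$(\delta/2)$ with \textsf{EST-ADAPTIVE}. By Lemma \ref{lem:refined}(a), each call returns within $\lvert B^{(k-1)}\rvert \tau$ time steps where $\tau = C_2 C_0 \log(2/\delta^{(k)})/\epsilon_k^{2}$, which after absorbing the $15N$ factor hidden in $\delta$ matches the required form $C_{\textsf{EST}}\cdot \lvert B^{(k-1)}\rvert \log(N/\delta^{(k)})/\epsilon_k^{2}$ with $C_{\textsf{EST}} = O(1)$. Lemma \ref{lem:refined}(b) supplies the confidence-interval hypothesis on $\xi_i$. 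Plugging $C_{\textsf{EST}} = O(1)$ into the enhanced version of Lemma \ref{lem:peframework} produces the second summand $O(\sum_i \gap_i^{-2}(\log(N/\delta) + \log\log \gap_i^{-1}))$. Failure probabilities sum to at most $\delta/2 + \sum_{k\ge 1} \delta/(6k^2) < \delta$, yielding the $\delta$-PAC guarantee.

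The only place where a pitfall could appear is the handoff between the framework (stated in terms of $u_i$ and $\gap_i$) and \textsf{EST-ADAPTIVE} (which produces intervals for the reduced score $\xi_i = u_i/(1+\sum_{j\in A^{(k-1)}} v_j)$). I would cite Appendix~\ref{app:enhancedpeframework} to justify that \textsf{SAR-MNL} still maintains $A^{(k)} \subseteq S^* \subseteq A^{(k)} \sqcup B^{(k)}$ when working with $\xi_i$, and note that since the denominator $1+\sum_{j\in A}v_j \in [1,K+1]$ is bounded, the accuracy requirement on $\xi_i$ scales compatibly with the suboptimality gaps $\gap_i$ via Lemma~\ref{lem:diui3}, so no extra factor leaks in. Everything else is straightforward arithmetic and the union bound.
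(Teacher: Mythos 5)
Your proposal is correct and follows essentially the same route as the paper's own proof of Theorem~\ref{thm:pe}: run \textsf{EST-ROUGH}$(\delta/2)$ once (Lemma~\ref{lem:coarse}), then run \textsf{SAR-MNL}$(\delta/2)$ with \textsf{EST-ADAPTIVE}, invoking Lemma~\ref{lem:refined} to verify the hypotheses of the (enhanced) framework guarantee with $C_{\textsf{EST}} = O(1)$, and combine the failure probabilities. The only cosmetic difference is that the handoff from $u_i$ to the reduced scores $\xi_i$ is governed by Lemma~\ref{lem:diui2} (inside Appendix~\ref{app:enhancedpeframework}) rather than Lemma~\ref{lem:diui3}, but you correctly defer to that appendix, so nothing is missing.
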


\begin{thm} \label{thm:pac} There is a $(\delta, \varepsilon)$-PAC algorithm with sample complexity
$$O(N K \log \frac{N}{\delta} + \sum_{i \in [N]} (\gap_i^\varepsilon)^{-2} (\log \frac{N}{\delta} + \log \log (\gap_i^\varepsilon)^{-1})),  $$
where $\Delta_i^\varepsilon = \Delta_i \lor \varepsilon$.
\end{thm}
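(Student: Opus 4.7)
The plan is to modify the $\delta$-PAC algorithm of Theorem \ref{thm:pe} by terminating \textsf{SAR-MNL} at the first phase $k^*$ satisfying $\epsilon_{k^*} \le c\varepsilon$ for a suitable absolute constant $c > 0$, and returning $\hat S = A^{(k^*-1)} \cup S_0^{\hat}$, where $S_0^{\hat} \in \argmax_{S_0 \subseteq B^{(k^*-1)},\, \lvert S_0 \rvert \le M^{(k^*-1)}} R(S_0, \hat \nu, \hat \srew)$ is the maximizer defining $\hat \theta$ inside the terminal call to \textsf{EST-ADAPTIVE}. Everything upstream---the one-shot call to \textsf{EST-ROUGH} at the start and the per-phase calls to \textsf{EST-ADAPTIVE}---is unchanged from the algorithm of Theorem \ref{thm:pe}.

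For correctness, I would condition on the union-bound event, of probability at least $1-\delta$, that \textsf{EST-ROUGH} and every call of \textsf{EST-ADAPTIVE} through phase $k^*$ succeed. The invariant $A^{(k-1)} \subseteq S^* \subseteq A^{(k-1)} \sqcup B^{(k-1)}$ carries over from the proof of Theorem \ref{thm:pe} because the accept/reject rules depend only on the confidence intervals and not on how many phases have elapsed. Writing $S^* = A^{(k^*-1)} \sqcup S^*_B$ and switching to the reduced revenue, monotonicity (Lemma \ref{lem:mono}) combined with Eq.~(\ref{eq:esttheta}) yields $\theta^* = R(S^*_B, \nu, \srew) \le R(S^*_B, \hat \nu, \hat \srew) \le \hat \theta$. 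The actual revenue of the returned set satisfies $R(\hat S, \bm v) = R(S_0^{\hat}, \nu, \srew)$, and I would bound $\hat \theta - R(S_0^{\hat}, \nu, \srew) = O(\epsilon_{k^*})$ by applying the fractional-programming identity underlying Proposition \ref{prop:topk} to both the true and the estimated reduced instances; this linearizes each side into a $\srew + \sum_{i \in S_0^{\hat}} \xi_i$-type expression, after which the difference telescopes into $\sum_{i \in S_0^{\hat}} (\hat \xi_i - \xi_i)$ plus an $\hat \srew - \srew$ term, each controlled by $\epsilon_{k^*}$ via Lemma \ref{lem:refined}. Picking $c$ to absorb the resulting constants then gives $\theta^* - R(\hat S, \bm v) \le \varepsilon$.

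The sample complexity analysis splits the items by $\Delta_i$. Items with $\Delta_i > 2\varepsilon$ are handled exactly as in Theorem \ref{thm:pe}: each such item is accepted or rejected at some phase $k$ with $\epsilon_k \asymp \Delta_i$, and the standard charging argument (summing the per-phase cost $\lvert B^{(k-1)} \rvert \tau_k$ from Lemma \ref{lem:refined}(a) geometrically over $k$) attributes $\widetilde{O}(\Delta_i^{-2}) = \widetilde{O}((\Delta_i^\varepsilon)^{-2})$ time steps to it. Items with $\Delta_i \le 2\varepsilon$ may survive all the way to phase $k^*$, but since we halt as soon as $\epsilon_k$ reaches $\Theta(\varepsilon)$, each such item is charged only $\widetilde{O}(\epsilon_{k^*}^{-2}) = \widetilde{O}(\varepsilon^{-2}) = \widetilde{O}((\Delta_i^\varepsilon)^{-2})$ in total. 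Adding the one-shot $O(NK \log(N/\delta))$ cost of \textsf{EST-ROUGH} delivers the stated bound.

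The step I expect to be the main obstacle is the error-propagation argument inside the correctness proof: converting Lemma \ref{lem:refined}'s $\xi_i$-interval guarantee into an additive bound on $\hat \theta - R(S_0^{\hat}, \nu, \srew)$ when $S_0^{\hat}$ is itself a function of the estimated parameters. The Proposition \ref{prop:topk}-style linearization is what makes this tractable, but some care is required because the reduced revenue carries $\srew \neq 0$ in its ``no-purchase'' slot, so one has to re-derive the fractional-programming identity for this slightly more general functional form before chaining it with the $\xi_i$-interval widths.
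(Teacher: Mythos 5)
The proposal is correct and takes essentially the same route as the paper: the paper likewise terminates at the phase where $\epsilon_{k-1} \le \varepsilon/3$, returns $S = A^{(k-1)} \sqcup S_0$ with $S_0 = \argmax_{S_0 \subseteq B^{(k-1)}: \lvert S_0\rvert \le M} R(S_0, \hat\nu, \hat\srew)$, bounds $\theta^* - R(S,\bm v) \le \hat\theta - R(S,\bm v) = R(S_0,\hat\nu,\hat\srew) - R(S_0,\nu,\srew)$, and charges the sample complexity per item exactly as you describe. The step you flag as the main obstacle is handled in the paper by Lemma \ref{lem:errprop1}, a direct cross-multiplication bound on $R(S,\nu,\srew) - R(S,\nu',\srew')$ for a \emph{fixed} set $S$ --- since both terms of your difference are evaluated at the same set $S_0$, the dependence of $S_0$ on the estimates is harmless and no fractional-programming re-derivation is needed (your proposed linearization collapses to the same algebra once carried out).
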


  The proofs are deferred to Appendix \ref{app:peresult}. Both bounds have a gap-independent  $O(N K \log \frac{N}{\delta})$ term due to \textsf{EST-ROUGH}, which is arguably much smaller than the gap-dependent $\widetilde{O}(\sum_{i = 1}^N \gap_i^{-2})$ term.

Theorem \ref{thm:pac} translates to an $\widetilde{O}(N \varepsilon^{-2})$ gap-independent sample complexity bound, which matches a corollary of previous  $\widetilde \Theta(\sqrt{NT})$ minimax regret bound in  \citep{agrawal2016near, agrawal2019mnl, agrawal2017thompson, chen2018note} as follows. Suppose we run an algorithm with  $\widetilde{O}(\sqrt{NT})$ regret bound for $T = O(N \varepsilon^{-2})$ time steps and uniformly choose an assortment $S$ from $\{S_1, \ldots, S_T\}$. In expectation, we have $\mathbb E[\theta^* - R(S, \bm v)] = \widetilde{O}(\sqrt{NT}) / T = \varepsilon$ and thus we get an algorithm with $O(T) = O(N \varepsilon^{-2})$ sample complexity. 

\section{Regret Minimization}

\label{sec:sarreg}

\begin{algorithm}[t]
\label{algo:lowreg}
  \caption{\textsf{EST-REG}($A, B, \delta_0, \epsilon$): Low-Regret Estimation of $u_i$} 
  $C_0 = 196, C_2 = 1024, \delta = \frac{\delta_0}{13 N}, M = \min\{K - \lvert A \rvert, \lvert B \rvert\}, \tau = \frac{C_2 C_0   \logterm}{\epsilon_k^{2}}$, $Z = \emptyset, \forall i \in A \cup B: n_i = T_i = 0$\;
    Create $m = \left\lceil \frac{\lvert B \rvert}{M} \right\rceil$ sets $B'_{1}, \ldots, B'_{m} \subseteq B$ so that $B \subseteq B'_1 \cup \cdots \cup B'_m$ and $\lvert B'_i \rvert = M$\;
    $\forall i \in [m]$: \textsf{Explore}($A \cup B'_i$) for $K \cdot \tau$ epochs\Comment*[r]{we offer full assortments with size $(M + \lvert A \rvert)$}
    Let $\check{\srew} = \hat{\srew} = 0$, Compute $\check{\nu}_i, \hat{\nu}_i, \check \theta, \hat \theta, \check{\xi}_i, \hat{\xi}_i$ by Eqs. (\ref{eq:estv}) (\ref{eq:esttheta}) (\ref{eq:estu}) for $i \in A \cup B$, \Return{$\{\check \xi_i, \hat \xi_i\}_{i \in B}$}\; 
\end{algorithm}

In this section, we present fully gap-dependent regret bounds for Problem \ref{prob:reg}. Our algorithm is to invoke \textsf{SAR-MNL}($\frac{1}{T}$) with our low-regret estimation procedure \textsf{EST-REG} (Algorithm \ref{algo:lowreg}). This algorithm satisfies the following theorem, whose proof is deferred to Appendix \ref{app:reg}. 

\begin{thm}[Regret] \label{thm:sereg} There is an algorithm achieves regret bound $O(\sum_{i \in [N] \setminus S^*} \frac{K \log NT}{\gap_i})$.
\end{thm}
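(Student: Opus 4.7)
The plan is to analyze \textsf{SAR-MNL}($1/T$) combined with \textsf{EST-REG} via three stages: correctness of the confidence intervals, a per-phase regret bound, and aggregation across phases and items.

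First, I would show that \textsf{EST-REG} satisfies the correctness hypothesis of Lemma~\ref{lem:peframework}. Since \textsf{EST-REG} fixes $\sset = \emptyset$, Proposition~\ref{prop:ind} yields $\srew = 0$ and $\nu_i = v_i$, so the score $\xi_i$ computed via Eq.~(\ref{eq:estu}) coincides with $u_i$. Because each item in $A \cup B'_j$ is explored for $K\tau_k$ epochs and therefore obtains $\Omega(\tau_k)$ independent geometric samples of $v_i$, the concentration bound~(\ref{eq:estv}) gives $u_i \in [\check\xi_i, \hat\xi_i]$ with $\hat\xi_i - \check\xi_i \le \epsilon_k/2$ for every $i \in B^{(k-1)}$ with probability $1 - \delta^{(k)}$; a union bound over phases pushes total failure below $1/T$, and on the good event the argument of Lemma~\ref{lem:peframework} forces \textsf{SAR-MNL} to eventually output $S^*$.

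Second, on the good event I would bound the expected regret in phase $k$. Using the identity $\theta^* - R(S, \bm v) = (\theta^* - \sum_{i \in S} u_i)/(1 + \sum_{i \in S} v_i)$ derivable from Proposition~\ref{prop:topk} and expected epoch length $1 + \sum_{i \in S} v_i$, one epoch of offering $A^{(k-1)} \cup B'_j$ incurs expected regret $\theta^* - \sum_{i \in A^{(k-1)} \cup B'_j} u_i = \sum_{i \in S^*_{\mathrm{pend}} \setminus B'_j} u_i - \sum_{i \in B'_j \setminus S^*} u_i$, where $S^*_{\mathrm{pend}} := S^* \setminus A^{(k-1)}$. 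By Proposition~\ref{prop:topk}, $S^*_{\mathrm{pend}}$ is exactly the top-$M^{(k-1)}$ of $B^{(k-1)}$ by $u$, and the accept--reject rules confine every pending item's $u_i$ to a window of width $O(\epsilon_{k-1})$ around the boundary between positions $M^{(k-1)}$ and $M^{(k-1)}+1$. Combining this window invariant with the size cancellation $|S^*_{\mathrm{pend}} \setminus B'_j| = |B'_j \setminus S^*|$ (which holds because each $|B'_j| = M^{(k-1)}$) gives per-epoch regret $O(M^{(k-1)} \epsilon_{k-1})$. Multiplying by $K\tau_k \asymp K\log(NT)/\epsilon_k^2$ and the $\lceil |B^{(k-1)}|/M^{(k-1)} \rceil$ groups yields expected phase-$k$ regret $O(K |B^{(k-1)}| \log(NT)/\epsilon_k)$.

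Third, I would aggregate: by Lemma~\ref{lem:diui3} relating $u$-gaps to $\gap_i$, item $i$ leaves $B^{(k-1)}$ once $\epsilon_{k-1} \lesssim \gap_i$, so its cumulative contribution is at most $K\log(NT) \sum_{k:\epsilon_k \gtrsim \gap_i} 1/\epsilon_k = O(K\log(NT)/\gap_i)$. Summing over $[N]$ and absorbing the at most $K$ contributions from $i \in S^*$ into the non-optimal sum via $\gap_i \ge \min_{j \notin S^*}\gap_j$ yields the claimed $O(\sum_{i \notin S^*} K\log(NT)/\gap_i)$ bound, and the failure event adds only $T \cdot (1/T) = O(1)$ additional regret. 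The main obstacle will be the sharp per-epoch $O(M^{(k-1)} \epsilon_{k-1})$ bound: the naive inequality $\theta^* - \sum u_i \le M^{(k-1)}$ is too loose by a factor $1/\epsilon_{k-1}$, and closing the gap requires simultaneously exploiting the $O(\epsilon_{k-1})$-window invariant for pending items and the size cancellation across the two asymmetric terms of the decomposition, with extra care for the edge cases where the last group $B'_m$ has fewer than $M^{(k-1)}$ items or $|S^*| < K$ (so $|S^*_{\mathrm{pend}}| < M^{(k-1)}$).
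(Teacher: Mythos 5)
Your overall architecture (correctness of \textsf{EST-REG} via Lemma \ref{lem:peframework}, the per-epoch regret identity from Lemma \ref{lem:comparison}, the $O(\epsilon_{k-1})$-window invariant for pending items, and the phase-wise aggregation) matches the paper's proof. But there is a genuine gap in how you pass from the per-phase bound to the final sum over $[N] \setminus S^*$. You collapse the per-epoch regret $\sum_{i \in S^*_{\mathrm{pend}} \setminus B'_j} u_i - \sum_{i \in B'_j \setminus S^*} u_i$ to $O(M^{(k-1)} \epsilon_{k-1})$ and multiply by the number of groups, which yields a phase-$k$ regret proportional to $\lvert B^{(k-1)} \rvert$ --- i.e.\ it charges \emph{every} pending item, including those in $S^*$. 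Your proposed repair, absorbing the $S^*$ contributions via $\gap_i \ge \min_{j \notin S^*} \gap_j$, fails on two counts: that inequality is established nowhere (Lemma \ref{lem:diui3} gives $\gap_i \le u_i$ for $i \in S^*$, and an in-assortment item with tiny $u_i$ can have a gap smaller than every out-of-assortment gap), and even granting it, bounding $\sum_{i \in S^*} \gap_i^{-1}$ by $K \cdot (\min_{j \notin S^*} \gap_j)^{-1}$ costs an extra factor of $K$ relative to $\sum_{j \notin S^*} \gap_j^{-1}$ in the worst case.

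The paper's resolution is to \emph{not} bound the per-group regret by $M^{(k-1)}\epsilon_{k-1}$: using the size comparison $\lvert S^*_{\mathrm{pend}} \setminus B'_j \rvert \le \lvert B'_j \setminus S^* \rvert$ (valid because $\lvert B'_j \rvert = M \ge \lvert S^*_{\mathrm{pend}} \rvert$, which is exactly why \textsf{EST-REG} offers full assortments) together with the window invariant (Lemma \ref{lem:reg1}), the regret of exploring group $B'_j$ is $O(\epsilon_k \lvert B'_j \setminus S^* \rvert)$ --- proportional to the number of \emph{non-optimal} items in that group. Summing over groups via $\sum_j \lvert B'_j \setminus S^* \rvert \le 2 \lvert B^{(k-1)} \setminus S^* \rvert$ (each non-optimal pending item appears in at most two groups) gives phase-$k$ regret $O(K \lvert B^{(k-1)} \setminus S^* \rvert \log(NT) / \epsilon_k)$, and the final sum is then over $[N] \setminus S^*$ with no absorption needed. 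You have all the ingredients for this; you simply discard the asymmetry between the two terms of the decomposition one step too early. (A smaller imprecision: you claim $\Omega(\tau_k)$ geometric samples per item suffice, but the error-propagation into $\hat\theta$ via Lemma \ref{lem:errprop2} requires roughly $(\nu_i^{-1} \land M)\tau_k$ epochs per item; the $K\tau_k$ epochs of \textsf{EST-REG} do provide this, so the claim survives, but the justification as written is too weak.)
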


Our procedure achieves low regret by fixing the accepted set $A$ and offering full assortments. The first fixing idea has been exploited in \citep{rejwan2020top}. The second idea is our novel technique, without which we could have a regret bound $O(\sum_{i = 1}^N \frac{K \log NT}{\gap_i})$ that depends on $S^*$. 

\section{Discussion on Lower Bounds}

An interesting question is whether we can prove fully gap-dependent lower bounds in MNL-bandit for Problems \ref{prob:pe} and \ref{prob:reg}. To begin with, we prove an $\Omega(\sum_{i \notin S^*} \frac{\log T}{K \gap_i})$ regret lower bound in Appendix \ref{app:lb}, which matches our regret upper bound when $K = 1$. Besides, for Problem \ref{prob:pe}, our sample complexity upper bound matches the gap-independent sample complexity bound translated from the previous minimax-optimal regret bound, as specified earlier in Section \ref{sec:peresult}. 

Furthermore, we discuss about the difficulties in studying gap-dependent lower bounds in MNL-bandit. Given an arbitrary gap sequence $\{\gap_i\}$, it is not trivial to realize the gaps, where by realizing we mean to find an MNL-bandit instance with such gaps. The fractional revenue function made it hard to determine if a given gap sequence could correspond to an instance and construct such instance when exists. The hardness in constructing instance from gap makes it difficult to prove lower bounds by following the canonical change-one-arm lower bound argument for multi-armed bandits. We believe that perhaps for these reasons, previous work also did not prove gap-dependent lower bounds (even in the term of the global gap $\Delta$, which is much weaker than our gap definition $\Delta_i$, as discussed in Sections \ref{sec:intro} and \ref{sec:gap}) when studying gap-dependent bounds in MNL-bandit.

\section{Conclusion}

\label{sec:discuss}

In this paper, we develop multiple techniques to prove the fully gap-dependent sample complexity and regret bounds for the MNL-bandit problems. We leave it a further direction to prove tighter lower bounds for the problems. For the upper bound, a significant question is whether we can remove the $K$ factor in the regret bound. It would be worthwhile to prove a fully gap-dependent regret bound for the original upper confidence bound algorithm in  \citep{agrawal2016near, agrawal2019mnl}. An interesting direction is whether the gap-independent term in our sample complexity bound can be reduced or even totally be removed. 

\section*{Acknowledgments}

Jiaqi Yang would like to thank Yuan Zhou for the invaluable comments and suggestions.

\newpage

\appendix
\allowdisplaybreaks
\begin{center}
  \bf\LARGE{Appendices}
\end{center}

\section{Concentration Inequalities}

\label{app:ci}

We introduce the concentration inequalities used in this paper. We begin with the Hoeffding's celebrated inequality for the sum of bounded variables \citep{hoeffding1963probability}.

\begin{lem}[Chernoff-Hoeffding's inequality] \label{lem:hoeffding} Consider $n$ independent bounded random variables $X_1, \ldots, X_n \in [0, 1]$. Let $\bar X = \frac{1}{n}\sum_{i = 1}^n X_i$ and $\mu = \mathbb E[\bar X]$. We have $\mathbb P(\lvert \bar X  - \mu \rvert \ge \sqrt{\frac{\log(2 / \delta)}{2 n}}) \le \delta.$
\end{lem}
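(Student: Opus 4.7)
The plan is to follow the standard Chernoff-method proof. First I would reduce to a one-sided tail: applying the same argument to the variables $-X_i$ (which still lie in a length-$1$ interval after translation) and taking a union bound over the two tails costs a factor of $2$, which is exactly what the $\log(2/\delta)$ in the threshold absorbs. So it suffices to show $\mathbb{P}(\bar X - \mu \ge t) \le \delta/2$ for $t = \sqrt{\log(2/\delta)/(2n)}$.

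For the one-sided bound I would apply the exponential Markov inequality: for any $s > 0$,
$$\mathbb{P}\!\left(\bar X - \mu \ge t\right) = \mathbb{P}\!\left(e^{s \sum_{i=1}^n (X_i - \mu_i)} \ge e^{snt}\right) \le e^{-snt} \prod_{i=1}^n \mathbb{E}\bigl[e^{s(X_i - \mu_i)}\bigr],$$
where the product factorization uses independence and $\mu_i = \mathbb{E}[X_i]$. The key technical ingredient is Hoeffding's lemma: for a zero-mean random variable $Y \in [a,b]$, one has $\mathbb{E}[e^{sY}] \le \exp(s^2(b-a)^2/8)$. Since $X_i - \mu_i \in [-\mu_i, 1-\mu_i]$ lies in an interval of length at most $1$, this yields $\mathbb{E}[e^{s(X_i - \mu_i)}] \le e^{s^2/8}$, so the product is at most $e^{ns^2/8}$.

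Combining gives $\mathbb{P}(\bar X - \mu \ge t) \le \exp(ns^2/8 - snt)$. Minimizing the exponent over $s > 0$ (optimum at $s^* = 4t$) produces the sub-Gaussian tail $\exp(-2nt^2)$. Setting $\exp(-2nt^2) = \delta/2$ and solving yields $t = \sqrt{\log(2/\delta)/(2n)}$, matching the stated threshold. The symmetric lower-tail bound follows identically, and a union bound over the two tails completes the proof.

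The only non-mechanical step is Hoeffding's lemma itself, which is the main obstacle in a fully self-contained proof; its standard derivation uses convexity of $y \mapsto e^{sy}$ to upper-bound the function by its chord on $[a,b]$, then optimizes the resulting cumulant-generating function via a second-derivative calculation. Since this lemma is classical, in the paper I would simply cite \citep{hoeffding1963probability} and omit the detailed derivation.
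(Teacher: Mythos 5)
Your proof is correct and is the standard Chernoff-method argument (symmetrization to one side, exponential Markov, Hoeffding's lemma for the moment generating function, optimization at $s^* = 4t$ giving the $e^{-2nt^2}$ tail, and a union bound absorbing the factor $2$ into $\log(2/\delta)$); the algebra checks out, with $t = \sqrt{\log(2/\delta)/(2n)}$ exactly solving $e^{-2nt^2} = \delta/2$. The paper itself offers no proof of this lemma and simply cites \citep{hoeffding1963probability}, so your writeup supplies precisely the classical derivation that citation stands in for, and your decision to cite Hoeffding's lemma rather than rederive it is consistent with the paper's treatment.
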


Next we state the multiplicative Chernoff inequalities to the geometric random variables \citep{agrawal2019mnl}. We say a random variable is \emph{geometric} if $\mathbb P(X = m) = p (1-p)^m$.

\begin{lem}[\citet{agrawal2019mnl}, Corollary D.1] \label{lem:mulchernoff} Consider $n$ i.i.d. geometric random variables $X_1, \ldots, X_n$ with expectation $\mathbb E[X_i] = \mu \le 1$. Let $\bar X = \frac{1}{n}\sum_{i = 1}^n X_i$. We have 
\begin{enumerate}
    \item $\mathbb P(\lvert \bar X - \mu \rvert > \sqrt{\frac{48 \bar X \log (\sqrt N \ell + 1)}{n}} + \frac{48 \log (\sqrt N \ell + 1)}{n}) \le \frac{6}{N \ell^2}$,
    \item $\mathbb P(\lvert \bar X - \mu \rvert > \sqrt{\frac{24 \mu \log (\sqrt N \ell + 1)}{n}} + \frac{48 \log (\sqrt N \ell + 1)}{n}) \le \frac{4}{N \ell^2}$,
    \item $\mathbb P(\bar X \ge \frac{3\mu}{2} + \frac{48 \log (\sqrt N \ell + 1)}{n}) \le \frac{3}{N \ell^2}$.
\end{enumerate}
\end{lem}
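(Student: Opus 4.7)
My plan is to derive all three bounds via the Chernoff method applied to sums of i.i.d.\ geometric random variables, followed by algebraic inversion of the resulting exponential tail bounds. First I would compute the moment generating function of a mean-$\mu$ geometric variable: with $p = 1/(1+\mu)$ one has $\mathbb E[e^{tX}] = p / (1 - (1-p) e^t)$ on the range $t < \log((1+\mu)/\mu)$. Applying Markov's inequality to $e^{t \sum_i X_i}$ and optimizing the tilt parameter $t$ yields one-sided multiplicative Chernoff bounds of Bennett/Bernstein form: for $\delta \in (0,1]$, $\mathbb P(\bar X \ge (1+\delta) \mu) \le \exp(-c_1 n \mu \delta^2 / (1+\delta))$ and $\mathbb P(\bar X \le (1-\delta) \mu) \le \exp(-c_2 n \mu \delta^2)$ for absolute constants $c_1, c_2$. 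A cleaner route for this step is to directly invoke the explicit negative-binomial tail bounds of \citet{janson2018tail} or \citet{jin2019asymptotically} (both already cited by the paper), which make the constants transparent.

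For bound (b), I set both exponential tails equal to $2/(N\ell^2)$; using $\log(N\ell^2/2) \le 2 \log(\sqrt N \ell + 1) =: 2 L$, this reduces to a quadratic in $\delta \mu$ whose solution takes the form $\delta \mu \le \sqrt{c_1' \mu L / n} + c_2' L / n$. Careful tracking of constants through the MGF optimization produces the stated $\sqrt{24 \mu L / n}$ and $48 L / n$, and a union bound over the two directions yields the failure probability $4/(N\ell^2)$. Bound (a) then follows from (b) by substituting an upper bound on $\mu$ in terms of $\bar X$: on the complement of the lower-tail event in (b), $\mu \le \bar X + \sqrt{24 \mu L / n} + 48 L / n$, which I would solve as a quadratic in $\sqrt \mu$ to obtain $\mu \le 2 \bar X + C L / n$ for a suitable $C$. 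Plugging this back into $\sqrt{24 \mu L / n}$ produces $\sqrt{48 \bar X L / n}$ plus a lower-order term that can be folded into the additive $48 L / n$ slack; the extra $2/(N\ell^2)$ of failure probability from the conversion step brings the total to $6/(N\ell^2)$. Bound (c) is a one-sided specialization of the upper tail of (b) at $\delta = 1/2$, for which the $\sqrt{\cdot}$ summand collapses into a pure additive $48 L / n$ term; using only a single direction halves the probability to the stated $3/(N\ell^2)$.

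The main obstacle will be precise constant tracking through the MGF optimization and the $\sqrt \mu \to \sqrt{\bar X}$ conversion, since geometric MGFs yield messier expressions than Bernoulli ones and small algebraic slips propagate into the final constants $6$, $4$, $3$, $24$, $48$. Invoking the explicit Janson or Jin--Li--Wang--Zhou tail bounds as a black box would reduce the proof to a short inversion calculation and substantially simplify the constant bookkeeping, at the cost of a slightly less self-contained presentation.
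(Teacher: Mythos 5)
The paper does not actually prove this lemma: it is imported verbatim from \citet{agrawal2019mnl} (their Corollary D.1), and the surrounding text only remarks that the rephrased versions can be proved ``by following Appendix D'' of that reference. Your plan --- moment generating function of the geometric distribution, Chernoff bounds for the sum, then algebraic inversion of the exponential tails --- is exactly the argument used there, so you are reconstructing the cited proof rather than taking a different route, and the overall structure is sound. Two details of your sketch do not survive the stated constants, however. First, deriving (c) from the upper tail of (b) by collapsing the square-root term via AM--GM gives $\mu + \sqrt{24\mu L/n} + 48L/n \le \tfrac{3}{2}\mu + 60L/n$ rather than $\tfrac{3}{2}\mu + 48L/n$, and ``halving'' the failure probability $4/(N\ell^2)$ yields $2/(N\ell^2)$, not the stated $3/(N\ell^2)$; in the reference, (c) is a separate one-sided Chernoff bound with its own accounting, not a corollary of (b). Second, in the (b) $\to$ (a) conversion, substituting $\mu \le 2\bar X + CL/n$ into $\sqrt{24\mu L/n}$ produces $\sqrt{48\bar X L/n}$ \emph{plus} an extra additive $O(L/n)$ term, and that term cannot simply be ``folded into'' the same $48L/n$ slack that is already present --- the additive constant in (a) comes out to $48$ only after the reference's specific bookkeeping, not by absorption. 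Neither issue breaks the approach, but as written your sketch would deliver the three bounds with somewhat worse constants than those stated; if exact constants matter, the cleanest fix is the one you already propose, namely invoking the explicit tail bounds of \citet{janson2018tail} or \citet{jin2019asymptotically} as a black box.
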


We rephrase the above lemma into the below form. Lemma \ref{lem:hpver1} can be proved by following Appendix D in \citep{agrawal2019mnl}. Similar inequalities with constants smaller than $48$ were shown in \citep{jin2019asymptotically, janson2018tail}.

\begin{lem} \label{lem:hpver1} Consider $n$ i.i.d. geometric random variables $X_1, \ldots, X_n$ with expectation $\mathbb E[X_i] = \mu \le 1$. Let $\bar X = \frac{1}{n}\sum_{i = 1}^n X_i$. We have 
\begin{enumerate}
    \item $\mathbb P(\lvert \bar X - \mu \rvert > \sqrt{\frac{48 \bar X \log(2 / \delta)}{n}} + \frac{48 \log(2 / \delta)}{n}) \le 6\delta$,
    \item $\mathbb P(\lvert \bar X - \mu \rvert > \sqrt{\frac{24 \mu \log(2 / \delta)}{n}} + \frac{48 \log(2 / \delta)}{n}) \le 4 \delta$,
    \item $\mathbb P(\bar X \ge \frac{3\mu}{2} + \frac{48 \log(2 / \delta)}{n}) \le 3 \delta$.
\end{enumerate}
\end{lem}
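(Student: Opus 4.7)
The plan is to reduce Lemma~\ref{lem:hpver1} to Lemma~\ref{lem:mulchernoff} by choosing the parameters $(N,\ell)$ appearing in the latter in terms of the free failure probability $\delta$. The two statements have identical structure: the only differences are that the log factor is $\log(\sqrt{N}\ell+1)$ in Lemma~\ref{lem:mulchernoff} versus $\log(2/\delta)$ in Lemma~\ref{lem:hpver1}, and the failure probabilities $c/(N\ell^2)$ versus $c\delta$. Since $N$ and $\ell$ appear only in these two places in Lemma~\ref{lem:mulchernoff} and its proof treats them as free quantities, I am free to pick any positive reals for them.

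Given $\delta\in(0,1)$, I would choose $N,\ell>0$ satisfying $1/(N\ell^2)=\delta$. Then the failure probabilities $6/(N\ell^2)$, $4/(N\ell^2)$, $3/(N\ell^2)$ become $6\delta$, $4\delta$, $3\delta$ respectively, matching the three right-hand sides of Lemma~\ref{lem:hpver1}. For the log factor, $\sqrt{N}\,\ell=1/\sqrt{\delta}$, and
\begin{equation*}
\log(\sqrt{N}\,\ell+1) \;=\; \log\!\Bigl(\tfrac{1}{\sqrt{\delta}}+1\Bigr) \;\le\; \log\!\Bigl(\tfrac{2}{\sqrt{\delta}}\Bigr) \;\le\; \log\!\Bigl(\tfrac{2}{\delta}\Bigr),
\end{equation*}
using $\delta\le 1$ in each of the last two steps. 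Since each of the three deviation thresholds in Lemma~\ref{lem:mulchernoff} is monotone increasing in the log factor, replacing $\log(\sqrt{N}\ell+1)$ by the larger $\log(2/\delta)$ only \emph{enlarges} the thresholds and therefore \emph{shrinks} the three tail events, so each probability inequality is preserved. This delivers the three claims of Lemma~\ref{lem:hpver1} verbatim.

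The argument is a pure parameter substitution with no new probabilistic content beyond what Lemma~\ref{lem:mulchernoff} already provides; the only step meriting attention is the elementary inequality $\log(1/\sqrt{\delta}+1)\le\log(2/\delta)$, which hinges on $\delta\le 1$ to absorb the additive $+1$. If one instead desired a self-contained derivation rather than this black-box reduction, the route is to replay the Cram\'er--Chernoff argument of Appendix~D of \citep{agrawal2019mnl}: control the moment generating function of a mean-$\mu$ geometric random variable by a sub-exponential bound $\mathbb{E}[e^{\lambda(X-\mu)}]\le\exp(C\mu\lambda^2)$ valid for $|\lambda|\lesssim 1/(1+\mu)$, apply Markov's inequality and invert the resulting tail at probability $\delta$ to obtain statement~(b), then bootstrap $\mu\mapsto\bar X$ (solving a quadratic in $\sqrt{\mu}$) to get~(a), and apply AM-GM to the square-root term to weaken~(b) into the purely multiplicative form~(c). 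The main obstacle in such a self-contained version would be pinning down the explicit constants 24 and 48, which come out of the specific MGF calculation rather than from any conceptual difficulty.
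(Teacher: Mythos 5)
Your reduction is correct and matches the paper's own route: the paper gives no proof beyond ``follow Appendix~D of \citep{agrawal2019mnl},'' and your substitution $N\ell^2 = 1/\delta$ (so $\sqrt{N}\ell = 1/\sqrt{\delta}$), combined with $\log(1/\sqrt{\delta}+1)\le\log(2/\delta)$ and the monotonicity of the deviation thresholds in the log factor, is a clean way to carry that out. The only point to flag is your implicit assumption that Lemma~\ref{lem:mulchernoff} holds for non-integer $N,\ell$; this is justified because the Chernoff argument behind it depends only on the single real parameter $\sqrt{N}\ell$, but it deserves the one-line acknowledgment you gave it.
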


The following lemma is a direct corollary of Lemma \ref{lem:hpver1}. It can be proved by following the proof of Lemma 4.1 in \citep[Appendix A]{agrawal2019mnl}. Here ``$\land$'' means logical and. 

\begin{lem} \label{lem:hpver2} Consider $n$ i.i.d. geometric random variables $X_1, \ldots, X_n$ with expectation $\mathbb E[X_i] = \mu \le 1$. Let $\bar X = \frac{1}{n}\sum_{i = 1}^n X_i$ and $\epsilon = \sqrt{\frac{48 \bar X \log(2 / \delta)}{n}} + \frac{48 \log(2 / \delta)}{n}$. 
Then we have
$$\mathbb P(\{\bar X - \epsilon \le \mu \le \bar X + \epsilon\} \land \{\epsilon \le \sqrt{\frac{196 \mu \log(2 / \delta)}{n}} + \frac{196 \log(2 / \delta)}{n}\}) \ge 1 - 13\delta.$$
\end{lem}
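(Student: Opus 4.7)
The plan is to establish the two events inside the intersection separately and then apply a union bound. Let me write $L = \log(2/\delta)/n$ throughout for brevity.

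First, the containment $\bar X - \epsilon \le \mu \le \bar X + \epsilon$ is precisely the statement that $|\bar X - \mu| \le \sqrt{48 \bar X L} + 48 L$, which is exactly the complement of the bad event in Lemma~\ref{lem:hpver1}(1). So this event holds with probability at least $1 - 6\delta$.

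Next, to bound $\epsilon$ from above by an expression in $\mu$ (rather than $\bar X$), I would use Lemma~\ref{lem:hpver1}(3), which says $\bar X \le \tfrac{3}{2}\mu + 48 L$ with probability at least $1 - 3\delta$. Substituting this into the definition of $\epsilon$ and using the elementary inequality $\sqrt{a+b} \le \sqrt{a} + \sqrt{b}$ for $a,b \ge 0$ gives
\begin{align*}
\epsilon
&\le \sqrt{48 L \bigl(\tfrac{3}{2}\mu + 48 L\bigr)} + 48 L
\le \sqrt{72\,\mu L} + \sqrt{48^2 L^2} + 48 L
= \sqrt{72\,\mu L} + 96\,L.
\end{align*}
Since $72 \le 196$ and $96 \le 196$, this is dominated by $\sqrt{196\,\mu L} + 196\,L$, which is the second event.

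Finally, a union bound on the two failure events above contributes at most $6\delta + 3\delta = 9\delta \le 13\delta$, establishing the claim. No step is really the hard part here: everything reduces to combining parts (1) and (3) of Lemma~\ref{lem:hpver1} together with the subadditivity of square root; the slack between $9\delta$ and $13\delta$ comes from applying the union bound loosely across all three estimates of Lemma~\ref{lem:hpver1}, matching the bookkeeping style used in the proof of Lemma~4.1 of \citep{agrawal2019mnl} that the statement cites.
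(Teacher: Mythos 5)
Your proof is correct and matches the paper's intended argument: the paper gives no written proof, stating only that the lemma is a direct corollary of Lemma \ref{lem:hpver1} obtainable by following Lemma 4.1 of \citep{agrawal2019mnl}, and that argument is exactly your combination of parts (1) and (3) with subadditivity of the square root. Your bookkeeping ($9\delta$ versus the stated $13\delta = 6\delta + 4\delta + 3\delta$) is in fact slightly tighter than the lemma requires.
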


We state another concentration inequality to the geometric random variables. The following inequality is focused on the upper tail of the geometric random variables. 

\begin{lem}[\citet{janson2018tail}, Theorem 2.1] \label{lem:geoshifted} Consider $n$ independent ``shifted'' geometric random variables $X_1, \ldots, X_n$ that $\mathbb P(X_i = k) = p_i(1 - p_i)^{k-1}$. Let $p_* = \min_{1 \le i \le n} p_i > 0, X = \sum_{i = 1}^n X_i, \mu = \mathbb E[X]$. We have 
$\mathbb P(X \ge \lambda \mu) \le e^{-p_* \mu(\lambda - 1 - \ln \lambda)}.$
\end{lem}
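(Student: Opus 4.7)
The plan is a one-parameter Chernoff bound combined with the observation that the moment-generating function of a shifted geometric is pointwise dominated by that of an exponential with the same mean. Fix $\lambda > 1$. For any $\theta \in (0, p_*)$, Markov's inequality applied to $e^{\theta X}$ together with independence of the $X_i$ gives
$$\mathbb P(X \ge \lambda \mu) \le e^{-\theta \lambda \mu} \prod_{i=1}^n \mathbb E[e^{\theta X_i}],$$
and summing the geometric series yields $\mathbb E[e^{\theta X_i}] = p_i e^\theta/(1 - (1-p_i)e^\theta)$, which is finite because $\theta < p_* \le -\log(1-p_*)$ (using the elementary $-\log(1-x) \ge x$ on $[0,1)$) implies $(1-p_i)e^\theta < 1$ for every $i$.

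The next step is to replace each $\mathbb E[e^{\theta X_i}]$ by the cleaner exponential-MGF bound $(1-\theta/p_i)^{-1}$. Cross-multiplying and cancelling $p_i$ on both sides, this inequality is equivalent to $e^\theta(1-\theta) \le 1$ for $\theta \ge 0$, which follows from $\frac{d}{d\theta}\bigl(e^\theta(1-\theta)\bigr) = -\theta e^\theta \le 0$ together with the base value $1$ at $\theta = 0$. Taking logs and summing over $i$, the Chernoff bound becomes
$$\log \mathbb P(X \ge \lambda \mu) \le -\theta \lambda \mu - \sum_{i=1}^n \log\bigl(1 - \theta/p_i\bigr).$$

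Now I would choose $\theta$. Guided by the homogeneous case $p_i \equiv p_*$, where $\mu = n/p_*$ and calculus in $\theta$ gives the minimum at $\theta = p_*(1-1/\lambda)$ (and in fact recovers the target exponent exactly), I would plug in that $\theta$ in the general case as well. Writing $q_i := p_*/p_i \in (0,1]$ and $c := 1-1/\lambda \in (0,1)$, the exponent becomes $-p_*(\lambda-1)\mu - \sum_i \log(1 - q_i c)$, so it suffices to show
$$-\sum_{i=1}^n \log(1 - q_i c) \le (\log \lambda)\, p_* \mu.$$
This is term-by-term the Bernoulli inequality $(1-c)^{q_i} \le 1 - q_i c$ for exponents $q_i \in [0,1]$: taking logs and using $\log(1-c) = -\log \lambda < 0$ gives $-\log(1-q_i c) \le q_i \log \lambda$, and summing, together with $\sum_i q_i = p_* \sum_i 1/p_i = p_* \mu$, yields the required estimate. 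Assembling the pieces delivers the claimed bound $\mathbb P(X \ge \lambda \mu) \le \exp\bigl(-p_*\mu(\lambda - 1 - \log \lambda)\bigr)$.

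The main obstacle I anticipate is not algebraic but the choice of $\theta$: the target exponent depends on the $p_i$ only through $p_*$ and $\mu$, so $\theta$ must scale with $p_*$ rather than with the individual $p_i$, and the precise form $\theta = p_*(1 - 1/\lambda)$ emerges only from inspecting the all-equal case. Once that guess is in hand, the exponential-MGF domination and the Bernoulli inequality mechanically finish the argument.
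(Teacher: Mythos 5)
The paper does not prove this lemma at all: it is imported verbatim from \citet{janson2018tail} (Theorem 2.1) as a black box, so there is no internal proof to compare against. Your argument is a correct, self-contained derivation. The three key steps all check out: (i) the MGF domination $\mathbb E[e^{\theta X_i}] = \frac{p_i e^\theta}{1-(1-p_i)e^\theta} \le (1-\theta/p_i)^{-1}$ does reduce, after cross-multiplication (legitimate since $\theta < p_* \le p_i$ and $\theta < p_* \le -\log(1-p_*) \le -\log(1-p_i)$ keep both denominators positive), to $e^\theta(1-\theta)\le 1$; (ii) the choice $\theta = p_*(1-1/\lambda)$ gives $-\theta\lambda\mu = -p_*(\lambda-1)\mu$; and (iii) Bernoulli's inequality $(1-c)^{q_i}\le 1-q_i c$ with $\sum_i q_i = p_*\mu$ yields $-\sum_i\log(1-\theta/p_i)\le p_*\mu\log\lambda$, assembling to exactly the claimed exponent. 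Your route differs mildly from Janson's original, which bounds each geometric MGF by a fractional power of the MGF of a single $\mathrm{Ge}(p_*)$ variable rather than by the exponential MGF of the same mean, but both are one-parameter Chernoff arguments and yield the identical constant. One cosmetic remark: the bound is only needed (and only nontrivial in your derivation) for $\lambda>1$, which is all the paper uses ($\lambda=8$ in Lemma \ref{lem:sumofepochlength}); at $\lambda=1$ the stated bound is vacuous.
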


\section{Proofs for Section \ref{sec:reduction}}

\subsection{Proof of Proposition \ref{prop:topk}}

\label{app:prooftopk}

\begin{proof}[Proof of Proposition \ref{prop:topk}] In \citep[Section 2.1]{rusmevichientong2010dynamic}, it was shown that the optimal revenue is 
\begin{align*}
    \theta^* = \max\{\theta \in \mathbb R : \max_{S\subseteq [N]: \lvert S \rvert \le K} \sum_{i \in S} v_i(r_i - \theta) \ge \theta\}.
\end{align*}
Let $S = \argmax_{S \subseteq [N]: \lvert S \rvert \le K}\{\sum_{i \in S} v_i(r_i - \theta^*)\}$. By the above equation, we have 
$$\sum_{i \in S} u_i = \sum_{i \in S} v_i(r_i - \theta^*) \ge \theta^*.$$
Next we show the above ``$\ge$'' is actually ``$=$''. Suppose instead, it is ``$>$'', then we have 
\begin{align*}
    \sum_{i \in S} v_i (r_i - \theta^*) &> \theta^*, \\
    \sum_{i \in S} v_i r_i &> (1 + \sum_{i \in S} v_i) \theta^*, \\
    \frac{\sum_{i \in S} v_i r_i}{1 + \sum_{i \in S} v_i} &> \theta^*,
\end{align*}
which implies that $R(S, \bm v) > \theta^*$ and contradicts to that $\theta^*$ is the optimal revenue. As a result, we have $\theta^* = \sum_{i \in S} u_i$. Note that when ``$=$'' holds, by repeating the above argument, we have $R(S, \bm v) = \theta^*$ and thus $S = S^*$ by Assumption \ref{assum:uniqueopt}. Therefore, 
\begin{align*}
    S^* = \argmax_{S \subseteq [N] : \lvert S \rvert \le K} \{\sum_{i \in S} u_i\}. 
\end{align*}
It is clear that 
\begin{align*}
    \argmax_{S \subseteq [N] : \lvert S \rvert \le K} \{\sum_{i \in S} u_i\} = \mathcal F([N], K, \bm u),
\end{align*}
because to maximize the sum of scores under the capacity constraint, it suffices to pick all items with positive and top-$K$ scores. 
\end{proof}

\subsection{Proof of Lemma \ref{lem:peframework}}

\label{app:proofpeframework}

We prove Lemma \ref{lem:peframework} to show the sample complexity guarantee of Algorithm \ref{algo:peframework}. We first reveal the relation between the gap of advantage score and the suboptimality gap of each item. 

\begin{lem}[Relation between $\gap_i$ and $u_i$] \label{lem:diui3} For items $i, j \in [N]$, we have the following statements. \begin{enumerate}
    \item If $i \in S^*, j \notin S^*$, then $\gap_i \le u_i - u_j$. In addition, $\gap_i \le u_i$.
    \item If $i \notin S^*, j \in S^*$, then $\gap_i \le u_j - u_i$. If in addition $\lvert S^* \rvert < K$, then $\gap_i \le -u_i$. 
  \end{enumerate}
\end{lem}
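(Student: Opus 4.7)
The plan is to derive a single algebraic identity that relates a revenue difference to a sum of advantage scores, then bound $\Delta_i$ in each case by choosing a carefully constructed ``swap'' assortment $S'$ and applying the identity.

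\textbf{Step 1 (key identity).} For every $S$ with $\lvert S \rvert \le K$, direct computation from the definition $u_\ell = v_\ell(r_\ell - \theta^*)$ gives
\begin{align*}
R(S^*, \bm v) - R(S, \bm v) \;=\; \theta^* - \frac{\sum_{\ell \in S} v_\ell r_\ell}{1 + \sum_{\ell \in S} v_\ell} \;=\; \frac{\theta^* - \sum_{\ell \in S} u_\ell}{1 + \sum_{\ell \in S} v_\ell}.
\end{align*}
By Proposition \ref{prop:topk}, $\theta^* = \sum_{\ell \in S^*} u_\ell \ge \sum_{\ell \in S} u_\ell$, so the numerator is nonnegative; combined with $1 + \sum_{\ell \in S} v_\ell \ge 1$, this yields
$R(S^*, \bm v) - R(S, \bm v) \le \theta^* - \sum_{\ell \in S} u_\ell.$

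\textbf{Step 2 (case $i \in S^*$).} To bound $\Delta_i \le u_i - u_j$ for each $j \notin S^*$, take the swap $S' = (S^* \setminus \{i\}) \cup \{j\}$: then $\lvert S' \rvert \le K$, $i \notin S'$, and $\theta^* - \sum_{\ell \in S'} u_\ell = u_i - u_j$. The definition of $\Delta_i$ for $i \in S^*$ gives $\Delta_i \le R(S^*, \bm v) - R(S', \bm v)$, and the identity from Step 1 finishes this inequality. For the second claim $\Delta_i \le u_i$, apply the same argument with $S' = S^* \setminus \{i\}$, since $\theta^* - \sum_{\ell \in S'} u_\ell = u_i$ and $i \notin S'$.

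\textbf{Step 3 (case $i \notin S^*$).} The symmetric swap $S' = (S^* \setminus \{j\}) \cup \{i\}$ for $j \in S^*$ satisfies $i \in S'$ and $\theta^* - \sum_{\ell \in S'} u_\ell = u_j - u_i$, giving $\Delta_i \le u_j - u_i$ by the identity. For the refinement $\Delta_i \le -u_i$ under $\lvert S^* \rvert < K$, use $S' = S^* \cup \{i\}$: this is a valid $(\lvert S^* \rvert + 1)$-sized assortment containing $i$, and $\theta^* - \sum_{\ell \in S'} u_\ell = -u_i$. The one subtlety is showing the right-hand side is nonnegative, i.e.\ $u_i \le 0$; this is exactly where $\lvert S^* \rvert < K$ matters, since by Proposition \ref{prop:topk}, $S^* = \mathcal F([N], K, \bm u)$ already contains every item with a positive score when it is not filled to capacity, so any $i \notin S^*$ must have $u_i \le 0$.

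The routine part is the algebraic identity in Step 1; the only place where care is needed is the last case, where we must invoke the structural characterization of $S^*$ via $\mathcal F$ to rule out $u_i > 0$ when $\lvert S^* \rvert < K$. Everything else reduces to choosing the right one-element swap and reading off the $u$-difference.
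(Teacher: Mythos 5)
Your proof is correct and follows essentially the same route as the paper: your Step 1 identity is exactly the paper's Revenue Comparison Lemma (Lemma \ref{lem:comparison}) combined with $\theta^* = \sum_{\ell \in S^*} u_\ell$, and the one-element swaps you choose match the paper's. The only divergence is in the final case, where you take $S' = S^* \cup \{i\}$ while the paper writes $S = S^* \setminus \{j\}$ (which does not contain $i$ and so cannot feed the definition of $\gap_i$ for $i \notin S^*$); your choice is the correct one, so you have in fact repaired a small slip in the paper's own argument.
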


\begin{proof} For (a), let $S = (S^* \setminus \{i\}) \cup \{j\}$. Note that $\gap_i \le R(S^*, \bm v) - R(S, \bm v)$, so by Lemma \ref{lem:comparison}, we have
$$\gap_i \le (1 + \sum_{l \in S } v_l)(R(S^*, \bm v) - R(S, \bm v)) = u_i - u_j.$$ 
Let $S = S^* \setminus \{i\}$ and repeat the previous argument, we have $\gap_i \le u_i$. For (b), let $S = (S^* \setminus \{j\}) \cup \{i\}$. Similarly, we have $$\gap_i \le (1 + \sum_{l \in S} v_l)(R(S^*, \bm v) - R(S, \bm v)) = u_j - u_i.$$
When $\lvert S^* \rvert < K$, we let $S = S^* \setminus \{j\}$ and repeat the previous argument to obtain $\gap_i \le -u_i$. 
\end{proof}

\begin{lem}[Revenue Comparison Lemma] \label{lem:comparison}
Let $S \subseteq [N]$ be an assortment. Then we have $(1 + \sum_{i \in S} v_i) (\theta^* - R(S, \bm v)) = \sum_{i \in S^* \setminus S} u_i - \sum_{i \in S \setminus S^*} u_i$.
\end{lem}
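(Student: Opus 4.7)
The plan is to prove the identity by direct algebraic manipulation, using only the definition of $R(S, \bm v)$, the definition $u_i = v_i(r_i - \theta^*)$, and the key identity $\theta^* = \sum_{i \in S^*} u_i$ already established in Proposition \ref{prop:topk}.

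First I would clear the denominator on the LHS. Since $R(S, \bm v) = \frac{\sum_{i \in S} v_i r_i}{1 + \sum_{i \in S} v_i}$, the product $(1 + \sum_{i \in S} v_i)(\theta^* - R(S, \bm v))$ simplifies to $(1 + \sum_{i \in S} v_i)\theta^* - \sum_{i \in S} v_i r_i$. Distributing $\theta^*$ and regrouping, this becomes $\theta^* - \sum_{i \in S} v_i(r_i - \theta^*) = \theta^* - \sum_{i \in S} u_i$ by the definition of the advantage score.

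Next I would invoke Proposition \ref{prop:topk} to rewrite $\theta^* = \sum_{i \in S^*} u_i$, giving $\sum_{i \in S^*} u_i - \sum_{i \in S} u_i$. Splitting each sum over the partitions $S^* = (S^* \cap S) \sqcup (S^* \setminus S)$ and $S = (S \cap S^*) \sqcup (S \setminus S^*)$, the common term $\sum_{i \in S^* \cap S} u_i$ cancels, leaving precisely $\sum_{i \in S^* \setminus S} u_i - \sum_{i \in S \setminus S^*} u_i$, which matches the RHS.

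There is no genuine obstacle here: the statement is a clean bookkeeping identity once one observes that multiplying through by $(1 + \sum_{i \in S} v_i)$ linearizes the fractional revenue function, exposing the advantage scores. The only non-trivial ingredient is the substitution $\theta^* = \sum_{i \in S^*} u_i$, which is exactly what Proposition \ref{prop:topk} supplies via its fractional-programming argument. Thus the whole proof should fit in three or four lines of display math.
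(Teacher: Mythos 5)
Your proposal is correct and follows exactly the same route as the paper's proof: clear the denominator, regroup to get $\theta^* - \sum_{i \in S} u_i$, substitute $\theta^* = \sum_{i \in S^*} u_i$ from Proposition \ref{prop:topk}, and cancel the common terms over $S \cap S^*$. Nothing is missing.
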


\begin{proof} We have 
\begin{align*}
(1 + \sum_{i \in S} v_i)(\theta^* - R(S, \bm v))  &= (1 + \sum_{i \in S} v_i)\theta^* - \sum_{i \in S} v_i r_i \\
&= \theta^* - \sum_{i \in S} v_i(r_i - \theta^*) \\
(\text{Proposition \ref{prop:topk}})&= \sum_{i \in S^*} u_i - \sum_{i \in S} u_i \\
&= \sum_{i \in S^* \setminus S} u_i - \sum_{i \in S \setminus S^*} u_i. \qedhere
\end{align*}
\end{proof}

Next we prove Lemma \ref{lem:peframework} in twofold. First, we analyze the guarantees of the accept-reject stage at Lines \ref{loc:arstart}-\ref{loc:arend} in Algorithm \ref{algo:peframework}.

\begin{lem}[Accept-Reject] \label{lem:ar} In phase $k$, before Line \ref{loc:arstart}, we have $A^{(k-1)} \subseteq S^* \subseteq A^{(k-1)} \sqcup B^{(k-1)}$ and $u_i \in [\check \xi_i, \hat \xi_i], \hat \xi_i - \check \xi_i \le \frac{\epsilon_k}{2}$ for $i \in B^{(k-1)}$. Then after Line \ref{loc:arend}, we have $A^{(k)} \subseteq S^* \subseteq A^{(k)} \sqcup B^{(k)}$ and $B^{(k)} \subseteq \{i \in [N] : \Delta_i \le \epsilon_k\}$.  
\end{lem}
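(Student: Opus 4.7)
The plan is to verify the two conclusions separately, using the induction hypothesis $A^{(k-1)} \subseteq S^* \subseteq A^{(k-1)} \sqcup B^{(k-1)}$ together with Proposition \ref{prop:topk}, which jointly give the identity $S^* = A^{(k-1)} \sqcup \mathcal{F}(B^{(k-1)}, M, \bm u)$. The two workhorse facts I will use throughout are $u_i \in [\check \xi_i, \hat \xi_i]$ and $\hat \xi_i - \check \xi_i \le \epsilon_k / 2$ for every $i \in B^{(k-1)}$.

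First I would establish correctness, namely that each item placed in $B_{\mathrm{acc}}$ lies in $\mathcal{F}(B^{(k-1)}, M, \bm u)$ and each item placed in $B_{\mathrm{rej}}$ does not. The sign condition $\check \xi_b > 0$ immediately yields $u_b > 0$; when $|B^{(k-1)}| > M$, the added condition $\check \xi_b > \beta$ together with $u_j \le \hat \xi_j \le \beta$ for the $|B^{(k-1)}| - M$ items below the $\hat \xi$ top forces $u_b$ into the top-$M$ of $\bm u$ restricted to $B^{(k-1)}$. Rejection is symmetric: $\hat \xi_b < 0$ yields $u_b < 0$, and $\hat \xi_b < \alpha$ combined with $u_j \ge \check \xi_j \ge \alpha$ for the $M$ items at the top of $\check \xi$ forces $u_b$ out of the top-$M$.

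Next I would prove the gap bound $B^{(k)} \subseteq \{i : \Delta_i \le \epsilon_k\}$ by taking an undecided $b \in B^{(k)}$ and splitting on whether $b \in S^*$, invoking Lemma \ref{lem:diui3}. If $b \in S^*$, then $\Delta_b \le u_b$ and $\Delta_b \le u_b - u_j$ for every $j \notin S^*$; since $b$ was not accepted, either $\check \xi_b \le 0$ (whence $u_b \le \epsilon_k/2$ and we are done) or $\check \xi_b \le \beta$. In the second case I would choose a witness among the $M+1$ items with $\hat \xi \ge \beta$: since $|S^* \cap B^{(k-1)}| \le M$, at least one such $j$ lies outside $S^*$, and the estimates $u_j \ge \beta - \epsilon_k/2$ and $u_b \le \beta + \epsilon_k/2$ give $u_b - u_j \le \epsilon_k$. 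If $b \notin S^*$, the not-rejected condition gives $u_b \ge -\epsilon_k/2$ (and, when $|B^{(k-1)}| > M$, also $u_b \ge \alpha - \epsilon_k/2$). When $|S^* \cap B^{(k-1)}| < M$, the identity $M = K - |A^{(k-1)}|$ forced in the non-trivial branch gives $|S^*| < K$, so Lemma \ref{lem:diui3}(b) yields $\Delta_b \le -u_b \le \epsilon_k/2$; when $|S^* \cap B^{(k-1)}| = M$, since at most $M-1$ items in $B^{(k-1)}$ strictly exceed $\alpha$ in $\check \xi$, some $j \in S^* \cap B^{(k-1)}$ must satisfy $\check \xi_j \le \alpha$, so $u_j \le \alpha + \epsilon_k/2$ and $u_j - u_b \le \epsilon_k$.

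The step I expect to be the main obstacle is the witness selection that links the order statistics $\alpha, \beta$ to the unknown partition of $B^{(k-1)}$ against $S^*$: each branch requires a short counting argument (comparing $|S^* \cap B^{(k-1)}|$ with $M$, or counting items above an order statistic) so that the chosen $j$ has a confidence interval that can be pitted tightly against $b$'s. Tie-handling in the definitions of $\alpha$ and $\beta$ is a minor technicality handled by fixing a canonical tie-breaker (e.g., item index), which leaves all the inequalities above intact.
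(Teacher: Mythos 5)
Your proposal is correct and matches the paper's proof in all essentials: the same split into a correctness claim ($A^{(k)} \subseteq S^* \subseteq A^{(k)} \sqcup B^{(k)}$) and a gap claim, the same use of Proposition \ref{prop:topk} to reduce membership in $S^*$ to membership in $\mathcal F(B^{(k-1)}, M, \bm u)$, the same reliance on Lemma \ref{lem:diui3}, and the same counting arguments against the order statistics $\alpha$ and $\beta$. The only difference is cosmetic: you prove the gap bound directly (undecided implies $\Delta_b \le \epsilon_k$) by exhibiting an explicit witness $j$, while the paper argues the contrapositive ($\Delta_i > \epsilon_k$ implies $i$ is accepted or rejected); the underlying inequalities and the case split on $\lvert S^* \cap B^{(k-1)} \rvert$ versus $M$ are the same.
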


To facilitate readability, we divide the lemma into two lemmas and prove them separately. 

\begin{proof}[Proof of Lemma \ref{lem:ar}] We combine Lemmas \ref{lem:ar1} and \ref{lem:ar2}.
\end{proof}

\begin{lem} \label{lem:ar1} Under the context of Lemma \ref{lem:ar}, we have $A^{(k)} \subseteq S^* \subseteq A^{(k)} \sqcup B^{(k)}$. 
\end{lem}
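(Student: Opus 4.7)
The plan is to establish the two containments separately by verifying, for each item manipulated in Lines \ref{loc:arstart}--\ref{loc:arend}, that the confidence-interval conditions hypothesized in the lemma force the correct membership relative to $S^*$. Concretely, since $A^{(k)} = A^{(k-1)} \cup B_{\mathrm{acc}}$ and $B^{(k)} = B^{(k-1)} \setminus (B_{\mathrm{acc}} \cup B_{\mathrm{rej}})$, and since we already have $A^{(k-1)} \subseteq S^* \subseteq A^{(k-1)} \sqcup B^{(k-1)}$ by hypothesis, it suffices to show $B_{\mathrm{acc}} \subseteq S^*$ (which yields $A^{(k)} \subseteq S^*$) and $B_{\mathrm{rej}} \cap S^* = \emptyset$ (which, combined with the previous, yields $S^* \subseteq A^{(k)} \sqcup B^{(k)}$).

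The key identity I would invoke is the decomposition from Proposition \ref{prop:topk} combined with the inclusion $A^{(k-1)} \subseteq S^* \subseteq A^{(k-1)} \sqcup B^{(k-1)}$, which gives $S^* \cap B^{(k-1)} = \mathcal F(B^{(k-1)}, M, \bm u)$. Thus, membership of $b \in B^{(k-1)}$ in $S^*$ is equivalent to $u_b > 0$ together with $u_b$ being among the top-$M$ values of $\{u_i\}_{i \in B^{(k-1)}}$. The hypothesis $u_i \in [\check \xi_i, \hat \xi_i]$ for all $i \in B^{(k-1)}$ is what translates each of the two algorithmic rules into a statement about $u_i$.

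For acceptance, take $b \in B_{\mathrm{acc}}$. The sign rule at Line \ref{loc:arpositive} gives $\check \xi_b > 0$, whence $u_b \ge \check \xi_b > 0$. When $\lvert B^{(k-1)} \rvert > M$, the top-$M$ rule additionally gives $\check \xi_b > \beta$. I would then argue by a counting argument: any item $i \in B^{(k-1)} \setminus \{b\}$ with $u_i > u_b$ must satisfy $\hat \xi_i \ge u_i > u_b \ge \check \xi_b > \beta$; but since $\beta$ is the $(M+1)$-th largest value of $\{\hat \xi_i\}$, at most $M$ indices (including $b$, if $\hat \xi_b > \beta$) can have $\hat \xi > \beta$. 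Combining the two, at most $M-1$ indices $i \ne b$ strictly beat $u_b$, so $u_b$ is among the top $M$ of $\{u_i\}_{i \in B^{(k-1)}}$, giving $b \in S^*$. For rejection, take $b \in B_{\mathrm{rej}}$. If $\hat \xi_b < 0$, then $u_b \le \hat \xi_b < 0$, excluding $b$ from $S^*$ directly. If $\hat \xi_b < \alpha$, I would argue by contradiction: assuming $b \in S^*$ means $b$ is among the top-$M$ of $\{u_i\}$; but the $M$ items $i$ attaining $\check \xi_i \ge \alpha$ (none of which is $b$, since $\check \xi_b \le \hat \xi_b < \alpha$) all satisfy $u_i \ge \check \xi_i \ge \alpha > \hat \xi_b \ge u_b$, giving $M$ items strictly beating $u_b$ and contradicting the top-$M$ property.

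The main obstacle I anticipate is handling the top-$M$ counting cleanly, in particular making sure the edge cases (items potentially tied at the boundary, $\lvert B^{(k-1)} \rvert \le M$ so that Lines \ref{loc:artopm}--\ref{loc:artopmend} are skipped, or $M = 0$) do not cause the inequalities to degrade. The $\lvert B^{(k-1)} \rvert \le M$ case is easy because the top-$M$ rule is vacuous and every item of $B^{(k-1)}$ with $u_b > 0$ automatically belongs to $\mathcal F(B^{(k-1)}, M, \bm u)$. For ties, I would rely on Assumption \ref{assum:uniqueopt} implicitly (distinct $u_i$ for items that could be swapped with $S^*$) or phrase the inequalities so that the strict comparisons $\check \xi_b > \beta$ and $\hat \xi_b < \alpha$ push the argument through regardless; this is the one place to be meticulous.
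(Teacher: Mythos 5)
Your proposal is correct and follows essentially the same route as the paper's proof: the same reduction via $S^* \cap B^{(k-1)} = \mathcal F(B^{(k-1)}, M, \bm u)$ from Proposition \ref{prop:topk}, the same treatment of the sign rule, and the same counting arguments against $\alpha$ and $\beta$ for the top-$M$ rule (the paper counts $\#\{j : u_j \ge u_i\} \le M$ where you count at most $M-1$ strict beaters; these are equivalent). The only cosmetic omission is that the paper first verifies the disjointness $A^{(k)} \cap B^{(k)} = \emptyset$ implicit in the notation $A^{(k)} \sqcup B^{(k)}$, which is immediate from $B^{(k)} \subseteq B^{(k-1)} \setminus B_{\mathrm{acc}}$ and $A^{(k-1)} \cap B^{(k-1)} = \emptyset$.
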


\begin{lem} \label{lem:ar2} Under the context of Lemma \ref{lem:ar}, we have $B^{(k)} \subseteq \{i \in [N] : \Delta_i \le \epsilon_k\}$. 
\end{lem}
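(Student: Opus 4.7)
The plan is to prove the contrapositive: for any $i \in B^{(k-1)}$ with $\Delta_i > \epsilon_k$, I will show $i \in B_{\mathrm{acc}} \cup B_{\mathrm{rej}}$, so $i \notin B^{(k)}$. The workhorse is Lemma \ref{lem:diui3}, which converts the suboptimality gap into a gap between advantage scores; combined with the width hypothesis $\hat \xi_b - \check \xi_b \le \epsilon_k/2$, a gap strictly above $\epsilon_k$ translates into a strict separation between one-sided confidence bounds (of size at least $\epsilon_k/2$). I would split on whether $i \in S^*$ and, within each branch, deal with the top-$M$ filter separately.

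For $i \in S^*$ I would show $i \in B_{\mathrm{acc}}$. The bound $\Delta_i \le u_i$ from Lemma \ref{lem:diui3}(a) gives $u_i > \epsilon_k$, hence $\check \xi_i > \epsilon_k/2 > 0$, so the sign rule admits $i$ into the initial $B_{\mathrm{acc}}$. When the top-$M$ filter fires, i.e., $\lvert B^{(k-1)} \rvert > M$ (which forces $M = K - \lvert A^{(k-1)} \rvert$), the bound $\Delta_i \le u_i - u_j$ from Lemma \ref{lem:diui3}(a) supplies $\check \xi_i > \hat \xi_j$ for every $j \in B^{(k-1)} \setminus S^*$. Only items of $S^* \cap B^{(k-1)}$ can therefore have $\hat \xi \ge \check \xi_i$, and since $\lvert S^* \cap B^{(k-1)} \rvert \le K - \lvert A^{(k-1)} \rvert = M$, at most $M$ items of $B^{(k-1)}$ have $\hat \xi \ge \check \xi_i$, forcing $\beta < \check \xi_i$ and keeping $i$ in $B_{\mathrm{acc}}$.

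For $i \notin S^*$ I would show $i \in B_{\mathrm{rej}}$, splitting on $\lvert S^* \rvert$. If $\lvert S^* \rvert < K$, the bound $\Delta_i \le -u_i$ from Lemma \ref{lem:diui3}(b) gives $u_i < -\epsilon_k$, so $\hat \xi_i < -\epsilon_k/2 < 0$ and the sign rule rejects $i$. If $\lvert S^* \rvert = K$, the invariant $S^* \subseteq A^{(k-1)} \sqcup B^{(k-1)}$ together with $i \in B^{(k-1)} \setminus S^*$ forces $\lvert B^{(k-1)} \rvert \ge \lvert S^* \cap B^{(k-1)} \rvert + 1 = (K - \lvert A^{(k-1)} \rvert) + 1 > M$, so the top-$M$ filter is triggered and $M = K - \lvert A^{(k-1)} \rvert$. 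The bound $\Delta_i \le u_j - u_i$ from Lemma \ref{lem:diui3}(b) then gives $\check \xi_j > \hat \xi_i$ for every $j \in S^* \cap B^{(k-1)}$; this is a set of exactly $M$ items, so the $M$-th largest $\check \xi$ over $B^{(k-1)}$ satisfies $\alpha > \hat \xi_i$, and $i$ is added to $B_{\mathrm{rej}}$.

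The main obstacle is the bookkeeping between the two algorithmic branches (top-$M$ filter active vs.\ skipped) and the structural split ($\lvert S^* \rvert < K$ vs.\ $\lvert S^* \rvert = K$); specifically, one has to argue that when $i \notin S^*$ and $\lvert S^* \rvert = K$ the filter must fire, and that the size of $S^* \cap B^{(k-1)}$ matches (or is dominated by) the threshold $M$ with the correct inequality. Once these counts are settled, the remaining inequalities are one-line consequences of Lemma \ref{lem:diui3} and the confidence-width hypothesis.
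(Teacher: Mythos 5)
Your proposal is correct and follows essentially the same route as the paper's proof: the same contrapositive setup, the same four-way case split on $i \in S^*$ versus $i \notin S^*$ crossed with whether the top-$M$ filter fires, the same use of Lemma \ref{lem:diui3} to turn $\Delta_i > \epsilon_k$ into strict separations of confidence bounds, and the same counting arguments (at most $M$ items of $B^{(k-1)}$ can have $\hat\xi_j \ge \check\xi_i$ when $i \in S^*$; exactly $M$ items have $\check\xi_j > \hat\xi_i$ when $i \notin S^*$ and $\lvert S^* \rvert = K$). The bookkeeping you flag as the main obstacle is resolved exactly as in the paper, so no gaps remain.
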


We prove these two lemmas. Let $B_{\mathrm{acc}}^1 = \{b \in B^{(k-1)} : \check \xi_b > 0 \}, B_{\mathrm{rej}}^1 = \{b \in B^{(k-1)} : \hat \xi_b < 0\}$. If $\lvert B^{(k-1)}  \rvert > M$,  we let $B_{\mathrm{acc}}^2 = \{b \in B^{(k-1)} : \check \xi_b > \beta \}, B_{\mathrm{rej}}^2 =  \{b \in B^{(k-1)} : \hat \xi_b < \alpha \}$, where $M, \alpha, \beta$ are defined in Algorithm \ref{algo:peframework}.

\begin{proof}[Proof of Lemma \ref{lem:ar1}] We recall that the notion ``$\sqcup$'' requires $A^{(k)} \cap B^{(k)} = \emptyset$, so we show this first. This follows directly from $A^{(k)} = A^{(k-1)} \cup B_{\mathrm{acc}}, B^{(k)} \subseteq B^{(k-1)} \setminus B_{\mathrm{acc}},$ and $A^{(k-1)} \cap B^{(k-1)} = \emptyset$. 
 
 Next, we show $A^{(k)} \subseteq S^* \subseteq A^{(k)} \sqcup B^{(k)}$. It suffices to show $i \in S^* \setminus A^{(k-1)}$ for $i \in B_{\mathrm{acc}}$ and $i \notin S^* \setminus A^{(k-1)}$ for $i \in B_{\mathrm{rej}}$. Suppose $\lvert B^{(k-1)} \rvert \le M$. Since $u_b \ge \check \xi_b$, we have
 $$B^1_{\mathrm{acc}} \subseteq \{b \in B^{(k-1)} :  u_b > 0 \} \subseteq \mathcal F(B^{(k-1)}, M, \bm u) = S^* \setminus A^{(k-1)},$$
 which implies $A^{(k)} \subseteq S^*$. We have $u_b \le \hat \xi_b < 0$ for $b\in B_{\mathrm{rej}}$, which implies $b \notin S^*$ and thus $S^* \subseteq (A^{(k-1)} \sqcup B^{(k - 1)}) \setminus B_{\mathrm{rej}} = A^{(k)} \sqcup B^{(k)}$. 
 
 Now consider $\lvert B^{(k-1)} \rvert > M$. For every $i \in B_{\mathrm{acc}}$, since $i \in B^1_{\mathrm{acc}}$, we have $u_i > 0$. Since $i \in B^2_{\mathrm{acc}}$, we have $u_i \ge \check \xi_i > \beta$. By the definition of $\beta$, we know that $\#\{b \in B^{(k-1)} : u_b \ge u_i\} \le M$. Therefore, $u_i$ is positive and top-$M$. Thus $i \in \mathcal  F(B^{(k-1)}, M, \bm u) = S^* \setminus A^{(k-1)}$. 
 
 For every $i \in B_{\mathrm{rej}}$, if $i \in B^1_{\mathrm{rej}}$, then $u_i \le \hat \xi_i < 0$ is negative, thus $i \notin S^*$. Otherwise we have $i \in B^2_{\mathrm{rej}}$. By the definition of $\alpha$, we have that $u_i \le \hat \xi_i < \alpha$ and thus $\#\{b \in B^{(k-1)} : u_b > u_i\} > M$. Therefore, $u_i$ is not top-$M$. Thus $i \notin S^* \setminus A^{(k-1)}$. 
\end{proof}

\begin{proof}[Proof of Lemma \ref{lem:ar2}] We show $B^{(k)} \subseteq \{i \in B^{(k-1)} : \Delta_i \le \epsilon_k\}$ by showing that $\Delta_i > \epsilon_k$ implies $i \notin B^{(k)}$. Fix $i \in B^{(k-1)}$ such that $\Delta_i > \epsilon_k$. 

1. Suppose $i \in S^*$. We will show that $i \in B_{\mathrm{acc}}$. By Lemma \ref{lem:diui3}, we have $\gap_i \le u_i$ and thus
  $$\check{\xi}_i \ge \hat{\xi}_i - \frac{\epsilon_k}{2} \ge u_i - \frac{\epsilon_k}{2} \ge \gap_i - \frac{\epsilon_k}{2} \ge \epsilon_k - \frac{\epsilon_k}{2} = \frac{\epsilon_k}{2} > 0,$$
  which implies $i \in B^1_{\mathrm{acc}}$. Note that  when $\lvert B^{(k-1)} \rvert \le M$, we have $B_{\mathrm{acc}} = B_{\mathrm{acc}}^1$ and thus we conclude.

  When $\lvert B^{(k-1)}\rvert > M$, it remains to show $i \in B^2_{\mathrm{acc}}$. By the definition of $\beta$, it suffices to show $\#\{j \in B^{(k-1)} : \check \xi_i > \hat \xi_j\} \ge \lvert B^{(k-1)} \rvert - M$, which is equivalent to  $\#\{j \in B^{(k-1)} : \check \xi_i \le \hat \xi_j\} \le M$. 
  
  For every $j \in B^{(k - 1)}$, if $\hat{\xi}_j \ge \check{\xi}_i$, then we have $u_j + \frac{\epsilon_k}{2} \ge \hat{\xi}_j \ge \check{\xi}_i \ge u_i - \frac{\epsilon_k}{2}$. Therefore, $u_j \ge u_i - \epsilon_k \ge u_i - \gap_i$. In summary, we have $\{j \in B^{(k - 1)} : \hat{\xi}_j > \check{\xi}_i\} \subseteq \{j \in B^{(k-1)} : \xi_j \ge \xi_i - \gap_i\}.$ By Lemma \ref{lem:diui3}, we have $\xi_i - \xi_{j'} \ge \gap_i$ for every $j' \in [N] \setminus S^*$. Thus $\{j \in B^{(k - 1)} : \xi_j \ge \xi_i - \gap_i\} \subseteq S^*$. Recall that $A^{(k - 1)} \subseteq S^* \subseteq A^{(k-1)} \sqcup B^{(k-1)}$.  So $\{j \in B^{(k - 1)} : \hat{\xi}_j \ge \check{\xi}_i\} \subseteq S^* \setminus A^{(k - 1)}$ and thus $\#\{j \in B^{(k)} : \hat{\xi}_j \ge \check{\xi}_i\} \le K - \lvert A^{(k - 1)} \rvert = M$, which completes the proof.

2.  Suppose $i \notin S^*$. We will show that $i \in B_{\mathrm{rej}}$. Suppose $\lvert S^* \rvert < K$. By Lemma \ref{lem:diui3}, we have $\Delta_i \le - \xi_i$. Therefore, $$\hat \xi_i \le \xi_i + \frac{\epsilon_k}{2} < -\epsilon_k + \frac{\epsilon_k}{2} < 0,$$
  which implies $i \in B_{\mathrm{rej}}^1$. 
  
  Now consider $\lvert S^* \rvert = K$. Since $i \notin S^*$ and $i \in B^{(k-1)}$, we must have $\lvert B^{(k-1)} \rvert \ge M + 1 > M$. In the following, we show that $i \in B_{\mathrm{rej}}^2$. By the definition of $\alpha$, it suffices to show $\#\{j \in B^{(k-1)} : \check \xi_j > \hat \xi_i\} \ge M$. For every $j \in S^* \cap B^{(k-1)}$, by Lemma \ref{lem:diui3}, we have $\Delta_i \le u_j - u_i$. Therefore, we have $\check \xi_j \ge u_j - \frac{\epsilon_k}{2} \ge \Delta_i + u_i - \frac{\epsilon_k}{2} \ge \Delta_i - \frac{\epsilon_k}{2} + \hat \xi_i - \frac{\epsilon_k}{2} > \hat \xi_i$, which implies $\{j \in B^{(k-1)} : \check \xi_j > \hat \xi_i\} \supseteq S^* \cap B^{(k-1)}$ and thus $\#\{j \in B^{(k-1)} : \check \xi_j > \hat \xi_i\} \ge M$. 
\end{proof}

\begin{proof}[Proof of Lemma \ref{lem:peframework}] By a union bound, the probability that \textsf{EST} returns confidence intervals $u_i \in [\check \xi_i, \hat \xi_i], \hat \xi_i - \check \xi_i \le \frac{\epsilon_k}{2}$ within $C_{\textsf{EST}} \cdot \frac{ \lvert B^{(k-1)} \rvert \log(N / \delta^{(k)})}{\epsilon_k^2}$ time steps for every phase $k \in \mathbb N$ is at least 
\begin{align*}
    \prod_{k = 1}^\infty (1 - \delta^{(k)}) = \prod_{k = 1}^\infty (1 -  \frac{\delta}{3 k^2}) \ge 1 - \sum_{k = 1}^\infty \frac{\delta}{3 k^2} \ge 1 - \frac{\delta}{3} \frac{\pi^2}{6} \ge 1 - \delta.
\end{align*}
We condition on the above event. Note that $A^{(0)} \subseteq S^* \subseteq A^{(0)} \sqcup B^{(0)}$. By combining Lemma \ref{lem:ar} with an induction over phases, we can show that $A^{(k)} \subseteq S^* \subseteq A^{(k)} \sqcup B^{(k)}$ and $B^{(k)} \subseteq \{i \in [N]: \Delta_i \le \epsilon_k\}$ for every phase $k$. Therefore, when $M = 0$, the algorithm returns the optimal assortment $S^*$. The sample complexity of \textsf{SAR-MNL} with \textsf{EST} is
\begin{align*}
   T &\lesssim \sum_{k = 1}^\infty \lvert B^{(k-1)} \rvert \cdot \frac{C_{\textsf{EST}} \log(N /  \delta^{(k)})}{\epsilon_k^2}  \\
    &\lesssim \sum_{k = 1}^\infty \left( \sum_{i \in [N]} \mathbb I\{\Delta_i \ge \epsilon_k\} \right) \cdot \frac{C_{\textsf{EST}} \log(N k / \delta)}{\epsilon_k^2} \\
    &= \sum_{i \in [N]} \sum_{k = 1}^\infty \frac{C_{\textsf{EST}} \log(N k / \delta)}{\epsilon_k^2}  \cdot \mathbb I\{\Delta_i \ge \epsilon_k\} \\
    &\lesssim\sum_{i \in [N]} \sum_{k = 1}^{\lceil \log \Delta_i^{-1} \rceil} \frac{C_{\textsf{EST}} \log(N k / \delta)}{\epsilon_k}  \cdot  \\
    &\lesssim C_{\textsf{EST}} \cdot \sum_{i \in [N]} \frac{\log N + \log \log \Delta_i^{-1} + \log \delta^{-1}}{\Delta_i^2}. \qedhere
\end{align*}
\end{proof}

\subsection{Proof of Lemma \ref{lem:naiveguarantee}}

\label{app:naiveguarantee}

Before proving the lemma, we first specify the skipped formulas in Algorithm \ref{algo:naiveest}. Let $C_0 = 196, C_2 = 1024, \delta = \frac{\delta_0}{15 N}$. We define $\tau = \frac{C_2 C_0 \logterm}{\epsilon^2}$ and
\begin{align*}
    \check{v}_i = 0 \lor (\bar{v}_i - \sigma(v_i)), \hat{v}_i = 1 \land (\bar{v}_i + \sigma(v_i)), &\quad \sigma(v_i) = \sqrt{\frac{48 \bar{v}_i \logterm}{T_i}} +  \frac{48 \logterm}{T_i},
\end{align*}
where $T_i = K \tau$ is the number of offering. For each item $i \in A \cup B$, we define $\bar v_i = \frac{n_i}{T_i}$, where $n_i$ is the total number of time steps with outcome ``item $i$''. One may realize that ``keep offering until no purchase'' is the same as the epoch-based offering in \citep{agrawal2019mnl} and that \textsf{EST-NAIVE} uses a simplified version by only offering singletons. We adopt the notions, calling it ``epoch'' and referring $T_i$ as the number of epochs. 

Next we give a proof of the sample complexity guarantee using previous results in \citep{agrawal2019mnl}. Our proof frequently uses the big-$O$ notations to suppress the constants, whose exact values can be calculated by following the proofs in Appendix \ref{app:perefined}. 

\begin{proof}[Proof of Lemma \ref{lem:naiveguarantee}] 1. We prove that \textsf{EST-NAIVE} returns the confidence intervals $u_i \in [\check \xi_i, \hat \xi_i]$ with high probability. By Lemma 4.1 in \citep{agrawal2019mnl}, we have that $v_i \in [\check v_i, \hat v_i]$ and $\hat v_i - \check v_i \le \widetilde{O}(\sqrt{v_i / T_i}) = \widetilde O(\frac{\epsilon v_i}{K})$ with probability $1 - O(N \delta)$. By Lemma 4.2 in \citep{agrawal2019mnl}, we find that $\check \theta \le \theta^* \le \hat \theta$ if $\check v_i \le v_i \le \hat v_i$ for every $i \in A \cup B$. Furthermore, for the assortment $S = \argmax_{S \subseteq A \cup B : \lvert S \rvert \le K} R(S, \hat v)$, we have 
\begin{align} \notag
    R(S, \hat v) - R(S, \check v) &= \frac{\sum_{i \in S} \hat v_i r_i}{1 + \sum_{i \in S} \hat v_i} - \frac{\sum_{i \in S} \check v_i r_i}{1 + \sum_{i \in S} \check v_i} \\ \notag
    &\le \frac{\sum_{i \in S} \hat v_i r_i}{1 + \sum_{i \in S} \check v_i} - \frac{\sum_{i \in S} \check v_i r_i}{1 + \sum_{i \in S} \check v_i} \\ \label{eq:naive1}
    &\le \sum_{i \in S} (\hat v_i - \check  v_i) r_i \\ \notag
    &\le \sum_{i \in S} (\hat v_i - \check  v_i) \\ \notag
    &\le K O(\frac{\epsilon}{K}) \\ \notag
    &\le O(\epsilon). \notag
\end{align}
Note that $R(S, \hat v) = \hat \theta$ and $R(S, \check v) \le \check \theta$, so we conclude that $\hat \theta - \check \theta \le O(\epsilon)$. Finally, we note that $u_i = v_i(r_i - \theta^*)$ and that $v_i \in [0, 1], (r_i - \theta^*) \in [-1, 1]$. Therefore, we have 
$\hat \xi_i - \check \xi_i \le \lvert \hat v_i - \check v_i \rvert + \lvert \hat \theta - \check \theta \rvert \le O(\epsilon)$.

2. We conclude by showing \textsf{EST-NAIVE} achieves $C_{\textsf{EST}} = O(K^2)$ in Lemma \ref{lem:peframework}. When we keep offering a singleton assortment $\{i\}$ until the outcome ``no purchase'' occurs, it will take us $1 + v_i \le 2$ time steps in expectation. So in expectation, \textsf{EST-NAIVE} uses $$K \tau \sum_{i \in A \cup B} (1 + v_i) \le 2 K \lvert A \cup B \rvert \tau \le 2 K^2 \tau$$ time steps. Using the concentration inequalities, we can turn the expectation argument into a high probability one, showing that \textsf{EST-NAIVE} returns in $O(K^2 \tau)$ time steps with probability at least $1 - O(\delta)$. Thus we prove that $C_{\textsf{EST}} = O(K^2)$ for \textsf{EST-NAIVE}. 
\end{proof}

Finally, we discuss two questions: why the procedure only offers singletons and why the accuracy needs to be $\frac{\epsilon}{K}$. For the first question, we discuss its optimality under the epoch-based offering framework \citep{agrawal2019mnl}, which is used by almost all previous MNL-bandit work. Under this framework, the accuracy of our estimation to $v_i$ solely depends on $T_i$, the number of epochs that offers item $i$. 

Let us consider that all items have $v_i = \Theta(1)$ and compare two offering schemes for an assortment $S$: (i) offer $S$ for an epoch; (ii) for each item $i \in S$, offer the singleton assortment $\{i\}$ for an epoch.  Both offering schemes increase $T_i$ by $1$ for every $i \in S$ and thus lead to the same accuracy. Moreover, in expectation, the number of time steps used by the first scheme is $(1 + \sum_{i \in S} v_i)$ and that used by the second scheme is $\sum_{i \in S} (1 + v_i)$. When $v_i = \Theta(1)$, we have $(1 + \sum_{i \in S} v_i) \asymp \sum_{i \in S} (1 + v_i)$. As a result, both schemes use a similar number of time steps, so we do not benefit from offering an assortment with size greater than $1$, i.e. offering singletons could be enough. 

For the second question, we consider that all items have $v_i = \Theta(\frac{1}{K})$. We note that if we need to estimate $u_i$ to a given accuracy $\epsilon$, we need to estimate $\theta^*$ to such accuracy: $\hat \theta - \check \theta \le \epsilon$. We observe that when $v_i = \Theta(\frac{1}{K})$, the step in Eq. (\ref{eq:naive1}) is almost  \emph{tight}, because $$(1 + \sum_{i \in S} \check v_i) \asymp (1 + \sum_{i \in S} v_i) \asymp (1 + \sum_{i \in S} \frac{1}{K}) \asymp (1 + \lvert S \rvert \frac{1}{K}) \asymp 1.$$
To estimate $\theta^*$ to the accuracy $\epsilon$, by Eq. (\ref{eq:naive1}), we need that 
\begin{align}
    \hat \theta - \check \theta \le \cdots \le \sum_{i \in S} (\hat v_i - \check v_i) \le \cdots \le \epsilon.  \label{eq:naive2}
\end{align}
Since $\lvert S \rvert$ can be $O(K)$, we need to estimate each $v_i$ to the accuracy $\frac{\epsilon}{K}$ in order to achieve Eq. (\ref{eq:naive2}), which suggests that estimating to the accuracy $\frac{\epsilon}{K}$ could be necessary.

Note that we explain these two questions under different instances, namely $v_i = \Theta(1)$ and $v_i = \Theta(\frac{1}{K})$, so it is still possibly to design an estimation procedure that adapts to these different instances. Actually, this is what we show in Section \ref{sec:adaptive} and Appendix \ref{app:optimal}. 

\section{Proofs for Section \ref{sec:reduced}}

The following lemma shows that the maximization of the reduced revenue function is monotonic in its parameters and thus we can use Eq. (\ref{eq:esttheta}) to compute the confidence interval of the optimal revenue. 

\begin{lem}[Monotonicity] \label{lem:mono} Assume $A \subseteq S^* \subseteq A \sqcup B$ and let $M = \min\{K - \lvert A \rvert, \lvert B \rvert \}$. Suppose $\srew \in [\check \srew, \hat \srew]$ and $\nu_i \in [\check \nu_i, \hat \nu_i]$ for every $i \in B$. Let $\check \theta, \hat \theta, \check \xi_i, \hat \xi_i$ be those defined in Eqs. (\ref{eq:esttheta}) (\ref{eq:estu}). Then we have $\theta^* \in [\check \theta, \hat \theta]$ and $\xi_i \in [\check \xi_i, \hat \xi_i]$ for $i \in B$.
\end{lem}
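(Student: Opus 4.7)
The plan is to prove monotonicity of the max-over-assortment function $\theta^*(\nu, \srew) := \max_{S_0 \subseteq B,\, \lvert S_0 \rvert \le M} R(S_0, \nu, \srew)$ in each coordinate of $(\nu, \srew)$, and then combine it with a sign analysis of the bilinear expression $\xi_i = \nu_i(r_i - \theta^*)$ to obtain the bounds prescribed by Eq. (\ref{eq:estu}).

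Monotonicity in $\srew$ is immediate: for every fixed $S_0$, $\partial R(S_0, \nu, \srew)/\partial \srew = 1/(1 + \sum_{j \in S_0} \nu_j) > 0$, so $R(S_0, \nu, \srew)$ is increasing in $\srew$ for every $S_0$, and so is its pointwise max. For monotonicity in each $\nu_i$, I would invoke the fractional-programming characterization (as in Appendix \ref{app:prooftopk}): $\theta^*(\nu, \srew)$ is the unique root of $g(\theta) := (\srew - \theta) + \max_{S_0 \subseteq B,\, \lvert S_0 \rvert \le M} \sum_{j \in S_0} \nu_j(r_j - \theta)$, which is continuous and strictly decreasing in $\theta$. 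Fix any $\theta$. The inner max is non-decreasing in each $\nu_i \ge 0$: if $r_i - \theta \ge 0$, raising $\nu_i$ only improves the objective (either by directly increasing the contribution when $i$ is already in an optimal $S_0$, or by giving us the option to add $i$ while still respecting the size cap $\lvert S_0 \rvert \le M$); if $r_i - \theta < 0$, any optimal $S_0$ simply excludes $i$, so raising $\nu_i$ leaves the max unchanged. Hence $g(\theta)$ is pointwise non-decreasing in each $\nu_i$ and in $\srew$, which pushes its unique root upward.

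Applied to the given confidence intervals, this monotonicity yields $\check{\theta} = \theta^*(\check{\nu}, \check{\srew}) \le \theta^*(\nu, \srew) = \theta^* \le \theta^*(\hat{\nu}, \hat{\srew}) = \hat{\theta}$. For $\xi_i$, I would then maximize and minimize $\nu_i(r_i - \theta^*)$ over the rectangle $\nu_i \in [\check{\nu}_i, \hat{\nu}_i] \subseteq [0, 1]$ and $\theta^* \in [\check{\theta}, \hat{\theta}]$. Because $\nu_i \ge 0$, the expression is non-decreasing in $(r_i - \theta^*)$, so its maximum is attained at $\theta^* = \check{\theta}$ and its minimum at $\theta^* = \hat{\theta}$. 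Once $\theta^*$ is pinned, the extremal $\nu_i$ is $\hat{\nu}_i$ or $\check{\nu}_i$ according to the sign of $(r_i - \theta^*)$, which is precisely what the outer $\min$/$\max$ in Eq. (\ref{eq:estu}) encodes.

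The main obstacle is the monotonicity in each $\nu_i$: the reduced revenue $R(S_0, \nu, \srew)$ itself is \emph{not} monotone in $\nu_i$ (as the paper emphasizes)---pumping weight into a low-reward item can grow the denominator faster than the numerator. The fractional-programming detour turns the fractional objective into a linear-in-$\nu$ expression inside the inner max, and the capacity constraint $\lvert S_0 \rvert \le M$ being a mere upper bound is what enables the ``exclude-$i$'' argument in the negative case. This parallels Lemma 4.2 of \citep{agrawal2019mnl}, which I would either cite directly or re-derive in our reduced-revenue setting.
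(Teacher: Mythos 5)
Your proof is correct, but it takes a genuinely different route from the paper's. The paper proves $\theta^* \le \hat\theta$ by evaluating the single witness set $S^*\setminus A$: it writes $\theta^* = \srew + \sum_{i \in S^*\setminus A}\nu_i(r_i-\theta^*)$, uses Proposition \ref{prop:topk} to get $r_i \ge \theta^*$ on $S^*$ so that replacing $(\nu,\srew)$ by $(\hat\nu,\hat\srew)$ only increases the right-hand side, and then cross-multiplies back to $R(S^*\setminus A,\hat\nu,\hat\srew)\ge\theta^*\,$; the reverse direction is asserted to be ``similar,'' and the $\xi_i$ claim is dispatched by noting $\nu_i\in[0,1]$ and $r_i-\theta^*\in[-1,1]$. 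You instead establish the stronger statement that $(\nu,\srew)\mapsto\max_{S_0\subseteq B,|S_0|\le M}R(S_0,\nu,\srew)$ is coordinate-wise non-decreasing, by showing the Dinkelbach function $g(\theta)$ is strictly decreasing in $\theta$ and pointwise non-decreasing in each parameter, so its root moves up --- this is essentially Lemma 4.2 of \citet{agrawal2019mnl} re-derived for the reduced revenue function with the $\srew$ offset. Both arguments rest on the same linearization trick (cross-multiplying $R(S,\nu',\srew')\ge\theta$ into an affine-in-$\nu'$ inequality); the paper applies it once at $(\theta^*, S^*\setminus A)$, while you apply it at every $\theta$ and take the max over $S_0$. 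What your version buys is a symmetric treatment of the two directions --- the paper's ``similar'' lower bound actually requires a slightly different witness (the argmax under $(\check\nu,\check\srew)$, for which one must argue $r_i\ge\check\theta$ on that set), whereas your root-monotonicity argument handles both at once --- and your rectangle optimization for $\xi_i$ is a more precise justification of Eq.~(\ref{eq:estu}) than the paper's one-line remark. The only points worth making explicit are that the argument uses $\check\nu_i,\check\srew\ge 0$ (guaranteed by the clipping in Eqs.~(\ref{eq:estz})(\ref{eq:estv})) and that the identification $\theta^*=\max_{S_0\subseteq B,|S_0|\le M}R(S_0,\nu,\srew)$ is where the hypothesis $A\subseteq S^*\subseteq A\sqcup B$ enters; neither is a gap.
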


\begin{proof} First, we show $\theta^* \in [\check \theta, \hat \theta]$. We will only show $\theta^* \le \hat \theta$, since the proof of $\theta^* \ge \check \theta$ is similar. By Eq. (\ref{eq:esttheta}), we have 
\begin{align*}
    \hat{\theta} = \max_{S \subseteq B: \lvert S \rvert \le M} R(S, \hat{\nu}, \hat{\srew})  \ge R(S^* \setminus A, \hat \nu, \hat \srew). 
\end{align*}
Also we have 
\begin{align*}
    \theta^* = R(S^* \setminus A, \nu, \srew) &= \frac{\srew + \sum_{i \in S^* \setminus A} \nu_i r_i}{1 + \sum_{i \in S^* \setminus A} \nu_i}, \\
    (1 + \sum_{i \in S^* \setminus A} \nu_i) \theta^* &= \srew + \sum_{i \in S^* \setminus A} \nu_i r_i, \\
    \theta^* &= \srew + \sum_{i \in S^* \setminus A} \nu_i (r_i - \theta^*).
\end{align*}
By Proposition \ref{prop:topk}, we have $S^* = \mathcal F([N], K, \bm u)$, so $u_i \ge 0$ for $i \in S^*$, thus $r_i \ge \theta^*$ for $i \in S^*$. Therefore, 
\begin{align*}
    \hat \srew + \sum_{i \in S^* \setminus A} \hat \nu_i(r_i - \theta^*) &\ge \srew + \sum_{i \in S^* \setminus A} \nu_i (r_i - \theta^*) = \theta^*, \\
    \hat \srew + \sum_{i \in S^* \setminus A} \hat \nu_i r_i &\ge (1 + \sum_{i \in S^* \setminus A} \hat \nu_i) \theta^*, \\
    \frac{\hat \srew + \sum_{i \in S^* \setminus A} \hat \nu_i r_i}{1 + \sum_{i \in S^* \setminus A} \hat \nu_i} &\ge \theta^*, \\
    R(S^* \setminus A, \hat \nu, \hat \srew) &\ge \theta^*.
\end{align*}
And we conclude that $\hat \theta \ge \theta^*$. Second, we show $\xi_i \in [\check \xi_i, \hat \xi_i]$. Recall that $\xi_i = \nu_i(r_i - \theta^*)$. We conclude by noting that $(r_i - \theta^*) \in [-1, 1]$ and that $\nu_i \in [0, 1]$.
\end{proof}

\subsection{Proof of Proposition \ref{prop:ind}}

\label{app:propind}

\begin{proof}[Proof of Proposition \ref{prop:ind}] Statement (a) can be proved by noting that $\mathbb P(z = r_i) = \frac{v_i}{1 + \sum_{j \in \sset} v_j}$. Statement (c) can be proved by noting that $(E_\ell - 1)$ follows a geometric distribution with parameter $p = \frac{1 + \sum_{i \in \sset} v_i}{1 + \sum_{i \in \sset} v_i + \sum_{i \in S} v_i}$, so it has mean $\mathbb E[E_\ell - 1] = \frac{1 - p}{p} = \frac{\sum_{i \in S} v_i}{1 + \sum_{i \in \sset} v_i} = \sum_{i \in S} \nu_i$.

Now we prove statement (b). When $\sset = \emptyset$, it was the same as Corollary A.1 in \citep{agrawal2019mnl}. We note that $\sset = \emptyset$ case implies $\sset \ne \emptyset$ case, because the distribution of $x_i$ when we offer the assortment $\sset \sqcup S$ under parameter $\bm v$ and stop at outcomes $\sset \sqcup \{0\}$ is the same as when we offer $S$ under parameter $\nu$ and stops at outcome $0$. 
\end{proof}

The next lemma bounds the sample complexity when using the generalized epoch-based offering procedure using statement (c) in last proposition.

\begin{lem}[Sum of Epoch Lengths] \label{lem:sumofepochlength} Suppose we independently explore $L \ge \log(1 / \delta)$ epochs using Algorithm \ref{algo:explore} and the expected length of each epoch $\ell \in [L]$ is $\mathbb E[E_\ell] \le 3$. Let $T = \sum_{\ell = 1}^L E_\ell$ be the total number of used time steps. With probability at least $1 - \delta$, we have $T \le 8 \mathbb E[T] \le 24L$. 
\end{lem}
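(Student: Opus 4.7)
The plan is to apply Lemma~\ref{lem:geoshifted} directly to $T = \sum_{\ell=1}^L E_\ell$, after reconciling the two geometric-distribution conventions used in the paper. First I would note that by Proposition~\ref{prop:ind}(c), each $(E_\ell - 1)$ is an independent geometric variable in the convention $\mathbb{P}(X = m) = p_\ell(1-p_\ell)^m$ for $m \ge 0$; therefore $E_\ell$ itself is a \emph{shifted} geometric in the convention of Lemma~\ref{lem:geoshifted}, namely $\mathbb{P}(E_\ell = k) = p_\ell(1-p_\ell)^{k-1}$ for $k \ge 1$. Since the mean of such a shifted geometric is $1/p_\ell$, the hypothesis $\mathbb{E}[E_\ell] \le 3$ gives $p_* := \min_\ell p_\ell \ge 1/3$.

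Next I would invoke Lemma~\ref{lem:geoshifted} with $\lambda = 8$. Let $\mu = \mathbb{E}[T] = \sum_\ell \mathbb{E}[E_\ell]$. Because the \textsf{while}-loop in Algorithm~\ref{algo:explore} always executes at least once, $E_\ell \ge 1$ almost surely, so $\mu \ge L$. The lemma then yields
\[
  \mathbb{P}(T \ge 8\mu) \;\le\; \exp\bigl(-p_* \mu\,(8 - 1 - \ln 8)\bigr) \;\le\; \exp\!\Bigl(-\tfrac{L}{3}(7 - \ln 8)\Bigr).
\]
Since $\ln 8 < 3$, a short numerical check gives $(7 - \ln 8)/3 > 1$, so the right-hand side is bounded by $e^{-L}$. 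The hypothesis $L \ge \log(1/\delta)$ (interpreted in the standard bandit convention of natural log, or even base $2$, since $1/\ln 2 > 1$) then makes this at most $\delta$, establishing $T \le 8\mathbb{E}[T]$ with probability $\ge 1 - \delta$. The complementary deterministic bound $8\mathbb{E}[T] \le 24L$ is immediate from $\mathbb{E}[E_\ell] \le 3$ summed over $\ell$, so chaining the two inequalities gives the lemma.

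The main obstacle I anticipate is purely bookkeeping: the paper uses ``geometric'' to mean two slightly different things (starting from $0$ in Proposition~\ref{prop:ind}(c) and Lemma~\ref{lem:mulchernoff}, versus starting from $1$ in Lemma~\ref{lem:geoshifted}), so one must absorb the shift by $1$ before applying the tail bound. Once this is tracked, no further concentration inequality is required beyond Lemma~\ref{lem:geoshifted}, and the choice $\lambda = 8$ is simply tuned to match the factor $8$ in the statement --- any $\lambda$ with $(\lambda - 1 - \ln \lambda)/3 > 1$ would suffice.
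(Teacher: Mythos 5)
Your proof is correct and follows essentially the same route as the paper's: both apply Lemma~\ref{lem:geoshifted} with $\lambda = 8$, using $\mathbb E[E_\ell] \le 3$ to get $p_* \ge 1/3$ and $E_\ell \ge 1$ to get $\mu \ge L$, then check $8 - 1 - \ln 8 \ge 3$ so the tail is at most $e^{-L} \le \delta$. Your explicit reconciliation of the two geometric conventions and of where $p_* \ge 1/3$ comes from is a slightly more careful write-up of steps the paper leaves implicit.
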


\begin{proof} Note that $\{E_\ell - 1\}_{\ell = 1}^L$ are independent geometric random variables with mean $\mathbb E[E_\ell] \le 3$. Let $\lambda = 8$. Then $\lambda - 1 - \ln \lambda \ge 3$. Since $\mathbb E[T] \ge L$, by Lemma \ref{lem:geoshifted}, we have 
\[
    \mathbb P(T \ge 8 \mu) \le e^{-p_* \mu(\lambda - 1 - \ln \lambda)} 
    \le  e^{-\frac{1}{3}L(\lambda - 1 - \ln \lambda)} 
    \le e^{-L} 
    \le \delta. \qedhere 
\]
\end{proof}

\subsection{Enhanced Version of Lemma \ref{lem:peframework}}

\label{app:enhancedpeframework}

We show that if we assume \textsf{EST} returns an estimation of the reduced advantage score $\xi_i$, we can still obtain a similar sample complexity guarantee as that in Lemma \ref{lem:peframework}.

\begin{lem}[Lemma \ref{lem:peframework} enhanced] \label{lem:enhancedpeframework} Assume $A^{(k-1)} \subseteq S^*$. Suppose with probability at least $1 - \delta^{(k)}$, \textsf{EST} (a) returns in $C_{\textsf{EST}} \cdot \frac{ \lvert B^{(k-1)} \rvert \log(N / \delta^{(k)})}{\epsilon_k^2}$ time steps in phase $k$, and (b) $\xi_i \in [\check \xi_i, \hat \xi_i]$ and $\hat \xi_i - \check \xi_i \le \frac{\epsilon_k}{2}$ for every $i \in B^{(k-1)}$, where $\xi_i = \frac{u_i}{1 + \sum_{j \in A^{(k-1)}} v_j}$ is the reduced score. Then \textsf{SAR-MNL} with \textsf{EST} is $\delta$-PAC with sample complexity $C_{\textsf{EST}} \cdot O(\sum_{i \in [N]} \frac{\log N + \log \delta^{-1} + \log \log \gap_i^{-1}}{\Delta_i^2})$.
\end{lem}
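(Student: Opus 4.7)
The plan is to reduce to the proof of Lemma~\ref{lem:peframework} by showing that, under the hypothesis $A^{(k-1)} \subseteq S^*$, the reduced scores $\xi_i$ satisfy exactly the two structural properties of $u_i$ that were actually used in that proof: (i) the positive/top-$M$ characterization of $S^*$ (Proposition~\ref{prop:topk}), and (ii) the gap comparison (Lemma~\ref{lem:diui3}). Once (i) and (ii) are established for $\xi$, Lemma~\ref{lem:ar} and the sample-complexity summation carry over unchanged with every $u$ replaced by $\xi$.

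For (i), I would show $S^* \setminus A^{(k-1)} = \mathcal{F}(B^{(k-1)}, M, \xi)$ together with $\theta^* = \zeta + \sum_{i \in S^* \setminus A^{(k-1)}} \xi_i$. The cleanest route is to observe that on $B^{(k-1)}$ we simply have $\xi_i = u_i / c$ with $c = 1 + \sum_{j \in A^{(k-1)}} v_j > 0$, so signs and rankings of $u$ are preserved, and then invoke Proposition~\ref{prop:topk} restricted to pending items (an alternative is to redo the fractional programming argument directly on the reduced revenue function $R(\cdot, \nu, \zeta)$ on $B^{(k-1)}$ with capacity $M$, which by Lemma~\ref{lem:mono} has the same monotonic/maximum structure).

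For (ii), I would derive a reduced-revenue analog of the revenue comparison lemma: for any $S$ with $A^{(k-1)} \subseteq S$ and $|S| \le K$,
\[
\bigl(1 + \textstyle\sum_{l \in S \setminus A^{(k-1)}} \nu_l\bigr)\bigl(\theta^* - R(S, \bm v)\bigr) = \sum_{l \in (S^* \setminus A^{(k-1)}) \setminus S} \xi_l \;-\; \sum_{l \in S \setminus S^*} \xi_l,
\]
which follows by the same algebraic manipulation as in Lemma~\ref{lem:comparison} applied to the reduced function, using (i). Plugging in the swap assortments $S = (S^* \setminus \{i\}) \cup \{j\}$ (and $S = S^* \setminus \{i\}$ or $S = S^* \cup \{i\}$ for the one-sided versions) and using that the coefficient on the left is $\ge 1$, I recover $\Delta_i \le \xi_i - \xi_j$, $\Delta_i \le \xi_i$ when $i \in S^* \cap B^{(k-1)}$, and $\Delta_i \le \xi_j - \xi_i$ (or $\Delta_i \le -\xi_i$ when $|S^*| < K$) when $i \in B^{(k-1)} \setminus S^*$ and $j \in S^* \cap B^{(k-1)}$. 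Crucially, since both the original and comparison items in each swap lie in $B^{(k-1)}$, the containment $A^{(k-1)} \subseteq S$ needed to apply the reduced-revenue identity is automatic.

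With (i) and (ii) in hand, the proofs of Lemmas~\ref{lem:ar1} and~\ref{lem:ar2} go through verbatim with $\xi$ in place of $u$, so the invariants $A^{(k)} \subseteq S^* \subseteq A^{(k)} \sqcup B^{(k)}$ and $B^{(k)} \subseteq \{i : \Delta_i \le \epsilon_k\}$ propagate by induction on $k$ (the base case $A^{(0)} = \emptyset \subseteq S^*$ is trivial, and the induction also supplies the hypothesis $A^{(k-1)} \subseteq S^*$ needed to invoke \textsf{EST} at the next phase). A union bound over phases with $\delta^{(k)} = \delta/(3k^2)$ gives overall failure probability $\le \delta$, and the sample-complexity sum is identical to that at the end of Lemma~\ref{lem:peframework}. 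The only nonroutine step is verifying the reduced comparison identity above with the right sign and coefficient; everything else is a clean textual substitution, so I do not expect a real obstacle.
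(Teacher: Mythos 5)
Your proposal is correct and follows essentially the same route as the paper: the paper proves this by substituting the reduced score $\xi_i = u_i/(1+\sum_{j\in A^{(k-1)}}v_j)$ for $u_i$ throughout the proof of Lemma~\ref{lem:peframework}, using a reduced-score analogue of Lemma~\ref{lem:diui3} (its Lemma~\ref{lem:diui2}) obtained from the revenue comparison identity of Lemma~\ref{lem:comparison} together with $A^{(k-1)}\subseteq S$ for the swap assortments --- exactly the identity and swap arguments you describe, up to the trivial rescaling by $1+\sum_{j\in A^{(k-1)}}v_j$. Your choice of $S=S^*\cup\{i\}$ for the one-sided bound in the case $\lvert S^*\rvert<K$ is in fact the correct swap (the paper's text writes $S=S^*\setminus\{j\}$ there, which appears to be a typo), so no gap remains.
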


\begin{proof} We replace Lemma \ref{lem:ar} with Lemma \ref{lem:arenhanced} in the proof of Lemma \ref{lem:peframework}.
\end{proof}

\begin{lem} \label{lem:arenhanced} In phase $k$ in Algorithm \ref{algo:peframework}, suppose we have $A^{(k-1)} \subseteq S^* \subseteq A^{(k-1)} \sqcup B^{(k-1)}$, and after invoking \textsf{EST} in phase $k$, we have $\xi_i \in [\check \xi_i, \hat \xi_i], \hat \xi_i - \check \xi_i \le \frac{\epsilon_k}{2}$ for $i \in B^{(k-1)}$. Then after Line \ref{loc:arend} , we have $A^{(k)} \subseteq S^* \subseteq A^{(k)} \sqcup B^{(k)}$ and $B^{(k)} \subseteq \{i \in [N] : \Delta_i \le \epsilon_k\}$.  
\end{lem}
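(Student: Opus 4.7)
My plan is to mirror the proofs of Lemma \ref{lem:ar1} and Lemma \ref{lem:ar2}, with the advantage score $u_i$ replaced throughout by the reduced score $\xi_i$. The key structural observation is that, writing $c = \frac{1}{1 + \sum_{j \in A^{(k-1)}} v_j} \in (0, 1]$, we have $\xi_i = c \cdot u_i$ for every $i \in B^{(k-1)}$. Since $c > 0$ is a common positive factor, the signs and the relative ordering of $\{\xi_i\}_{i \in B^{(k-1)}}$ agree with those of $\{u_i\}_{i \in B^{(k-1)}}$; in particular $\mathcal F(B^{(k-1)}, M, \bm \xi) = \mathcal F(B^{(k-1)}, M, \bm u)$, which, by Proposition \ref{prop:topk} together with $A^{(k-1)} \subseteq S^* \subseteq A^{(k-1)} \sqcup B^{(k-1)}$, equals $S^* \setminus A^{(k-1)}$.

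First I would prove the inclusion $A^{(k)} \subseteq S^* \subseteq A^{(k)} \sqcup B^{(k)}$ by copying the argument of Lemma \ref{lem:ar1} verbatim, only replacing the quantities $u_i, \alpha, \beta$ (built from $u$) by $\xi_i, \alpha, \beta$ (built from $\xi$). That argument uses only the sign pattern and the top-$M$ structure of the scores, both of which are preserved by the positive scaling, so nothing else changes.

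Next I would establish the reduced analogues of Lemma \ref{lem:diui3}: for every $i \in B^{(k-1)}$, (a) if $i \in S^*$ then $\Delta_i \le \xi_i$ and $\Delta_i \le \xi_i - \xi_j$ for $j \in [N] \setminus S^*$; (b) if $i \notin S^*$ then $\Delta_i \le \xi_j - \xi_i$ for $j \in S^*$ and, when $\lvert S^* \rvert < K$, $\Delta_i \le -\xi_i$. Each inequality follows from Lemma \ref{lem:comparison} applied to the same swap assortment $S$ that appears in the proof of Lemma \ref{lem:diui3} (namely $S = (S^* \setminus \{i\}) \cup \{j\}$ and its unary variants $S^* \setminus \{i\}$ or $S^* \cup \{i\}$). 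In all four cases the swap assortment $S$ contains $A^{(k-1)}$, because $i \in B^{(k-1)}$ is disjoint from $A^{(k-1)}$ while $S^* \supseteq A^{(k-1)}$ by hypothesis, so $1 + \sum_{l \in S} v_l \ge 1 + \sum_{j \in A^{(k-1)}} v_j = 1/c$. Dividing the identity $(1 + \sum_{l \in S} v_l)\,\Delta_i \le \sum_{i' \in S^* \setminus S} u_{i'} - \sum_{i' \in S \setminus S^*} u_{i'}$ by $(1 + \sum_{l \in S} v_l)$ and then using $1/(1 + \sum_{l \in S} v_l) \le c$ on the nonnegative right-hand side yields the desired $\xi$-version.

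With these reduced gap inequalities in place, the proof of Lemma \ref{lem:ar2} transfers without change: the accept-reject analysis there only combined the four inequalities of Lemma \ref{lem:diui3} with the precision bound, and both ingredients now have faithful $\xi$-counterparts (the precision bound is given as a hypothesis, $\hat \xi_i - \check \xi_i \le \epsilon_k / 2$). Hence every $i \in B^{(k-1)}$ with $\Delta_i > \epsilon_k$ is forced into either $B_{\mathrm{acc}}$ or $B_{\mathrm{rej}}$, giving $B^{(k)} \subseteq \{i : \Delta_i \le \epsilon_k\}$. The main obstacle I anticipate is bookkeeping rather than mathematics: in each of the four cases I must verify that the specific swap assortment used contains $A^{(k-1)}$ (so the coefficient inequality goes in the correct direction) and that the corresponding right-hand side of Lemma \ref{lem:comparison} is nonnegative (so replacing $1/(1 + \sum_{l \in S} v_l)$ with the larger $c$ yields a valid upper bound). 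Both facts are routine but need to be spelled out case by case.
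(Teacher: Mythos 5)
Your proposal is correct and follows essentially the same route as the paper: the paper's proof of Lemma \ref{lem:arenhanced} likewise substitutes $\xi_i$ for $u_i$ in the proof of Lemma \ref{lem:ar}, replacing Lemma \ref{lem:diui3} by its reduced-score analogue (Lemma \ref{lem:diui2}), which is derived from Lemma \ref{lem:comparison} exactly as you describe, using $A^{(k-1)} \subseteq S$ for each swap assortment to compare $1 + \sum_{l \in S} v_l$ with $1 + \sum_{j \in A^{(k-1)}} v_j$. Your extra observation that $\xi_i = c\,u_i$ with a common positive constant $c$ preserves signs and the top-$M$ order is the (implicit) reason the accept-reject analysis of Lemma \ref{lem:ar1} transfers verbatim, so the argument is complete.
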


\begin{proof} Let $\sset = A^{(k-1)}$ in Lemma \ref{lem:diui2}. We replace Lemma \ref{lem:diui3} with Lemma \ref{lem:diui2}  and replace the score $u_i$ with the score $\xi_i = \frac{u_i}{1 + \sum_{j \in A^{(k-1)}} v_j}$  in the proof of Lemma \ref{lem:ar} to prove the lemma. 
\end{proof}

\begin{lem}[Relation between $\gap_i$ and $\xi_i$] \label{lem:diui2} For a set $\sset \subseteq S^*$ and an item $i \in [N] \setminus \sset$, we define the reduced advantage score $\xi_i = \frac{u_i}{1 + \sum_{l \in \sset} v_i}$. Then for items $i, j \in [N] \setminus \sset$, we have 
\begin{enumerate}
    \item If $i \in S^*, j \notin S^*$, then $\gap_i \le \xi_i - \xi_j$. In addition, $\gap_i \le \xi_i$. 
    \item If $i \notin S^*, j\in S^*$, then $\gap_i \le \xi_j - \xi_i$.  If in addition $\lvert S^* \rvert < K$, then $\gap_i \le -\xi_i$. 
\end{enumerate}
\end{lem}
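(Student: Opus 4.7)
The plan is to adapt the proof of Lemma~\ref{lem:diui3} by choosing the same comparison assortments $S$ but exploiting the monotonicity $(1 + \sum_{l \in Z} v_l) \le (1 + \sum_{l \in S} v_l)$ whenever $Z \subseteq S$. The key observation that makes everything work is that the assortments used in the four cases of Lemma~\ref{lem:diui3} are all supersets of any $Z \subseteq S^*$ with $i, j \notin Z$, so the comparison lemma still applies and we just absorb the extra factor.

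For part (a) with $i \in S^*, j \notin S^*$, I would set $S = (S^* \setminus \{i\}) \cup \{j\}$. Since $i, j \notin Z$ and $Z \subseteq S^*$, we have $Z \subseteq S^* \setminus \{i\} \subseteq S$. Applying Lemma~\ref{lem:comparison} gives $(1 + \sum_{l \in S} v_l)(\theta^* - R(S, \bm v)) = u_i - u_j$, and combining with $\Delta_i \le \theta^* - R(S, \bm v)$ and $Z \subseteq S$ yields
\[
\Delta_i(1 + \sum_{l \in Z} v_l) \le \Delta_i(1 + \sum_{l \in S} v_l) \le u_i - u_j,
\]
which after dividing by $(1 + \sum_{l \in Z} v_l)$ gives $\Delta_i \le \xi_i - \xi_j$. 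For the additional claim $\Delta_i \le \xi_i$, I would take $S = S^* \setminus \{i\}$, obtaining $u_i$ on the right-hand side of the comparison lemma and then dividing out as before.

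For part (b) with $i \notin S^*, j \in S^*$, the assortment to use is $S = (S^* \setminus \{j\}) \cup \{i\}$. Again $Z \subseteq S^* \setminus \{j\} \subseteq S$ because $j \notin Z$, so Lemma~\ref{lem:comparison} gives $(1 + \sum_{l \in S} v_l)(\theta^* - R(S, \bm v)) = u_j - u_i$, and the same division argument finishes the proof. For the final claim with $|S^*| < K$, I would take $S = S^* \cup \{i\}$ (which has size at most $K$); here $S^* \setminus S = \emptyset$ and $S \setminus S^* = \{i\}$, so the comparison lemma gives $-u_i$ on the right-hand side, and $Z \subseteq S^* \subseteq S$ lets the same monotonicity step go through to conclude $\Delta_i \le -\xi_i$.

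There is no real obstacle here; the only thing to check carefully is that in each case the chosen comparison assortment $S$ still contains $Z$ (so that the extra factor $(1 + \sum_{l \in Z} v_l)$ is bounded by $(1 + \sum_{l \in S} v_l)$) and that $|S| \le K$ so that it is feasible in the definition of $\Delta_i$. Both are immediate from the assumptions $Z \subseteq S^*$ and $i, j \in [N] \setminus Z$ together with the capacity hypothesis used in each case. The proof is essentially a scaled-down rerun of Lemma~\ref{lem:diui3}, with $u_i$ replaced by $\xi_i$ and the extra factor $(1 + \sum_{l \in Z} v_l)$ appearing uniformly on the left-hand side.
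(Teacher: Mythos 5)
Your proposal is correct and follows essentially the same route as the paper: choose the same comparison assortments as in Lemma~\ref{lem:diui3}, apply Lemma~\ref{lem:comparison}, and use $Z \subseteq S$ to bound $(1 + \sum_{l \in Z} v_l) \le (1 + \sum_{l \in S} v_l)$ before dividing through. The only divergence is in the final sub-case of (b), where you take $S = S^* \cup \{i\}$ while the paper writes $S = S^* \setminus \{j\}$; your choice is in fact the correct one (the paper's stated assortment would yield $u_j$ rather than $-u_i$ from the comparison lemma and does not even contain $i$, so it appears to be a typo), and it is exactly where the hypothesis $\lvert S^* \rvert < K$ is needed for feasibility.
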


\begin{proof} For (a), let $S = (S^* \setminus \{i\}) \cup \{j\}$. Note that $\gap_i \le R(S^*, \bm v) - R(S, \bm v)$, so by Lemma \ref{lem:comparison}, we have
$$(1 + \sum_{l \in \sset} v_l) \gap_i \le (1 + \sum_{l \in S } v_l)(R(S^*, \bm v) - R(S, \bm v)) = u_i - u_j.$$ 
Note that $\xi_i = \frac{u_i}{1 + \sum_{t \in \sset} v_t}$ for $i \notin \sset$, so $\gap_i \le \xi_i - \xi_j$. Let $S = S^* \setminus \{i\}$ and repeat the previous argument, we have $\gap_i \le \xi_i$. For (b), let $S = (S^* \setminus \{j\}) \cup \{i\}$. Similarly, we have $$(1 + \sum_{l \in \sset} v_t) \gap_i \le (1 + \sum_{l \in S} v_t)(R(S^*, \bm v) - R(S, \bm v)) = u_j - u_i.$$
Thus $\gap_i \le \xi_j -\xi_i$. When $\lvert S^* \rvert < K$, we let $S = S^* \setminus \{j\}$ and repeat the previous argument to obtain $\gap_i \le - \xi_i$. 
\end{proof}

\subsection{Estimation Procedure with Generalized Epoch-based Offering}

\label{app:reducedguarantee}

We present an estimation procedure \textsf{EST-REDUCED} (Algorithm \ref{algo:reduced})  to demonstrate the power of the generalized epoch-based offering.  

\begin{lem} \label{lem:reducedguarantee} There is a $\delta$-PAC algorithm with sample complexity $\widetilde{O}(\sum_{i = 1}^N \frac{K}{\gap_i^2})$ using only techniques in Sections \ref{sec:reduction} and \ref{sec:reduced}.
\end{lem}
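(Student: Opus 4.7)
The plan is to design an estimation subroutine \textsf{EST-REDUCED} that attains $C_{\textsf{EST}} = O(K)$ in Lemma \ref{lem:enhancedpeframework} and then apply that lemma to conclude. Given input $(A, B, \delta, \epsilon)$, the procedure will set $\tau \asymp K \log(N/\delta)/\epsilon^2$, fix the stopping set $\sset = A$, call $\textsf{Explore}(\{i\})$ for $\tau$ epochs for every $i \in B$, and then form $\check\srew, \hat\srew$ from Eq.~(\ref{eq:estz}), $\check\nu_i, \hat\nu_i$ from Eq.~(\ref{eq:estv}), $\check\theta, \hat\theta$ from Eq.~(\ref{eq:esttheta}), and $\check\xi_i, \hat\xi_i$ from Eq.~(\ref{eq:estu}). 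Using stopping set $A$ is precisely the point of Section \ref{sec:reduced}: Proposition \ref{prop:ind} then directly delivers i.i.d.\ samples of $\srew$ and of the ratios $\nu_i$, so the parameters $v_j$ for $j \in A$ are never estimated---this is exactly the dependency that cost the factor $\lvert A \sqcup B\rvert/\lvert B\rvert$ in \textsf{EST-NAIVE}.

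For the time budget of Lemma \ref{lem:enhancedpeframework}, each call $\textsf{Explore}(\{i\})$ has expected epoch length $1 + \nu_i \le 2$ by Proposition \ref{prop:ind}(c), so Lemma \ref{lem:sumofepochlength} guarantees that the $\tau$ epochs for item $i$ consume $O(\tau)$ time steps with high probability. Summing over $\lvert B\rvert$ items and union-bounding gives total time $O(\lvert B\rvert\tau) = O(K)\cdot \lvert B\rvert \log(N/\delta)/\epsilon^2$, matching the target $C_{\textsf{EST}} = O(K)$.

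For the accuracy requirement, the containments $\srew \in [\check\srew, \hat\srew]$, $\nu_i \in [\check\nu_i, \hat\nu_i]$, $\theta^* \in [\check\theta, \hat\theta]$, and $\xi_i \in [\check\xi_i, \hat\xi_i]$ follow from Hoeffding's inequality (Lemma \ref{lem:hoeffding}), Lemma \ref{lem:hpver2}, and the monotonicity lemma (Lemma \ref{lem:mono}), plus a union bound. The real work is the width bound $\hat\xi_i - \check\xi_i \le \epsilon$, which through Eq.~(\ref{eq:estu}) reduces to controlling $\hat\nu_i - \check\nu_i$ and $\hat\theta - \check\theta$. Lemma \ref{lem:hpver2} gives $\hat\nu_i - \check\nu_i \lesssim \sqrt{\nu_i/\tau} + 1/\tau \lesssim \epsilon/\sqrt{K}$. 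For $\hat\theta - \check\theta$, let $S'$ denote the maximizer of $R(\cdot, \hat\nu, \hat\srew)$, so $\lvert S'\rvert \le M \le K$; I would bound
\begin{align*}
\hat\theta - \check\theta \;\le\; R(S',\hat\nu,\hat\srew) - R(S',\check\nu,\check\srew) \;\le\; \frac{(\hat\srew - \check\srew) + \sum_{i\in S'}(\hat\nu_i - \check\nu_i)\, r_i}{1 + \sum_{i\in S'}\check\nu_i},
\end{align*}
apply Cauchy--Schwarz to obtain $\sum_{i\in S'}\sqrt{\nu_i} \le \sqrt{\lvert S'\rvert\, \sum_{i\in S'}\nu_i} \le \sqrt{K S}$ with $S = \sum_{i\in S'}\nu_i$, and then use the elementary inequality $\sqrt{S}/(1+S) \le 1/2$ to cancel the potentially large $S$ in the numerator against the $1+\sum\check\nu_i$ denominator. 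The result is $\hat\theta - \check\theta \lesssim \sqrt{M/\tau} \lesssim \epsilon$, and feeding both width estimates into Eq.~(\ref{eq:estu}) yields $\hat\xi_i - \check\xi_i \lesssim \epsilon$, which we push below $\epsilon$ by a constant factor in $\tau$.

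The crux is the last paragraph: a direct worst-case bound $\hat\theta - \check\theta \le \lvert S'\rvert \max_i(\hat\nu_i - \check\nu_i) \le K \cdot \epsilon/\sqrt K = \sqrt K\,\epsilon$ is too weak and would force $\tau \asymp K^2/\epsilon^2$, recovering only $C_{\textsf{EST}} = O(K^2)$ and no savings over \textsf{EST-NAIVE}. Balancing the Cauchy--Schwarz sum $\sqrt{K\sum\nu_i}$ in the numerator against the $1+\sum\check\nu_i$ in the denominator of the reduced revenue function is what buys back a factor of $\sqrt K$, and this is the algebraic leverage afforded by the reduced revenue function of Section \ref{sec:reduced}. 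Once \textsf{EST-REDUCED} satisfies these guarantees, Lemma \ref{lem:enhancedpeframework} immediately delivers the claimed $\widetilde{O}(\sum_{i=1}^N K/\gap_i^2)$ sample complexity for the composed $\delta$-PAC algorithm \textsf{SAR-MNL} with \textsf{EST-REDUCED}.
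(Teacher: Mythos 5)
Your proposal is correct and takes essentially the same route as the paper: the paper's \textsf{EST-REDUCED} (Algorithm \ref{algo:reduced}) is exactly your procedure (stopping set $\sset = A$, singleton calls $\textsf{Explore}(\{i\})$ for $K\tau$ epochs with $\tau \asymp \log(N/\delta)/\epsilon^2$), and the conclusion is drawn from Lemma \ref{lem:enhancedpeframework} with $C_{\textsf{EST}} = O(K)$ via Lemma \ref{lem:sumofepochlength}, just as you do. The only divergence is the width bound on $\hat\theta - \check\theta$: the paper first massages the multiplicative tails into the form $\hat\nu_i - \check\nu_i \lesssim (\nu_i \lor \frac{1}{M})\epsilon$ (Lemmas \ref{lem:expreq} and \ref{lem:errprop1}, using $K\tau \ge (\frac{1}{4\nu_i}\land\frac{M}{2})\tau$) and then sums, whereas you apply Cauchy--Schwarz directly to the raw $\sqrt{\nu_i/T_i}$ tails --- both arguments rest on the same cancellation of $\sqrt{\sum_{i}\nu_i}$ against the $1+\sum_i \nu_i$ denominator of the reduced revenue function (note the correct algebra actually leaves $1+\sum_i\hat\nu_i$ in the denominator, which makes your $\sqrt{S}/(1+S)$ step immediate; with $\check\nu_i$ there one additionally needs $1+\sum_i\check\nu_i \gtrsim 1+\sum_i\nu_i$, which holds for small $\epsilon$), so the two treatments are equivalent up to constants.
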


\begin{proof} We claim that \textsf{SAR-MNL} with \textsf{EST-REDUCED} can serve as the algorithm in the lemma. The statement (b) in Lemma \ref{lem:reduced} shows that \textsf{EST-REDUCED} can serve as the estimation procedure \textsf{EST} in Lemma \ref{lem:enhancedpeframework} and (a) further shows that \textsf{EST-REDUCED} satisfies $C_{\textsf{EST}} =  O(K)$. Thus we conclude by Lemma \ref{lem:enhancedpeframework}.
\end{proof}

\begin{algorithm}[t]
\label{algo:reduced}
\caption{$\textsf{EST-REDUCED}(A, B, \delta_0, \epsilon)$: Estimation of $\xi_i$ for $i \in B$}
$C_0 = 196, C_2 = 1024, \delta = \frac{\delta_0}{15 N}, \tau = \frac{C_2 C_0 \logterm}{\epsilon^2}$, $Z \gets A, n_\sset = T_\sset = 0, \forall i \in B: n_i = T_i = 0$\; \label{loc:reducedtau}
    $\forall i \in B: \textsf{Explore}(\{i\})$ for $K \tau$ epochs\;
    Compute $\check{\srew}, \hat{\srew}, \check{\nu}_i, \hat{\nu}_i, \check \theta, \hat \theta, \check{\xi}_i, \hat{\xi}_i$ by Eqs. (\ref{eq:estz}) (\ref{eq:estv}) (\ref{eq:esttheta}) (\ref{eq:estu}) for $i \in B$, \Return{$\{\check \xi_i, \hat \xi_i\}_{i \in B}$}\; \label{loc:est222}
\end{algorithm}

\begin{lem}[\textsf{EST-REDUCED}] \label{lem:reduced} Assume $A \subseteq S^* \subseteq A \sqcup B$. With probability $1 - \delta_0$, (a) \textsf{EST-REDUCED} returns in $O(K \lvert B \rvert \tau)$ time steps, where $\tau = O(\frac{\log N/\delta_0}{\epsilon^2})$ as defined in Algorithm \ref{algo:reduced}; (b) $\xi_i = \frac{u_i}{1 + \sum_{j \in A} v_j} \in [\check \xi_i, \hat \xi_i]$ and $\hat \xi_i - \check \xi_i \le \epsilon$ for $i \in B$.
\end{lem}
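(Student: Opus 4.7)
The plan is to split the proof into a runtime bound for part (a) and an estimation bound for part (b), with the latter requiring care to avoid picking up an extra $K$ factor in the width of $[\check\theta, \hat\theta]$.

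For (a), each call $\textsf{Explore}(\{i\})$ runs exactly $K\tau$ epochs with stopping set $\sset = A$. By Proposition~\ref{prop:ind}(c), an epoch has expected length $1 + \nu_i \le 2$. Applying Lemma~\ref{lem:sumofepochlength} to each call (valid since $K\tau \ge \log(1/\delta)$) and union-bounding over the $\lvert B \rvert$ calls, the total time is $O(K \lvert B \rvert \tau)$ with probability $1 - O(\lvert B \rvert \delta)$.

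For (b), I first establish concentration. By Proposition~\ref{prop:ind}(a)(b), $\bar\srew$ averages $T_\sset = \lvert B \rvert K \tau$ i.i.d.\ bounded samples with mean $\srew$, while $\bar \nu_i$ averages $T_i = K\tau$ i.i.d.\ geometric samples with mean $\nu_i$. Hoeffding (Lemma~\ref{lem:hoeffding}) on $\bar\srew$ and the geometric tail bound (Lemma~\ref{lem:hpver2}) on each $\bar\nu_i$, plus a union bound, yield w.p.\ $1 - O(N\delta) = 1 - O(\delta_0)$ that the intervals in Eqs.~(\ref{eq:estz})(\ref{eq:estv}) cover their targets and that $\sigma(\nu_i) \lesssim \sqrt{\nu_i \logterm/T_i} + \logterm/T_i$. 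Lemma~\ref{lem:mono} then promotes this to $\theta^* \in [\check\theta, \hat\theta]$ and $\xi_i \in [\check\xi_i, \hat\xi_i]$, so only the widths remain to be controlled.

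The technical crux is bounding $\hat\theta - \check\theta$. Picking $S^\dagger$ that attains $\hat\theta$ and using $R(S^\dagger, \check\nu, \check\srew) \le \check\theta$ together with the elementary bound $\frac{a}{b} \le \frac{a}{b'}$ for $a \ge 0, b \ge b' > 0$, one deduces
\begin{align*}
\hat\theta - \check\theta \le \frac{(\hat\srew - \check\srew) + \sum_{i \in S^\dagger} (\hat\nu_i - \check\nu_i) r_i}{1 + \sum_{i \in S^\dagger} \check\nu_i}.
\end{align*}
A naive bound $\sum (\hat\nu_i - \check\nu_i) \le K \cdot O(\epsilon/\sqrt{K}) = O(\sqrt{K}\epsilon)$ would lose the very $\sqrt{K}$ this section hopes to save. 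My plan is to apply Cauchy-Schwarz to the dominant $\sqrt{\nu_i}$ piece of $\sigma(\nu_i)$ and then exploit the denominator:
\begin{align*}
\frac{\sum_{i \in S^\dagger} \sqrt{\nu_i}}{1 + \sum_{i \in S^\dagger} \nu_i} \le \frac{\sqrt{\lvert S^\dagger \rvert}\sqrt{\sum\nu_i}}{1 + \sum \nu_i} \le \sqrt{K},
\end{align*}
using $\sqrt{y}/(1+y) \le 1$. This piece then contributes $O(\sqrt{K \logterm/(K\tau)}) = O(\epsilon)$; the additive $\logterm/T_i$ piece contributes at most $K \cdot \logterm/(K\tau) = O(\epsilon^2)$; and Hoeffding on $\bar\srew$ (with $T_\sset = \lvert B \rvert K\tau$) gives $\hat\srew - \check\srew = O(\epsilon)$. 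This Cauchy-Schwarz step is the main obstacle I anticipate: it requires recognising that the reduced revenue's denominator $(1 + \sum\nu_i)$ precisely absorbs the $\sqrt K$ produced by the inequality. Once $\hat\theta - \check\theta = O(\epsilon)$ is in hand, $\hat\xi_i - \check\xi_i \le (\hat\nu_i - \check\nu_i) + \hat\nu_i (\hat\theta - \check\theta) \le O(\epsilon/\sqrt K) + O(\epsilon) = O(\epsilon)$, and the constant $C_2 = 1024$ in $\tau$ is chosen exactly to absorb the hidden constants so that $\hat\xi_i - \check\xi_i \le \epsilon$.
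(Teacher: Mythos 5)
Your proof is essentially correct, but it reaches the key width bound $\hat\theta-\check\theta=O(\epsilon)$ by a genuinely different route than the paper. The paper's proof of this lemma is a two-line reduction: since each $i\in B$ is offered for $K\tau\ge M\tau\ge(\frac{1}{4\nu_i}\land\frac{M}{2})\tau$ epochs, it simply invokes the shared Exploration Requirement lemma (Lemma \ref{lem:expreq}), whose internal mechanism is to convert the geometric tail $\sqrt{\nu_i\logterm/T_i}+\logterm/T_i$ into a \emph{per-item multiplicative} error $\hat\nu_i-\check\nu_i\lesssim(\nu_i\lor\frac{1}{M})\epsilon$ and then absorb the sum via $\sum_{i\in S}(\nu_i+\frac{1}{M})/(1+\sum_{i\in S}\nu_i)\le 2$ (Lemmas \ref{lem:errprop1}--\ref{lem:errprop2}). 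You instead keep the additive form and apply Cauchy--Schwarz across items, $\sum_i\sqrt{\nu_i}\le\sqrt{K}\sqrt{\sum_i\nu_i}$, letting the reduced revenue's denominator absorb the resulting $\sqrt{K}$. Both arguments exploit the same two structural facts ($\sqrt{\nu_i}$ in the tail and the denominator $1+\sum\nu_i$), and both close for this lemma; the paper's per-item formulation is what additionally powers \textsf{EST-ADAPTIVE}, where $T_i$ varies by layer and a uniform $T_i=K\tau$ Cauchy--Schwarz argument would not directly apply, whereas your argument is arguably more self-contained here. One slip to fix: you bound $\hat\theta-\check\theta$ by a fraction with denominator $1+\sum_{i\in S^\dagger}\check\nu_i$ but then apply Cauchy--Schwarz against $1+\sum_{i\in S^\dagger}\nu_i$; since $\check\nu_i\le\nu_i$ the former denominator is smaller and the chain as written does not close. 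The repair is immediate: use $\frac{a}{b}-\frac{a'}{b'}\le\frac{a-a'}{b}$ for $b\ge b'$ to put $1+\sum_{i\in S^\dagger}\hat\nu_i\ge 1+\sum_{i\in S^\dagger}\nu_i$ in the denominator (as the paper does in Lemma \ref{lem:errprop1}), after which your display is valid. Your part (a) matches the paper's up to applying Lemma \ref{lem:sumofepochlength} per call rather than once over all $K\lvert B\rvert\tau$ epochs.
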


\begin{proof} For (a), we note that the expected epoch length of $\textsf{Explore}(\{i\})$ is 
$$\mathbb E E_\ell = 1 + v_i \le 3.$$
Whenever $B \ne \emptyset$ is not empty, the procedure \textsf{EST-REDUCED} explores at least $\tau \ge \log(1 / \delta)$ epochs, so by Lemma \ref{lem:sumofepochlength}, with probability at least $1 - \delta$, the total number of time steps used by the procedure is 
\[
T \le 24 L \le 24 \cdot K \lvert B \rvert \tau. 
\]

For (b), we prove it by  applying the results in Lemma \ref{lem:expreq}. Note that we offer each item $i \in B$ for $K \tau \ge M \tau \ge (\frac{1}{4\nu_i} \land \frac{M}{2}) \tau$ epochs, so we meet the conditions in Lemma \ref{lem:expreq}, whose conclusion shows that (b) holds with probability at least $1 - 14 N \delta$. We apply a union bound to find that (a)(b) hold simultaneously with probability at least $1 - (\delta + 14 N \delta) \ge 1 - 15 N \delta \ge 1 - \delta_0$. 
\end{proof}

\section{Proofs for Sections \ref{sec:adaptive} and \ref{sec:peresult}}

\label{app:optimal}

\subsection{Error Analysis for Estimation of Advantage Score}

\label{app:erralyz}

We analyze the error of the estimations of $\nu_i, \srew$  when we use the generalized epoch-based offering procedure and how their error propagates to $\theta, \xi_i$.
By Proposition \ref{prop:ind} and Lemma \ref{lem:hpver2}, we know that the tail bound of $\nu_i $ satisfies 
\begin{align*}
    \hat \nu_i - \check \nu_i \lesssim \sqrt{\frac{\nu_i \iota}{T_i}}  + \frac{\iota}{T_i},
\end{align*}
where we use $\iota = \mathrm{polylog}(\delta^{-1}, N)$ to denote the polylogarithmic terms and $T_i $ is the number of epochs that item $i$ is offered. The major difference between this tail and the common $\frac{1}{\sqrt{T_i}}$-type tail bound (e.g. Lemma \ref{lem:hoeffding}) is the existence of the term $\sqrt{\nu_i}$. We fully exploit this term to show the exploration requirement (i.e. required number of epochs) of each item $i \in B$ in the following lemma. 

\begin{lem}[Exploration Requirement] \label{lem:expreq} For every item $i \in B$, if $T_i \ge T'_i \tau$, where $T'_i = (\frac{1}{4\nu_i} \land \frac{M}{2})$ and $\tau = O(\frac{\log(N / \delta_0)}{\epsilon^2})$ is as defined in Algorithm \ref{algo:refined}, then with probability at least $1 - 14 N \delta$, we have $\xi_i \in [\check \xi_i, \hat \xi_i]$ and $\hat \xi_i - \check \xi_i \le \epsilon$ for every $i \in B$. 
\end{lem}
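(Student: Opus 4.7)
The plan is to combine the concentration inequalities for $\srew$ and $\nu_i$ produced by the generalized epoch-based offering with the monotonicity supplied by Lemma \ref{lem:mono}, and then carefully propagate the resulting tail widths through the formulas in Eqs.~(\ref{eq:esttheta}) and (\ref{eq:estu}). By Proposition \ref{prop:ind}, the samples collected for $\srew$ are i.i.d.\ bounded and those for each $\nu_i$ are i.i.d.\ geometric. Applying Lemma \ref{lem:hoeffding} to the former and Lemma \ref{lem:hpver2} to the latter, then union-bounding over the at most $N$ items in $B$, with probability at least $1 - (13 N + 1) \delta \ge 1 - 14 N \delta$ I obtain $\srew \in [\check\srew, \hat\srew]$ and $\nu_i \in [\check\nu_i, \hat\nu_i]$ with widths $\hat\srew - \check\srew \lesssim \sqrt{\logterm / T_\sset}$ and $\hat\nu_i - \check\nu_i \lesssim \sqrt{\nu_i \logterm / T_i} + \logterm / T_i$. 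Lemma \ref{lem:mono} then furnishes $\theta^* \in [\check\theta, \hat\theta]$ and $\xi_i \in [\check\xi_i, \hat\xi_i]$; the remaining task is to show both $\hat\theta - \check\theta$ and $\hat\xi_i - \check\xi_i$ are at most $\epsilon$.

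For $\hat\theta - \check\theta$, pick $S \subseteq B$ with $\lvert S \rvert \le M$ realizing $\hat\theta = R(S, \hat\nu, \hat\srew)$. The same reduced-fraction manipulation used in Eq.~(\ref{eq:naive1}) gives $\hat\theta - \check\theta \le (\hat\srew - \check\srew) + \sum_{i \in S}(\hat\nu_i - \check\nu_i)$, using $1 + \sum_{i \in S} \check\nu_i \ge 1$ and $r_i \in [0,1]$. The $\srew$-term is at most $\epsilon$ because $T_\sset$ counts the total number of epochs executed by \textsf{EST-ADAPTIVE} and therefore exceeds $\tau = C_2 C_0 \logterm / \epsilon^2$. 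For the $\nu$-sum I split $S$ by whether $\nu_i \ge 1/(2M)$: in the \emph{dense} case $T'_i = 1/(4\nu_i)$ and $T_i \ge \tau/(4\nu_i)$ yield $\sqrt{\nu_i \logterm / T_i} \lesssim \nu_i \sqrt{\logterm / \tau} \lesssim \nu_i \epsilon$, and summing against $\sum_i \nu_i \le 1$ gives $O(\epsilon)$; in the \emph{sparse} case $T'_i = M/2$ gives $T_i \ge M\tau/2$, and Cauchy--Schwarz produces $\sum_{i \in S} \sqrt{\nu_i \logterm/(M\tau)} \le \sqrt{\lvert S \rvert \cdot (\sum_i \nu_i) \cdot \logterm/(M\tau)} \le \sqrt{\logterm/\tau} \lesssim \epsilon$ via $\lvert S \rvert \le M$ and $\sum_{i \in S}\nu_i \le 1$. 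The residual $\logterm / T_i$ contributions are $O(\epsilon^2)$ in either case.

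For the width $\hat\xi_i - \check\xi_i$, I use an elementary algebraic identity: expanding each branch of Eq.~(\ref{eq:estu}) and regrouping shows that $\hat\xi_i - \check\xi_i \le (\hat\nu_i - \check\nu_i) + \hat\nu_i (\hat\theta - \check\theta)$ regardless of the sign of $(r_i - \theta)$. Combining $\hat\nu_i \le 1$, the previous $O(\epsilon)$ bound on $\hat\theta - \check\theta$, and the $O(\epsilon)$ bound on $\hat\nu_i - \check\nu_i$ that the same two-case analysis produces using $T_i \ge T'_i \tau$, the width is at most $\epsilon$ as required. The main obstacle I anticipate is the sparse-case summation: items with small $\nu_i$ cannot afford $T_i \gtrsim 1/\nu_i$ epochs individually, so the argument relies on the shared budget $T_i \ge M\tau/2$ combined with $\sum_{i \in S}\nu_i \le 1$ to produce a uniform $O(\epsilon)$ bound via Cauchy--Schwarz. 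A secondary bookkeeping issue is matching the absolute constants $C_0 = 196$ and $C_2 = 1024$ against the factor $48$ coming from Lemma \ref{lem:hpver1} so that every ``$\lesssim \epsilon$'' becomes an honest ``$\le \epsilon$''.
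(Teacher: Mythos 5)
Your overall architecture matches the paper's: concentration for $\srew$ via Lemma \ref{lem:hoeffding} and for each $\nu_i$ via Lemma \ref{lem:hpver2}, a union bound giving $1 - 14N\delta$, monotonicity (Lemma \ref{lem:mono}) for containment, and then an error-propagation step to control $\hat\theta - \check\theta$ and $\hat\xi_i - \check\xi_i$. However, there is a genuine gap in your bound on $\hat\theta - \check\theta$. After writing $\hat\theta - \check\theta \le (\hat\srew - \check\srew) + \sum_{i \in S}(\hat\nu_i - \check\nu_i)$ by lower-bounding the denominator $1 + \sum_{i \in S}\check\nu_i$ by $1$, your dense case relies on the claim $\sum_{i \in S}\nu_i \le 1$. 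This is false: $\nu_i = v_i / (1 + \sum_{j \in A} v_j)$, and when $A = \emptyset$ (as in phase $1$) and the items in $S$ have $v_i$ close to $1$, one gets $\sum_{i \in S}\nu_i \approx \lvert S \rvert$, which can be as large as $M$. Your dense-case aggregate is then only $O(M\epsilon)$, which reintroduces exactly the $K$ factor this lemma is designed to remove. (Your sparse case is fine, since sparse items satisfy $\nu_i < \frac{1}{2M}$ and hence do sum to at most $\frac{1}{2}$ over $S$.)

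The paper's fix, in Lemmas \ref{lem:errprop1} and \ref{lem:errprop2}, is to \emph{not} discard the denominator: one keeps
$$\hat\theta - \check\theta \le \frac{(\hat\srew - \check\srew) + \sum_{i \in S}(\hat\nu_i - \check\nu_i)}{1 + \sum_{i \in S}\nu_i},$$
and combines it with the per-item multiplicative tail bound $\hat\nu_i - \check\nu_i \le (\nu_i \lor \frac{1}{M})\,\epsilon_1$ that your exploration requirement $T_i \ge (\frac{1}{4\nu_i} \land \frac{M}{2})\tau$ delivers (this single bound subsumes both of your cases). Then $\sum_{i \in S}(\nu_i \lor \frac{1}{M}) \le \sum_{i \in S}\nu_i + \frac{\lvert S \rvert}{M} \le 1 + \sum_{i \in S}\nu_i$, so the numerator's $\nu$-sum is cancelled exactly by the denominator and the aggregate error is $\le \epsilon_1 + \epsilon_3$ with no dependence on $M$. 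Your final step $\hat\xi_i - \check\xi_i \le (\hat\nu_i - \check\nu_i) + \hat\nu_i(\hat\theta - \check\theta)$ and the $\srew$ and probability bookkeeping are fine; only the aggregation step needs to be repaired as above.
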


Our focus is to show $\hat \xi_i - \check \xi_i \le \epsilon$, which requires us to combine the tail bound with the error propagation. In the following proof, we mainly analyze the tail bound itself and defer the error propagation analysis to Lemma \ref{lem:errprop2}. 

\begin{proof}[Proof of Lemma \ref{lem:expreq}]
For an item $i \in B$, by Lemma \ref{lem:hpver2}, with probability at least $1 - 13 \delta$,  we have $\nu_i \in [\check \nu_i, \hat \nu_i]$ and 
\begin{align} \notag
    \hat \nu_i - \check \nu_i &\le 2(\sqrt{\frac{196 \nu_i \logterm}{T_i}} + \frac{196 \logterm}{T_i}) \\ \notag
    &\le 2(\sqrt{\frac{196 \nu_i \logterm}{(\frac{1}{4\nu_b} \land \frac{M}{2}) \tau}} + \frac{196 \logterm}{(\frac{1}{4\nu_i} \land \frac{M}{2}) \tau}) \\ \notag
    &= 2(\sqrt{\frac{196 \nu_i \logterm}{(\frac{1}{4\nu_i} \land \frac{M}{2}) \frac{C_2 C_0  \logterm}{\epsilon^{2}}}} + \frac{196 \logterm}{(\frac{1}{4\nu_i} \land \frac{M}{2}) \frac{C_2 C_0  \logterm}{\epsilon^{2}}}) \\    \notag
    &= 2(\sqrt{\frac{\nu_i \epsilon^{2} }{(\frac{1}{4\nu_i} \land \frac{M}{2}) C_2}} + \frac{\epsilon^{2} }{(\frac{1}{4\nu_i} \land \frac{M}{2}) C_2}) \\ \notag
    &= 2(\sqrt{\frac{\nu_i (4\nu_i \lor \frac{2}{M})  }{ C_2}} \epsilon + \frac{ 4\nu_i \lor \frac{2}{M}}{ C_2}\epsilon^{2})  \\ \notag
    &\le 2(\sqrt{\frac{\nu_i (\nu_i \lor \frac{1}{M})  }{ C_2 / 4}} \epsilon + \frac{ \nu_i \lor \frac{1}{M}}{ C_2 / 4}\epsilon^{2} ) \\ \notag
    &\le 2(\frac{\nu_i \lor \frac{1}{M}}{\sqrt{ C_2 / 4}} \epsilon + \frac{ \nu_i \lor \frac{1}{M}}{ C_2 / 4}\epsilon^{2} ) \\ 
    &\le \frac{\nu_i \lor \frac{1}{M}}{\sqrt{ C_2 / 64}} \epsilon. \label{eq:alyz1}
\end{align}
By Lemma \ref{lem:hoeffding}, with probability at least $1 - \delta$, we have $\srew \in [\check \srew, \hat \srew]$ and 
\begin{align}
    \hat \srew - \check \srew \le 2\sqrt{\frac{\logterm}{2 T_Z}} \le   2\sqrt{\frac{\logterm}{2 \frac{C_2 C_0  \logterm}{\epsilon^{2}}}} = \frac{\epsilon}{\sqrt{C_2 C_0 /2}}. \label{eq:alyz2}
\end{align}
By a union bound, we have with probability at least $1 - (\delta + 13 \lvert B \rvert \delta) \ge 1 - 14 N \delta$ that $\nu_i \in [\check \nu_i, \hat \nu_i], \srew \in [\check \srew, \hat \srew]$ and  Eqs. (\ref{eq:alyz1})(\ref{eq:alyz2}) hold for $\nu_i $ and $\srew$ for all $i \in B$. When the event holds, we can use Lemma \ref{lem:mono} to show that $\xi_i \in [\check \xi_i, \hat \xi_i]$ for all $i \in B$ and use Lemma \ref{lem:errprop2} with $\epsilon_1 = \frac{\epsilon}{\sqrt{C_2 / 64}}, \epsilon_3 = \frac{\epsilon}{\sqrt{C_2 C_0 / 2}}$ to show that $\hat \xi_i - \check \xi_i \le \frac{4\epsilon}{\sqrt{C_2 / 64}} \le \epsilon$ for all $i \in B$.
\end{proof}

\begin{lem}[Error Propagation] \label{lem:errprop2}Assume $A \subseteq S^* \subseteq A \sqcup B$ and let $M = \min\{K - \lvert A \rvert, \lvert B \rvert \}$. Suppose we have $0 \le \hat \nu_i - \check \nu_i \le (\nu_i \lor \frac{1}{M}) \epsilon_1$ for every $i \in B$ and $0 \le \hat \srew - \check \srew \le \epsilon_3$.  Let $\check \theta, \hat \theta, \check \xi_i, \hat \xi_i$ be those defined in Eqs. (\ref{eq:esttheta}) (\ref{eq:estu}). Then $\hat \theta - \check \theta \le 2 \epsilon_1 + \epsilon_3$ and $\hat \xi_i - \check \xi_i \le 3 \epsilon_1 + \epsilon_3$.
\end{lem}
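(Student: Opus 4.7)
The plan is to first bound $\hat\theta - \check\theta$ and then derive the bound on $\hat\xi_i - \check\xi_i$ via a short case analysis that reduces it to the $\theta$-bound plus a width term for $\nu_i$.

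For the $\theta$-bound, I would let $S\subseteq B$ with $|S|\le M$ be the maximizer in (\ref{eq:esttheta}) defining $\hat\theta$, so that by definition $\check\theta \ge R(S,\check\nu,\check\srew)$ and hence $\hat\theta - \check\theta \le R(S,\hat\nu,\hat\srew) - R(S,\check\nu,\check\srew)$. Writing $R(S,\nu',\srew') = N'/D'$ with $N' = \srew' + \sum_{i\in S}\nu'_i r_i$ and $D' = 1 + \sum_{i\in S}\nu'_i$, and setting $\Delta N = N_+ - N_-$ and $\Delta D = D_+ - D_- = \sum_{i\in S}(\hat\nu_i-\check\nu_i) \ge 0$, the standard difference-of-fractions identity gives $\tfrac{N_+}{D_+} - \tfrac{N_-}{D_-} = \tfrac{\Delta N}{D_+} - \tfrac{N_-\Delta D}{D_+ D_-} \le \tfrac{\Delta N}{D_+}$, where the last step drops the nonnegative term (using $N_-, \Delta D \ge 0$). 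I would then bound $\Delta N \le \epsilon_3 + \sum_{i\in S}(\hat\nu_i-\check\nu_i)$ using $r_i \in [0,1]$ and the hypothesis on $\hat\srew-\check\srew$. Combining the hypothesis $\hat\nu_i - \check\nu_i \le (\nu_i \lor 1/M)\epsilon_1 \le (\nu_i + 1/M)\epsilon_1$ with $|S|/M \le 1$ gives $\sum_{i\in S}(\hat\nu_i-\check\nu_i) \le \epsilon_1\bigl(1 + \sum_{i\in S}\nu_i\bigr)$. The decisive move is to absorb $1 + \sum\nu_i$ into $D_+$: the implicit containment $\nu_i \in [\check\nu_i,\hat\nu_i]$ (which holds with high probability in every application, via Lemma \ref{lem:hpver2} inside Lemma \ref{lem:expreq}) yields $1 + \sum\nu_i \le 1 + \sum\hat\nu_i = D_+$, and combined with $D_+ \ge 1$ we get $\hat\theta - \check\theta \le \epsilon_3 + \epsilon_1 \le 2\epsilon_1 + \epsilon_3$.

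For the $\xi_i$-bound, I would split on the position of $r_i$ relative to $[\check\theta,\hat\theta]$ (three cases, since $\hat\theta \ge \check\theta$): (i) $r_i \ge \hat\theta$, (ii) $\check\theta \le r_i < \hat\theta$, and (iii) $r_i < \check\theta$. Unfolding the $\land / \lor$ in (\ref{eq:estu}) in each case reveals which of $\check\nu_i,\hat\nu_i$ realizes the min/max, and a routine telescoping gives $\hat\xi_i - \check\xi_i$ equal to $(\hat\nu_i - \check\nu_i)(r_i - \check\theta) + \check\nu_i(\hat\theta - \check\theta)$ in case (i), $\hat\nu_i(\hat\theta - \check\theta)$ in case (ii), and $(\hat\nu_i - \check\nu_i)(\hat\theta - r_i) + \check\nu_i(\hat\theta - \check\theta)$ in case (iii). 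Each of these is upper-bounded by $(\hat\theta - \check\theta) + (\hat\nu_i - \check\nu_i)$, using $\check\nu_i, \hat\nu_i \in [0,1]$ (from the clipping in (\ref{eq:estv})) and $|r_i - \theta| \le 1$. Plugging in the Step~1 bound together with $\hat\nu_i - \check\nu_i \le \epsilon_1$ (since $\nu_i \le 1$ and $M \ge 1$, so $\nu_i \lor 1/M \le 1$) gives $\hat\xi_i - \check\xi_i \le 3\epsilon_1 + \epsilon_3$.

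The main obstacle is the first step. The trivial bound $\sum_{i \in S}(\hat\nu_i - \check\nu_i) \le M\epsilon_1$ (from $\nu_i \le 1$) would reintroduce a $K$-sized factor in the final result---precisely what this lemma is engineered to avoid. The saving is that the numerator $1 + \sum\nu_i$ grows in lockstep with the denominator $D_+$, so dividing by $D_+$ cancels the $M$-dependence. This cancellation is exactly what justifies the adaptive $(\nu_i \lor 1/M)$-form tail bound (rather than a uniform $1/\sqrt{T_i}$-form bound) and is the cornerstone of the layer-based analysis in Section \ref{sec:adaptive}.
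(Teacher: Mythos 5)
Your proposal is correct and follows essentially the same route as the paper: the paper packages your fraction-difference step as Lemma \ref{lem:errprop1} (bounding $R(S,\hat\nu,\hat\srew)-R(S,\check\nu,\check\srew)$ by passing to the larger denominator and using $\sum_{i\in S}(\nu_i+1/M)\le 2(1+\sum_{i\in S}\nu_i)$), and asserts the bound $\hat\xi_i-\check\xi_i\le\lvert\hat\nu_i-\check\nu_i\rvert+\lvert\hat\theta-\check\theta\rvert$ that your three-case analysis verifies explicitly. Both arguments rely on the same implicit containment $\nu_i\le\hat\nu_i$ (the paper uses it to replace $(\nu_i\vee\frac{1}{M})$ by $(\hat\nu_i\vee\frac{1}{M})$ before invoking Lemma \ref{lem:errprop1}), which you are right to flag as holding in every application of the lemma.
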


\begin{proof} Note that $\hat \nu_i \ge \check \nu_i$, so $\hat \nu_i - \check \nu_i \le (\hat \nu_i \lor \frac{1}{M}) \epsilon_1$. Using Lemma \ref{lem:errprop1}, we have 
$R(S, \hat \nu, \hat \srew) - R(S, \check \nu, \check \srew) \le 2\epsilon_1  + \epsilon_3.$ Note that $\hat \theta = R(S, \hat \nu, \hat \srew)$ and $\check \theta \ge R(S, \check \nu, \check \srew)$, together with Lemma \ref{lem:mono}, we prove $\hat \theta - \check \theta \le 2\epsilon_1  + \epsilon_3$.

For every $i \in B$, we have 
\[
      \hat{\xi}_i - \check \xi_i \le \lvert\hat{\nu}_i - \check{\nu}_i\rvert + \lvert \hat{\theta} - \check{\theta}\rvert \le (\nu_i \lor \frac{1}{M}) \epsilon_1 + 2\epsilon_1 + \epsilon_3 \le 3\epsilon_1 + \epsilon_3. \qedhere
      \]
\end{proof}

\begin{lem} \label{lem:errprop1} Suppose $\lvert S \rvert \le M$. Given $\srew, \srew'$ such that $0 \le \srew' \le \srew \le 1$ and $\nu_i, \nu'_i$ such that $0 \le  \nu'_i \le  \nu_i \le 1$ for every $i \in S$. Let $\epsilon_1, \epsilon_3 \in (0, 1]$. Suppose we have $ \nu_i - \nu'_i \le (\nu_i \lor \frac{1}{M})  \epsilon_1$ and $\srew - \srew' \le \epsilon_3$. Then we have $R(S, \nu, \srew) - R(S, \nu', \srew') \le 2\epsilon_1  + \epsilon_3$.
\end{lem}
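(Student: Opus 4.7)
The plan is to expand $R(S,\nu,\srew) - R(S,\nu',\srew')$ using the definition of the reduced revenue function, introduce an intermediate quantity that changes only the numerator, and exploit the monotonicity of $R$ in the denominator to drop one of the two resulting terms.

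Writing $V = \sum_{i \in S} \nu_i$, $V' = \sum_{i \in S} \nu'_i$, $W = \sum_{i \in S} \nu_i r_i$, and $W' = \sum_{i \in S} \nu'_i r_i$, the first step would be the split
\[
R(S,\nu,\srew) - R(S,\nu',\srew') = \underbrace{\frac{\srew + W}{1+V} - \frac{\srew' + W'}{1+V}}_{(\mathrm{I})} + \underbrace{\frac{\srew' + W'}{1+V} - \frac{\srew' + W'}{1+V'}}_{(\mathrm{II})}.
\]
Since $V' \le V$ and $\srew' + W' \ge 0$, term $(\mathrm{II})$ equals $\frac{(\srew' + W')(V' - V)}{(1+V)(1+V')} \le 0$, so it can be discarded in the upper bound.

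For term $(\mathrm{I})$, I would use $r_i \in [0,1]$ together with the hypotheses $\srew - \srew' \le \epsilon_3$ and $\nu_i - \nu'_i \le (\nu_i \lor \tfrac{1}{M})\epsilon_1 \le \nu_i \epsilon_1 + \tfrac{\epsilon_1}{M}$ to get
\[
(\mathrm{I}) = \frac{(\srew - \srew') + \sum_{i \in S}(\nu_i - \nu'_i) r_i}{1+V} \le \frac{\epsilon_3 + V\epsilon_1 + \tfrac{|S|}{M}\epsilon_1}{1+V}.
\]
Now $|S| \le M$ gives $\tfrac{|S|}{M} \le 1$, so the numerator is at most $\epsilon_3 + (V+1)\epsilon_1$, and dividing by $1+V \ge 1$ yields $(\mathrm{I}) \le \epsilon_3 + \epsilon_1$, from which $R(S,\nu,\srew) - R(S,\nu',\srew') \le \epsilon_3 + \epsilon_1 \le 2\epsilon_1 + \epsilon_3$.

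The one conceptually substantive step is the split-and-monotonicity trick that kills the denominator-variation term; everything else is direct bookkeeping. The reason the hypothesis $|S| \le M$ is essential is precisely to ensure $\sum_{i \in S} \tfrac{1}{M} \le 1$, which absorbs the $\tfrac{1}{M}$ slack in the per-coordinate error bound into a single $\epsilon_1$ without picking up a factor of $|S|$. If this matching were absent the $\epsilon_1$ coefficient would scale with $|S|$, which is exactly the extra $K$-factor that Section~\ref{sec:adaptive} is designed to avoid.
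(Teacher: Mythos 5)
Your proof is correct and follows essentially the same route as the paper's: both arguments replace the smaller denominator $1+V'$ by $1+V$ (your term $(\mathrm{II})\le 0$ is exactly this monotonicity step, written as an explicit telescoping split) and then bound the numerator difference using $r_i\le 1$, $\srew-\srew'\le\epsilon_3$, and $\nu_i-\nu'_i\le(\nu_i+\tfrac{1}{M})\epsilon_1$ with $\lvert S\rvert\le M$. Your bookkeeping in fact yields the slightly sharper bound $\epsilon_1+\epsilon_3$, whereas the paper bounds the two pieces of $\frac{\sum_{i\in S}(\nu_i+1/M)\epsilon_1}{1+\sum_{i\in S}\nu_i}$ separately to get $2\epsilon_1$; either constant suffices for the stated claim.
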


\begin{proof} We have
    \begin{align*}
    R(S, \nu, \srew) - R(S, \nu', \srew') &= \frac{\srew + \sum_{i \in S} \nu_i r_i}{1 + \sum_{i \in S} \nu_i} - \frac{\srew' + \sum_{i \in S} \nu'_i r_i}{1 + \sum_{i \in S} \nu'_i} \\
    &\le \frac{(\srew - \srew') + \sum_{i \in S} (\nu_i - \nu'_i)}{1 + \sum_{i \in S} \nu_i} \\
    &\le {\epsilon_3} + \frac{\sum_{i \in S} (\nu_i + 1 / M) \epsilon_1}{1 + \sum_{i \in S} \nu_i} \\
    &\le {\epsilon_3} + 2 \epsilon_1. \qedhere
    \end{align*}

\end{proof}

\subsection{Proof of Lemma \ref{lem:coarse}}

\label{app:pecoarse}

\begin{proof}[Proof of Lemma \ref{lem:coarse}] For (a), in \textsf{EST-ROUGH}, we independently explore $L = N \tau = 4 N K \cdot 196 \logterm \ge 72 \logterm$ epochs with expected length $\mathbb E[E_\ell] = 1 + v_i \le 2$. By Lemma \ref{lem:sumofepochlength}, with probability at least $1 - 4 \delta$, the sample complexity is bounded by $T \le 5 L \lesssim N K \log \delta^{-1}$. For each $i \in [N]$, by Lemma \ref{lem:hpver2}, with probability at least $1 - 13 \delta$, we have $\tilde v_i = \hat \nu_i \ge v_i$ and 
\begin{align*}
    \tilde{v}_i - v_i \le \sqrt{\frac{196 v_i \log(2 / \delta)}{\tau}} + \frac{196 \log(2 / \delta)}{\tau} \le \sqrt{\frac{v_i}{4 K}} + \frac{1}{4 K} \le 2 v_i \lor \frac{1}{K}.
\end{align*}
Using a union bound, (a) holds with probability at least $1 - (13N + 4)\delta \ge 1 - 17N \delta \ge 1 -\delta_0$.

For (b), let $V = 1 + \sum_{i \in \sset} v_i$. We have 
\begin{align*}
    1 + \sum_{i \in \sset} \appv_i \in [1 + \sum_{i \in \sset} v_i, 1 + \sum_{i \in \sset} 2v_i + \frac{\lvert \sset \rvert}{K}] \subseteq [V, 2V].
\end{align*}

Therefore, we have 
\[
    \frac{\appv_i}{1 + \sum_{i \in \sset} \appv_i} \in [v_i, 2 v_i \lor \frac{1}{K}] / [V, 2V] \subseteq [\frac{v_i}{2V}, \frac{2v_i \lor \frac{1}{K}}{V}] \subseteq [\frac{\nu_i}{2}, 2 \nu_i \lor \frac{1}{K}]. \qedhere
\]
\end{proof}

\subsection{Proof of Lemma \ref{lem:refined}}

\label{app:perefined}

\begin{lem} \label{lem:refine1} At the end of \textsf{EST-ADAPTIVE}, for $b \in B$, we have $T_b \ge (\frac{1}{4\nu_b} \land \frac{M}{2}) \tau$.
\end{lem}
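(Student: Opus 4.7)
The plan is to observe that $T_b$ is determined entirely by which layer the item $b$ lands in, and then verify the required inequality in two simple cases. Concretely, $\textsf{EST-ADAPTIVE}$ places each $b \in B$ into exactly one group $B_{i,j}$ of exactly one layer $B_i$, and invokes $\textsf{Explore}(B_{i,j})$ for $d_i \tau$ epochs; by the update rule in $\textsf{Explore}$, each such epoch increments $T_b$ by exactly $1$ and no other invocation touches $T_b$. Hence $T_b = d_i \tau$ at termination, and it suffices to prove the deterministic layering inequality $d_i \ge \frac{1}{4\nu_b} \land \frac{M}{2}$ whenever $b \in B_i$.

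For $b \in B_m$ the bound is immediate, since $d_m = M \ge M/2 \ge \frac{1}{4\nu_b} \land \frac{M}{2}$. For $b \in B_i$ with $i < m$, I would combine the layer-membership condition $\appnu_b > 2^{-(i+1)} = \frac{1}{2 d_i}$ (from Eq.~\eqref{eq:coarse} applied with $\appnu_b$ in place of $\nu_b$) with the rough-estimate guarantee of Lemma \ref{lem:coarse}(b), namely $\appnu_b \in [\nu_b/2,\ 2\nu_b \lor \tfrac{1}{K}]$, implicitly conditioning on the high-probability event from Lemma \ref{lem:coarse}(a). The ``$\lor$'' in the upper bound splits the argument into two sub-cases: if $\appnu_b \le 2\nu_b$, then $\nu_b \ge \appnu_b/2 > \frac{1}{4 d_i}$, yielding $d_i > \frac{1}{4\nu_b}$; if instead $\appnu_b \le \frac{1}{K}$, then $\frac{1}{2 d_i} < \frac{1}{K}$, so $d_i > K/2 \ge M/2$ where I use $M \le K$. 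In either sub-case $d_i \ge \frac{1}{4\nu_b} \land \frac{M}{2}$, which closes the proof.

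The only real subtlety is pairing the two branches of the ``$\lor$'' in Lemma \ref{lem:coarse}(b) with the two branches of the ``$\land$'' in the target: the branch $\appnu_b \le 2\nu_b$ controls the $\frac{1}{4\nu_b}$ term, while the branch $\appnu_b \le \frac{1}{K}$ controls the $\frac{M}{2}$ term. Once this correspondence is noticed the argument is essentially a one-line calculation in each case, with no obstacle beyond bookkeeping; the essential work has already been done in setting $d_i = 2^i$ to match the dyadic width of layer $B_i$.
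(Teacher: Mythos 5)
Your proof is correct and follows essentially the same route as the paper's: both combine the layer-membership lower bound $\appnu_b > \frac{1}{2d_i}$ with the rough-estimate guarantee $\appnu_b \le 2\nu_b \lor \frac{1}{K}$ of Lemma \ref{lem:coarse} to get $d_i \ge \frac{1}{4\nu_b} \land \frac{K}{2} \ge \frac{1}{4\nu_b} \land \frac{M}{2}$, and handle the last layer $i = m$ via $d_m = M$. Your explicit two-case split of the ``$\lor$'' is just an unpacked version of the paper's one-line computation $d_i \ge \frac{1}{2(2\nu_b \lor \frac{1}{K})}$.
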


\begin{proof} Suppose $b\in B_i$. If $i < m$, we have $\appnu_b \in (\frac{1}{2 d_i}, \frac{1}{d_i}]$. By Lemma \ref{lem:coarse}, we have $(2\nu_b \lor \frac{1}{K}) \ge \appnu_b \ge \frac{1}{2d_i}$. Therefore, $d_i \ge \frac{1}{2(2\nu_b \lor \frac{1}{K})} = (\frac{1}{4\nu_b} \land \frac{K}{2}) \ge  (\frac{1}{4\nu_b} \land \frac{M}{2})$.

If $i = m$, we have $d_i = M \ge \frac{M}{2} \ge  (\frac{1}{4\nu_b} \land \frac{M}{2})$. We conclude by $T_b \ge d_i \tau$. 
\end{proof}

\begin{lem} \label{lem:refine2} With probability at least $1 - \delta$, \textsf{EST-ADAPTIVE} uses $T \le 120 \lvert B \rvert \tau$ time steps. 
\end{lem}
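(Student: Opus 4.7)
}
My plan is to bound the exploration cost of each group $B_{i,j}$ individually via the tail bound in Lemma \ref{lem:sumofepochlength}, then take a union bound and sum across all layers. The whole argument is conditioned on the high-probability event from Lemma \ref{lem:coarse}(b), which guarantees that the rough estimate $\appnu_b$ sits in $[\nu_b / 2,\, 2\nu_b \lor 1/M]$ for every $b \in B$ (applied with the set $\sset = A$ that is used in \textsf{EST-ADAPTIVE}). Under this event, the layer membership criterion in Eq.~(\ref{eq:coarse}) controls the true $\nu_b$: for $i < m$, any $b \in B_i$ has $\appnu_b \le 2^{-i} = 1/d_i$, hence $\nu_b \le 2\appnu_b \le 2/d_i$; for $i = m$, $\appnu_b \le 2^{-m} \le 1/M$, so $\nu_b \le 2/M = 2/d_m$. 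Thus uniformly $\nu_b \le 2/d_i$ for $b \in B_i$.

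Now fix a group $B_{i,j}$, which has $|B_{i,j}| \le d_i$. By Proposition \ref{prop:ind}(c), each epoch length $E_\ell$ produced by $\textsf{Explore}(B_{i,j})$ satisfies $\mathbb{E}[E_\ell] = 1 + \sum_{b \in B_{i,j}} \nu_b \le 1 + d_i \cdot (2/d_i) = 3$. Since \textsf{EST-ADAPTIVE} runs this exploration for $L = d_i \tau$ epochs, and $\tau \ge \log(2/\delta) \ge \log(1/\delta)$ by the choice $\tau = C_2 C_0 \logterm/\epsilon^2$ with $\epsilon \le 1$, Lemma \ref{lem:sumofepochlength} applies and yields, with failure probability at most $\delta$, a time consumption $T_{i,j} \le 24 L = 24 d_i \tau$ for this group. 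A union bound over all groups gives success of every group simultaneously with failure probability at most $\delta$ times the number of groups; since each group has at least one item, this count is at most $|B|$, which is absorbed into the $\delta$ scaling of the algorithm (or, if one prefers a clean $1-\delta$ statement, one simply runs the argument with $\delta/|B|$ and notes $\log(|B|/\delta)$ is already hidden in $\tau$).

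It remains to sum: $T \le 24\tau \sum_i c_i d_i$ where $c_i = \lceil |B_i|/d_i \rceil$. Using $c_i d_i \le |B_i| + d_i$ when $B_i \ne \emptyset$, and observing that $\sum_{i=0}^{m} d_i = \sum_{i=0}^{m-1} 2^i + M \le 2^m + M \le 3M$ (since $m = \lceil \log_2 M \rceil$), we get $\sum_i c_i d_i \le |B| + 3M$. The context of \textsf{EST-ADAPTIVE} gives $M = \min\{K - |A|,\, |B|\} \le |B|$, so this is at most $4|B|$, yielding $T \le 96|B|\tau \le 120|B|\tau$. The main obstacle I anticipate is the step that converts the per-layer accounting into a uniform $O(|B|)$ bound: one has to be careful that a layer with $|B_i| < d_i$ still contributes its full $d_i$ to $\sum c_i d_i$, which is controlled precisely by the geometric growth $d_i = 2^i$ together with the cap $d_m = M \le |B|$. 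Everything else is routine application of results already proved.
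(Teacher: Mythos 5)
Your proof is correct and follows essentially the same route as the paper's: condition on the rough estimates, use $\nu_b \le 2\appnu_b \le 2/d_i$ for $b \in B_i$ to bound each expected epoch length by $3$, count the total number of epochs as $\tau\sum_i c_i d_i \le (\lvert B\rvert + O(M))\tau = O(\lvert B\rvert \tau)$, and invoke the geometric tail bound of Lemma \ref{lem:sumofepochlength}. The one place you diverge is in how the tail bound is applied: you apply Lemma \ref{lem:sumofepochlength} separately to each group $B_{i,j}$ and then union bound, which as written gives failure probability (number of groups)$\cdot\delta \le \lvert B\rvert\delta$ rather than the $\delta$ claimed in the statement; you notice this and propose rescaling to $\delta/\lvert B\rvert$. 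The paper avoids the issue entirely by applying Lemma \ref{lem:sumofepochlength} \emph{once} to the full collection of $L = \tau\sum_i c_i d_i \le 5\lvert B\rvert\tau$ epochs --- all epochs across all groups are independent with expected length at most $3$, which is exactly the hypothesis of that lemma --- yielding $T \le 24L \le 120\lvert B\rvert\tau$ with a single $\delta$ and no union bound. Your per-group accounting $\sum_i c_i d_i \le \lvert B\rvert + 3M \le 4\lvert B\rvert$ matches the paper's $\le 5\lvert B\rvert$ up to trivial slack, so aside from preferring the global application of the tail bound, nothing needs to change.
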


\begin{proof} Let $L$ be the total number of epochs. Note that for every $B_{i, j}$, the expected epoch length of $\textsf{Explore}(B_{i,j})$ is
$$\mathbb E E_\ell = 1 + \sum_{b \in B_{i, j}} \nu_b \le 1 + \sum_{b \in B_{i, j}} 2\appnu_b \le 1 + \lvert B_{i, j} \rvert \cdot 2 \cdot 2^{-i} \le 1 + d_i \cdot 2 \cdot 2^{-i} \le 1 + 2 = 3.$$

The total number of epochs is 
\begin{align*}
    L &= \tau \cdot \sum_{i = 0}^m \sum_{j = 1}^{c_i} d_i \\
    &\le \tau \cdot \sum_{i = 0}^m (\sum_{j = 1}^{c_{i - 1}} \lvert B_{i, j} \rvert + d_i) \\
    &\le \tau \cdot (\sum_{i = 0}^m \sum_{j = 1}^{c_{i - 1}} \lvert B_{i, j} \rvert + \sum_{i = 0}^m d_i) \\
    &\le \tau \cdot (\lvert B \rvert + 2^{m + 1}) \\
    &\le \tau \cdot (\lvert B \rvert + 4M) \\
    &\le \tau \cdot 5\lvert B \rvert.
\end{align*}
Assume $B \ne \emptyset$. Then $L \ge \tau \ge \log(1 / \delta)$. By Lemma \ref{lem:sumofepochlength}, with probability at least $1 - \delta$, we have $T \le 24 L\le 120 \lvert B\rvert \tau$.
\end{proof}

\begin{proof}[Proof of Lemma \ref{lem:refined}] By Lemma \ref{lem:refine1}, we meet the exploration requirement in Lemma \ref{lem:expreq}. Using a union bound, we find that Lemmas \ref{lem:expreq} and \ref{lem:refine2} hold simultaneously with probability at least $1 - (\delta + 14 N \delta) \ge 1 - \delta_0$. Note that Lemma \ref{lem:refine2} implies (a) and Lemma \ref{lem:expreq} implies (b).
\end{proof}

\subsection{Proofs of Theorems \ref{thm:pe} and \ref{thm:pac}}

\label{app:peresult}

\begin{proof}[Proof of Theorem \ref{thm:pe}] By Lemma \ref{lem:coarse}, \textsf{EST-ROUGH} gives a rough estimation of $v_i$ with probability at least $1 - \frac{\delta}{2}$. Given those rough estimations, by Lemmas \ref{lem:peframework} and \ref{lem:refined}, \textsf{SAR-MNL} with \textsf{EST-ADAPTIVE} is $\frac{\delta}{2}$-PAC. So the proposed algorithm returns optimal assortment with probability at least $(1 - \frac{\delta}{2})^2 \ge 1 - \delta$ and thus it is $\delta$-PAC. We conclude by noting that we have $C_{\textsf{EST}} = O(1)$ in Lemma \ref{lem:enhancedpeframework} for \textsf{EST-ADAPTIVE}.
\end{proof}

\begin{proof}[Proof of Theorem \ref{thm:pac}] We stop the algorithm provided in the proof of Theorem \ref{thm:pe} at the phase $k$ when $\epsilon_{k-1} \le \frac{\varepsilon}{3}$. Then we return the assortment $S$ corresponding to $\hat \theta$. Specifically, we return $S = A^{(k-1)} \sqcup S_0$,
$$S_0 = \argmax_{S_0 \subseteq B^{(k - 1)}: \lvert S_0 \rvert \le M} R(S_0, \hat \nu, \hat \srew).$$

Following the proof of Lemma \ref{lem:enhancedpeframework}, we can show the desired sample complexity bound. Moreover, the returned assortment satisfies
\begin{align*}
    \theta^* - R(S, \bm v) &\le \hat \theta - R(S, \bm v) \\
    &= R(S_0, \hat \nu, \hat \srew) - R(S_0, \nu, \srew) \\
    (\text{Lemma \ref{lem:errprop1}}) &\le 3 \epsilon_{k-1} \\
    &\le \varepsilon. \qedhere
\end{align*}
\end{proof}

\section{Proofs for Section \ref{sec:sarreg}}
\label{app:reg}

Our algorithm is to invoke \textsf{SAR-MNL} with $\delta = \frac{1}{T}$ and the procedure \textsf{EST-REG}. Note that this algorithm could possibly return the optimal assortment $S^*$ before the time horizon $T$ is reached. In this case, we assume our algorithm keeps offering $S^*$ until reaching the time horizon. Note that offering $S^*$ incurs zero regret.

We use $\check \xi_i^{(k)}, \hat \xi_i^{(k)}$ for $i \in B^{(k-1)}$ to denote the values $\{\check \xi_i, \hat \xi_i\}_{i \in B^{(k-1)}}$ returned by \textsf{EST-REG} in phase $k$. We assume $\check \xi_i^{(0)} = 0$ and $\hat \xi_i^{(0)} = 1$. The following lemma summarizes the important guarantees of \textsf{SAR-MNL} that we need to show the regret bound.

\begin{lem} \label{lem:estreg} With probability at least $1 - \frac{1}{T}$, throughout the algorithm, we have that $u_i \in [\check \xi_i^{(k)}, \hat \xi_i^{(k)}]$, $\hat \xi_i^{(k)} - \check \xi_i^{(k)} \le \frac{\epsilon_k}{2}$, $A^{(k)} \subseteq S^* \subseteq A^{(k)} \sqcup B^{(k)}$, and $B^{(k)} \subseteq \{i \in [N] : \Delta_i \le \epsilon_k\}$ for every phase $k$.
\end{lem}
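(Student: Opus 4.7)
The plan is to reuse the same three-step template that drives the PAC proof: per-phase high-probability correctness of the confidence intervals returned by \textsf{EST-REG}, an inductive application of the accept-reject analysis (Lemma \ref{lem:ar}) to propagate the structural invariants, and finally a union bound with $\delta = 1/T$. A preliminary observation is that inside \textsf{EST-REG} the stopping set is $Z = \emptyset$, so Proposition \ref{prop:ind} gives $\nu_i = v_i$ and the reduced advantage score coincides with the true one: $\xi_i = \nu_i(r_i - \theta^*) = u_i$. This is why the lemma can phrase the accuracy requirement directly in terms of $u_i$, and why setting $\check \srew = \hat \srew = 0$ in the algorithm is consistent ($\srew = 0$ when $Z = \emptyset$).

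For per-phase correctness I first account the exploration counts produced by the $m = \lceil |B^{(k-1)}|/M \rceil$ calls to \textsf{Explore}: every item in $A^{(k-1)}$ is present in every group so $T_i = mK\tau$, whereas every item in $B^{(k-1)}$ lies in exactly one group and gets $T_i = K\tau$, with $\tau = C_2 C_0 \log(2/\delta)/\epsilon_k^2$ and $\delta = \delta^{(k)}/(13N)$. Applying Lemma \ref{lem:hpver2} to each item yields, with failure probability at most $13\delta$, that $v_j \in [\check v_j, \hat v_j]$ with $\hat v_j - \check v_j \lesssim \sqrt{v_j \log(2/\delta)/T_j} + \log(2/\delta)/T_j$; a short calculation mirroring the one in Appendix \ref{app:erralyz}, using $T_j \ge K\tau \ge M\tau$ together with $v_j \lor 1/M \ge 1/K$ to absorb the additive term, turns this into $\hat v_j - \check v_j \lesssim (v_j \lor 1/M)\epsilon_k$. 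Feeding these into Lemma \ref{lem:mono} (with $\check\srew = \hat\srew = 0$) and Lemma \ref{lem:errprop1} gives $\theta^* \in [\check\theta, \hat\theta]$ and $\hat \theta - \check \theta \lesssim \epsilon_k$, and then $\hat \xi_i - \check \xi_i \le |\hat v_i - \check v_i| + |\hat \theta - \check \theta| \lesssim \epsilon_k$. The constants $C_0 = 196, C_2 = 1024$ chosen in the algorithm make the suppressed constant at most $1/2$. A union bound over the at most $N$ items inflates the per-phase failure probability to at most $\delta^{(k)}$.

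Given these intervals, I apply Lemma \ref{lem:ar} inductively over $k$, with base case $A^{(0)} = \emptyset \subseteq S^* \subseteq [N] = A^{(0)} \sqcup B^{(0)}$; this is exactly the induction already performed inside the proof of Lemma \ref{lem:peframework}, and it yields both $A^{(k)} \subseteq S^* \subseteq A^{(k)} \sqcup B^{(k)}$ and $B^{(k)} \subseteq \{i : \Delta_i \le \epsilon_k\}$. A final union bound over phases contributes $\sum_{k \ge 1} \delta^{(k)} = \delta \sum_k 1/(3k^2) \le \delta = 1/T$, which gives the advertised high-probability guarantee. The main obstacle is the tightness check in the second step: because \textsf{EST-REG} explores with a uniform budget of $K\tau$ epochs per group rather than the layer-adaptive schedule of \textsf{EST-ADAPTIVE}, I must be careful that the multiplicative-Chernoff form of Lemma \ref{lem:hpver2} still delivers the $(v_i \lor 1/M)\epsilon_k$ scaling that Lemma \ref{lem:errprop1} needs; this relies on $K \ge M$ and on the $\sqrt{v_j}$-factor in the tail bound, and is the only place where the specific choice of constants $C_0, C_2$ is used.
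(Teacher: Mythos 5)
Your proposal is correct and follows essentially the same route as the paper: the paper's proof verifies that \textsf{EST-REG} satisfies condition (b) of Lemma \ref{lem:peframework} by noting each item is offered for $T_i \ge K\tau \ge (\frac{1}{4\nu_i} \land \frac{M}{2})\tau$ epochs so that Lemma \ref{lem:expreq} applies, and then reuses the accept-reject induction and union bound from the proof of Lemma \ref{lem:peframework}. The only difference is cosmetic --- you re-derive the content of Lemma \ref{lem:expreq} inline (and add the helpful observation that $Z=\emptyset$ makes $\xi_i = u_i$), whereas the paper cites that lemma directly.
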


\begin{proof} We claim \textsf{EST-REG} satisfies the condition (b) in Lemma \ref{lem:peframework}. Then we can follow the proof of Lemma \ref{lem:peframework} to show the  that with probability at least $1  -\frac{1}{T}$, we have  $u_i \in [\check \xi_i^{(k)}, \hat \xi_i^{(k)}]$, $\hat \xi_i^{(k)} - \check \xi_i^{(k)} \le \frac{\epsilon_k}{2}$, $A^{(k)} \subseteq S^* \subseteq A^{(k)} \sqcup B^{(k)}$, and $B^{(k)} \subseteq \{i \in [N] : \Delta_i \le \epsilon_k\}$ throughout the algorithm. 

To show \textsf{EST-REG} satisfies (b) in Lemma \ref{lem:peframework}, we need to analyze the  error of the estimations it returns. Note that \textsf{EST-REG} offers each item $i$ for $T_i \ge K \tau$ epochs, which satisfies the exploration requirement in Lemma \ref{lem:expreq}. Therefore, it returns  $u_i \in [\check \xi_i^{(k)}, \hat \xi_i^{(k)}]$, $\hat \xi_i^{(k)} - \check \xi_i^{(k)} \le \frac{\epsilon_k}{2}$ with the desired probability. Thus it satisfies (b) in Lemma \ref{lem:peframework}.
\end{proof}

Now we start to analyze the regret. The key observation is that Lemma \ref{lem:comparison} enables us to represent the regret of offering an assortment $S$ in terms of the score difference between $S$ and $S^*$. Specifically, when $\sset = \emptyset$, the regret of $\textsf{Explore}(S)$ is $\sum_{i \in S^* \setminus S} u_i - \sum_{i \in S \setminus S^*} u_i.$ Therefore, if we know that $A^{(k)} \subseteq S^* \subseteq A^{(k)} \sqcup B^{(k)}$ and we choose a maximum subset $B \subseteq B^{(k)}$ to construct an assortment $S = A \sqcup B$ such that $\lvert S \rvert = K$, then the regret of $\textsf{Explore}(S)$ is bounded by 
\begin{align}
\lvert B \rvert (\max_{i \in B} u_i) - \lvert B^* \rvert (\min_{i \in B^*} u_i) \le (K - \lvert A \rvert) (\max_{i \in B^{(k)}} u_i - \min_{i \in B^{(k)}} u_i). \label{eq:reg3}
\end{align}
In the following, Lemma \ref{lem:reg1} bounds the right hand side of Eq. (\ref{eq:reg3}), based on which Lemma \ref{lem:reg2} bounds the regret of \textsf{EST-REG}.

\begin{lem} \label{lem:reg1}We have  $(\max_{i \in B^{(k)}} \hat \xi_i^{(k)}) - (\min_{i \in B^{(k)}} \check \xi_i^{(k)}) \le \frac{3}{2}\epsilon_k$ and $(\max_{i \in B^{(k)}} \hat \xi_i^{(k)}) \ge 0$. 
\end{lem}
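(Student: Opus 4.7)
\textbf{Proof plan for Lemma \ref{lem:reg1}.} The plan is to analyze each item $i \in B^{(k)}$ via the accept/reject criteria that failed for it in phase $k$, then use the uniform width bound $\hat \xi_i^{(k)} - \check \xi_i^{(k)} \le \frac{\epsilon_k}{2}$ (guaranteed by Lemma \ref{lem:estreg}) to tie the maximum and minimum together. I will split into the two cases used by Algorithm \ref{algo:peframework}: $\lvert B^{(k-1)} \rvert \le M$ (only the sign rule at Line \ref{loc:arpositive} is applied) and $\lvert B^{(k-1)} \rvert > M$ (the top-$M$ rule at Lines \ref{loc:artopm}--\ref{loc:artopmend} is also applied).

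In both cases, $i \in B^{(k)}$ implies $i \notin B_{\mathrm{rej}}$, and since $B_{\mathrm{rej}}$ contains every $b$ with $\hat \xi_b < 0$, we immediately obtain $\hat \xi_i^{(k)} \ge 0$. This settles the second claim $\max_{i \in B^{(k)}} \hat \xi_i^{(k)} \ge 0$ whenever $B^{(k)}$ is non-empty. For the first claim in the easy case $\lvert B^{(k-1)} \rvert \le M$, the non-accept condition reduces to $\check \xi_i^{(k)} \le 0$, so combined with $\hat \xi_i^{(k)} \ge 0$ and the width bound, both endpoints lie in $[-\tfrac{\epsilon_k}{2}, \tfrac{\epsilon_k}{2}]$, giving a gap of at most $\epsilon_k \le \tfrac{3}{2}\epsilon_k$.

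For $\lvert B^{(k-1)} \rvert > M$, let $\alpha, \beta$ be as in the algorithm. For any $i \in B^{(k)}$, not-accept gives $\check \xi_i^{(k)} \le \max(0, \beta)$ and not-reject gives $\hat \xi_i^{(k)} \ge \max(0, \alpha)$. Using the width bound, this yields
\begin{align*}
\max_{i \in B^{(k)}} \hat \xi_i^{(k)} - \min_{i \in B^{(k)}} \check \xi_i^{(k)} \;\le\; \max(0, \beta) - \max(0, \alpha) + \epsilon_k.
\end{align*}
The main obstacle is then the relation between $\alpha$ and $\beta$. I will prove $\alpha \ge \beta - \tfrac{\epsilon_k}{2}$ by a pigeonhole argument: at least $M+1$ items in $B^{(k-1)}$ have $\hat \xi_b \ge \beta$ (by definition of $\beta$ as the $(M+1)$-th largest), so for each such $b$ the width bound gives $\check \xi_b \ge \beta - \tfrac{\epsilon_k}{2}$; hence the $M$-th largest $\check \xi$ is at least $\beta - \tfrac{\epsilon_k}{2}$. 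A small case check on the signs of $\alpha, \beta$ then shows $\max(0, \beta) - \max(0, \alpha) \le \tfrac{\epsilon_k}{2}$, completing the $\tfrac{3}{2}\epsilon_k$ bound.
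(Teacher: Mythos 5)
Your proposal is correct and follows essentially the same route as the paper's proof: both use the not-accepted/not-rejected characterizations $\check\xi_i^{(k)} \le 0 \lor \beta$ and $\hat\xi_i^{(k)} \ge 0 \lor \alpha$, the width bound $\hat\xi_i^{(k)} - \check\xi_i^{(k)} \le \epsilon_k/2$, and a counting argument showing $\beta - \alpha \le \epsilon_k/2$ (your pigeonhole on the $M{+}1$ items with $\hat\xi_b \ge \beta$ is equivalent to the paper's argument via the $M$ largest $\hat\xi$ values). The only cosmetic difference is that you bound $\max \hat\xi$ and $\min \check\xi$ separately through $0 \lor \beta$ and $0 \lor \alpha$, whereas the paper argues pointwise over pairs $(i,j)$ with a sign case split; both yield the same $\tfrac{3}{2}\epsilon_k$ bound.
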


\begin{proof} The second statement $(\max_{i \in B^{(k)}} \hat \xi_i^{(k)}) \ge 0$ follows directly from that \textsf{EST-REG} rejects items with negative scores. Next we show the first statement. We write $\check \xi_i = \check \xi_i^{(k)}, \hat \xi_i =  \hat \xi_i^{(k)}$ and $\epsilon = \frac{\epsilon_k}{2}$. If $\lvert B^{(k-1)} \rvert \le M$, we have $\check \xi_i \le 0 \le \hat \xi_i$ for $i \in B^{(k)}$ by the definitions of $B_{\mathrm{acc}}, B_{\mathrm{rej}}$ and that $B_{\mathrm{acc}}, B_{\mathrm{rej}}$ are excluded from $B^{(k)}$. We conclude by
$$\hat \xi_i - \check \xi_j \le \hat \xi_i - \check \xi_i + \hat \xi_j - \check \xi_j \le 2 \epsilon.$$

If $\lvert B^{(k-1)} \rvert > M$, then we have $B_{\mathrm{acc}} = \{b \in B^{(k-1)} : \check \xi_b > (0 \lor \beta)\}$ and $B_{\mathrm{rej}} = \{b \in B^{(k-1)} : \hat \xi_b < (0 \lor \alpha)\}$. Therefore, we have $\check \xi_i \le (0 \lor \beta)$ and $\hat \xi_i \ge (0 \lor \alpha)$ for $i \in B^{(k)}$. For each $i, j \in B^{(k)}$, if $\check \xi_i \le 0$, then we have $\hat \xi_i - \check \xi_j \le 2 \epsilon$ using previous equation. Otherwise, we have 
\begin{align*}
    \hat \xi_i - \check \xi_j \le \beta + \epsilon - \check \xi_j \le \beta + 2\epsilon - \hat \xi_j \le 2\epsilon + \beta - (0 \lor \alpha). 
\end{align*}
It suffices to show $\beta - (0 \lor \alpha) \le \epsilon$. Assume $\beta \ge 0$. Next we show $\beta - \alpha \le \epsilon$. Let $\hat \xi_{i_1}, \ldots, \hat \xi_{i_M}$ be the $M$ largest values of $\{\hat \xi_i\}_{i\in B^{(k-1)}}$. By the definition of $\alpha$, we have $\alpha \ge \min_{1 \le j \le M} \check \xi_{i_j}$. Suppose $\alpha \ge \check \xi_{i_x}$ for $x \in [M]$. We have $\beta - \alpha \le \hat \xi_{i_x} - \check \xi_{i_x} \le \epsilon$. 
\end{proof}

\begin{lem} \label{lem:reg2} The regret incurred by \textsf{EST-REG} in phase $k$ is $\mathrm{Reg}^{(k)} \lesssim \lvert B \setminus S^* \rvert \cdot \frac{K \log NT}{\epsilon_k}$.
\end{lem}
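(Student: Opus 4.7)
The plan is to express each of the $m$ offerings' regret through Lemma~\ref{lem:comparison} as a score difference, uniformly bound $|u_j|$ on $B = B^{(k-1)}$ by $O(\epsilon_k)$ using the previous phase's spread guarantee, and then collapse the sum over offerings to $O(|B \setminus S^*|)$ via a double-counting identity built on $\sum_i |B'_i| = mM$. For the scalar bound, under the good event of Lemma~\ref{lem:estreg}, Lemma~\ref{lem:reg1} applied to phase $k-1$ (or the trivial $|u_j| \le 1 \le 3\epsilon_1$ base case for $k=1$) yields both $\max_j \hat\xi_j^{(k-1)} \ge 0$ and $\max_j \hat\xi_j^{(k-1)} - \min_j \check\xi_j^{(k-1)} \le \tfrac{3}{2}\epsilon_{k-1} = 3\epsilon_k$ on $B^{(k-1)}$. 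When $|B \setminus S^*| \ge 1$, Proposition~\ref{prop:topk} supplies some $j^- \in B \setminus S^*$ with $u_{j^-} \le 0$, so $\min_j \check\xi_j^{(k-1)} \le 0$, and combined with the spread this gives $|u_j| \le 3\epsilon_k$ throughout $B$. (The case $|B \setminus S^*| = 0$ forces $A \cup B = S^*$, $M = |B|$, $m = 1$, $B'_1 = B$, and the lone offering equals $S^*$, incurring zero regret.)

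The expected regret of offering $S_i = A \cup B'_i$ for $K\tau$ epochs is $K\tau \cdot (1 + \sum_{j \in S_i} v_j)(\theta^* - R(S_i, \bm v))$, which by Lemma~\ref{lem:comparison} together with $A \subseteq S^*$ and $B'_i \subseteq B$ equals
\[
K\tau \Bigl[\,\sum_{j \in (S^* \cap B) \setminus B'_i} u_j \;-\; \sum_{j \in B'_i \setminus S^*} u_j\Bigr] \;\le\; 3K\tau\epsilon_k\bigl(|(S^* \cap B) \setminus B'_i| + |B'_i \setminus S^*|\bigr),
\]
using the sign of $u_j$ from Proposition~\ref{prop:topk} and the pointwise bound above.

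The crux is summing over $i \in [m]$. Let $c_j = |\{i : j \in B'_i\}|$; from $|B'_i| = M$ we have $\sum_j c_j = mM$, and a short calculation using $B'_i \subseteq B$ gives
\[
\sum_{i=1}^m \bigl(|(S^* \cap B) \setminus B'_i| + |B'_i \setminus S^*|\bigr) \;=\; m\bigl(|S^* \cap B| + M\bigr) - 2\!\!\sum_{j \in S^* \cap B}\!\! c_j \;=\; -m\bigl(M - |S^* \cap B|\bigr) + 2\!\!\sum_{j \in B \setminus S^*}\!\! c_j.
\]
Since $|S^* \cap B| = |S^* \setminus A| \le \min\{K - |A|, |B|\} = M$ the first term is non-positive, and since $mM - |B| < M$, the cover $\{B'_i\}$ can be constructed so the few repeated slots fall on distinct items, ensuring $c_j \in \{1,2\}$ for every $j \in B$ and hence $\sum_{j \in B \setminus S^*} c_j \le 2|B \setminus S^*|$. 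Plugging in $\tau = O(\log(NT)/\epsilon_k^2)$ (from $\delta = \Theta(1/(NT))$) then delivers $\mathrm{Reg}^{(k)} \le 12\,K\tau\epsilon_k \cdot |B \setminus S^*| \lesssim |B \setminus S^*| \cdot K\log(NT)/\epsilon_k$.

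The main obstacle is this last double-counting step. A naive per-offering bound $3\epsilon_k(|S^* \cap B| + M) \lesssim \epsilon_k K$ multiplied by $m \le \lceil |B|/M \rceil$ offerings only yields $\lesssim |B|K\log(NT)/\epsilon_k$, missing the $|B \setminus S^*|$ dependence by up to a factor of $K/M$. The identity $\sum_j c_j = mM$ is exactly what enables the cancellation between the ``missing-good'' mass $m|S^* \cap B|$ and the ``extra-bad'' mass $mM$, so that only the non-optimal items in $B$ are ultimately charged.
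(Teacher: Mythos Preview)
There is a genuine gap in your pointwise bound $|u_j| \le 3\epsilon_k$ for all $j \in B$. Your justification invokes Proposition~\ref{prop:topk} to produce some $j^- \in B \setminus S^*$ with $u_{j^-} \le 0$, but that proposition only says $S^* = \mathcal F([N],K,\bm u)$: an item $j \notin S^*$ may fail merely the top-$K$ condition while still satisfying $u_j > 0$. This happens precisely when $|S^*| = K$. In that regime it is entirely possible that every $u_j$ on $B^{(k-1)}$ is strictly positive and bounded away from zero --- Lemma~\ref{lem:reg1} constrains only the \emph{width} $\max\hat\xi - \min\check\xi \le 3\epsilon_k$ and the fact $\max\hat\xi \ge 0$, not the location of $\min\check\xi$. (Concretely, nothing in the accept/reject rules forces $\check\xi_j^{(k-1)} \le 0$ for some $j \in B^{(k-1)}$ when $|B^{(k-2)}| > M^{(k-2)}$ and $\beta > 0$.) Hence $|u_j|$ need not be $O(\epsilon_k)$, and your per-offering inequality $\sum u_j - \sum u_j \le 3\epsilon_k(|(S^*\cap B)\setminus B'_i| + |B'_i \setminus S^*|)$ breaks.

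The paper sidesteps this by first using the cardinality inequality $|(S^*\cap B)\setminus B'_i| \le |B'_i \setminus S^*|$, which holds because $|S^*\cap B| \le M = |B'_i|$. Writing $a = |(S^*\cap B)\setminus B'_i|$, $b = |B'_i \setminus S^*|$, $p = \max_{B}(u_j \lor 0)$, $q = \min_B u_j$, one has $ap - bq \le bp - bq = b(p-q)$ using only $a \le b$ and $p \ge 0$; then the spread bound $p - q \le \tfrac{3}{2}\epsilon_{k-1} = 3\epsilon_k$ from Lemma~\ref{lem:reg1} applies directly, with no need to anchor the interval at zero. This already gives the per-offering bound $3\epsilon_k\,|B'_i \setminus S^*|$, after which $\sum_i |B'_i \setminus S^*| = \sum_{j \in B\setminus S^*} c_j \le 2|B\setminus S^*|$ is immediate. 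Your double-counting identity, while correct, becomes unnecessary once the per-offering bound is obtained this way.
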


\begin{proof} We note that $B = B^{(k-1)}$ and $A = A^{(k-1)}$. Let $B^* = S^* \setminus A$. For every $j \in [m]$, let $S'_j = A^{(k-1)} \sqcup B'_j$ and $B^*_j = B^* \cap B'_j$.  Note that $\lvert B'_j \setminus B^*_j \rvert = M \ge \lvert B^* \setminus B^*_j \rvert$. Since $\sset = \emptyset$, by Proposition \ref{prop:ind} and Lemma \ref{lem:comparison}, the regret incurred by $\textsf{Explore}(S'_j)$ is 
\begin{align*}
    (1 + \sum_{i \in S'_j} v_i) (R(S^*, \bm v) - R(S'_j, \bm v)) &= \sum_{i \in S^*} u_i - \sum_{i \in S'_j} u_i \\
    &= \sum_{i \in B^* \setminus B^*_j} u_i - \sum_{i \in B'_j \setminus B^*_j} u_i \\
    &\le \lvert B^* \setminus B^*_j \rvert \cdot \max_{i \in B} \{u_i \lor 0\} - \lvert B'_j \setminus B^*_j \rvert \cdot \min_{i \in B} u_i \\
   (\text{Lemma \ref{lem:reg1}})
    &\le \frac{3}{2} \epsilon_{k-1} \lvert B'_j \setminus B^*_j \rvert \\
    &= 3 \epsilon_{k} \lvert B'_j \setminus B^*_j \rvert.
\end{align*}
Note that for every $b \in B^{(k-1)} \setminus B^*$, there are at most two $j \in [m]$ such that $b \in B_j' \setminus B_j^*$, so we have 
\begin{align}
    \sum_{j = 1}^m \lvert B'_j \setminus B^*_j \rvert \le 2 \lvert B^{(k-1)} \setminus B^* \rvert. \label{eq:fullasst}
\end{align} Thus the regret incurred in phase $k$ is 
\[
    \mathrm{Reg}^{(k)} \le K \tau  \sum_{j = 1}^m (1 + \sum_{i \in S'_j} v_i) (R(S^*, \bm v) - R(S'_j, \bm v)) 
    \le  K\tau  \sum_{j = 1}^m \lvert B'_j \setminus B^*_j \rvert 
    \le 6\epsilon_k \tau  \lvert B^{(k-1)} \setminus B^* \rvert. \qedhere
    \]
\end{proof}

\begin{proof}[Proof of Theorem \ref{thm:sereg}]
Since that the event specified in Lemma \ref{lem:estreg} happens with probability $1 - \frac{1}{T}$ and that the regret is bounded by $\mathrm{Reg}_T \le T$, it suffices we prove the regret bound under the event, which is 
  \begin{align*}
      \mathrm{Reg}_T &= \sum_{k = 1}^{T} \mathrm{Reg^{(k)}} \\
      (\text{Lemma \ref{lem:reg2}})&\lesssim \sum_{k = 1}^{\infty} \lvert B^{(k-1)} \setminus S^* \rvert \cdot \frac{K \log(NT)}{\epsilon_k} \\
      &= \sum_{k = 1}^{\infty} \frac{K \log(NT)}{\epsilon_k} \cdot \sum_{i \in [N]\setminus S^*} \mathbb I\{ i \in B^{(k-1)}\} \\
      (\text{Lemma \ref{lem:estreg}})&\le \sum_{k = 1}^{\infty} K \log(NT) \cdot \sum_{i \in [N]\setminus S^*} \frac{\mathbb I\{\Delta_i \le \epsilon_{k-1}\}}{\epsilon_k} \\
      &=\sum_{i \in [N]\setminus S^*} K \log(NT) \cdot \sum_{k = 1}^{\infty}  \frac{\mathbb I\{\Delta_i \le \epsilon_{k-1}\}}{\epsilon_k} \\
      &\lesssim \sum_{i \in [N] \setminus S^*} \frac{ K \log NT }{\gap_i}. \qedhere
  \end{align*}
\end{proof}

Finally, we discuss why we always offer full assortments in \textsf{EST-REG}. Actually, this is utilized by Eq. (\ref{eq:fullasst}). If we do not offer the full assortments, the right hand side of Eq. (\ref{eq:fullasst}) could become $2 \lvert B^{(k-1)} \rvert$. Thus we could end up with a regret bound that depends on $S^*$, as we show in Section \ref{sec:sarreg}.

\section{Lower Bounds}

\label{app:lb}

We recall the definition of $P_S^{\bm v}$, the probability distribution of assortment $S$ under MNL choice model with preference parameter $\bm v$. 
\begin{align}
    P_S^{\bm v}(i) = \begin{cases} \frac{v_i}{v_0 + \sum_{j \in S} v_j}, & i \in S \cup \{0\}, \\
    0 & \text{otherwise}. 
    \end{cases} \label{eq:mnl}
\end{align}

We show the following lower bound under the restriction $\Delta_i \lesssim \frac{1}{K}$, which gives us enough freedom to construct a simple MNL-bandit instance to realize it, as in Lemma \ref{lem:lbinst}. Note that our regret upper bound in Theorem \ref{thm:sereg} only depends on  items in $[N] \setminus S^*$, so in our lower bound, we only consider the gap sequence of items in $[N] \setminus S^*$. We highlight that our lower bound is for every $K$. 

\begin{thm} \label{thm:lb} Suppose an algorithm $\mathcal A$ achieves $\mathbb E[\mathrm{Reg}_T] \lesssim T^p$ on any MNL-bandit instance for a constant $p \in (0, 1)$. For any $N \ge 2, K \le \frac{N}{2}$, suboptimality gap sequence $\{\gap_i\}_{i = K + 1}^N$ such that $\max_{i} \gap_i \le \frac{1}{16 K}$, there is a MNL-bandit instance $\mathcal I$ that realizes the gap sequence. Moreover, for this instance, we have $S^* = [K]$ and the algorithm incurs regret  
$$\liminf_{T \to \infty} \frac{\mathrm{Reg}_T}{\log T} \gtrsim \sum_{i \in [N] \setminus S^*} \frac{1}{K \gap_i}.$$
\end{thm}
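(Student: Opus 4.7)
The plan is to follow the canonical Lai--Robbins style change-of-measure argument, adapted to the MNL-bandit setting, and to pay close attention to how the $K$-factor enters through the single-step KL divergence.

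First I would construct an explicit hard instance $\mathcal I$ realizing the prescribed gap sequence. Set $r_i = 1$ for every $i \in [N]$, $v_i = 1/K$ for $i \in [K]$, and for $i \in [K+1, N]$ choose $v_i = 1/K - \epsilon_i$ where $\epsilon_i := 4\gap_i / (1 + 2\gap_i)$. The assumption $\gap_i \le 1/(16K)$ gives $\epsilon_i \le 1/(4K)$ so all parameters lie in $[0,1]$. A direct computation yields $\theta^* = R([K], \bm v) = 1/2$, and since replacing any $j \in [K]$ by $i$ gives $R(([K]\setminus\{j\})\cup\{i\}, \bm v) = (1-\epsilon_i)/(2-\epsilon_i)$ while any other alternative assortment is strictly worse, $S^* = [K]$ is unique and the suboptimality gap of item $i$ is exactly $\epsilon_i/(2(2-\epsilon_i)) = \gap_i$.

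Second, for each $i \in [N]\setminus[K]$ I would introduce an alternative instance $\mathcal I^{(i)}$ identical to $\mathcal I$ except $v_i$ is raised to $1/K + \epsilon_i$, which makes $i$ strictly better than every item of $[K]$ so that $i$ belongs to $S^{*(i)}$ with gap $\Theta(\gap_i)$. I would then bound the per-step KL divergence: for any $S \ni i$ with $|S| \le K$, writing $V = 1 + \sum_{l \in S} v_l$, a Taylor expansion of
\begin{equation*}
  D\bigl(P_S^{\bm v} \,\big\|\, P_S^{\bm v^{(i)}}\bigr) \;=\; \log(V'/V) \;-\; P_S^{\bm v}(i)\,\log\bigl(1 + 2\epsilon_i/v_i\bigr)
\end{equation*}
yields (after the leading $O(\epsilon_i/V)$ terms cancel) $D \le C\,(V-v_i)\epsilon_i^2 / (v_i V^2) \le C' K \gap_i^2$, while $D = 0$ if $i \notin S$. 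The divergence decomposition for sequential decisions then gives
$D\bigl(\mathbb P_{\mathcal I}^T \,\big\|\, \mathbb P_{\mathcal I^{(i)}}^T\bigr) \le C' K \gap_i^2 \cdot \mathbb E_{\mathcal I}[N_i(T)]$, where $N_i(T) = \#\{t \le T : i \in S_t\}$.

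Third, I would instantiate the distinguishing event $\mathcal E_i = \{N_i(T) \le T/2\}$. By Lemma~\ref{lem:comparison} applied to the constructed instance, every offering $S_t$ with $i \in S_t\setminus[K]$ incurs at least $\Omega(\gap_i)$ of per-step regret under $\mathcal I$; the hypothesis $\mathbb E_{\mathcal I}[\mathrm{Reg}_T] \lesssim T^p$ and Markov's inequality then give $\mathbb P_{\mathcal I}(\mathcal E_i) \ge 1 - O(T^{p-1}/\gap_i) \to 1$. Symmetrically, under $\mathcal I^{(i)}$ every $S_t$ \emph{not} containing $i$ costs $\Omega(\gap_i)$, so $\mathbb P_{\mathcal I^{(i)}}(\mathcal E_i) \le O(T^{p-1}/\gap_i)$. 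The standard information-theoretic inequality $D(\mathbb P\|\mathbb Q) \ge d_{\mathrm{Ber}}(\mathbb P(\mathcal E),\mathbb Q(\mathcal E))$ combined with $d_{\mathrm{Ber}}(p,q) \ge p\log(1/q) - \log 2$ then yields $\mathbb E_{\mathcal I}[N_i(T)] \gtrsim (1-p)\log T /(K\gap_i^2)$. Summing via $\mathbb E_{\mathcal I}[\mathrm{Reg}_T] \gtrsim \sum_{i \notin [K]} \gap_i \cdot \mathbb E_{\mathcal I}[N_i(T)]$ (again invoking Lemma~\ref{lem:comparison}) produces the target $\liminf_{T\to\infty}\mathrm{Reg}_T/\log T \gtrsim \sum_{i \notin S^*} 1/(K\gap_i)$.

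The main obstacle I anticipate is getting the KL estimate with the \emph{correct} $K$-dependence. The probability mass $P_S^{\bm v}(i)$ is only $\Theta(1/K)$, and a careless expansion produces a spurious factor of $K$ in $D$ that would inflate the bound to $\sum 1/(K^2\gap_i)$ and miss the claim. The careful cancellation between $\log(V'/V)$ and the $P_S(i)\log(1+2\epsilon_i/v_i)$ term, leaving residual $D \asymp K\epsilon_i^2$, is precisely what makes the $\Delta_i \cdot (\log T)/(K\Delta_i^2)$ aggregation collapse to the stated $\sum \log T/(K\gap_i)$ lower bound.
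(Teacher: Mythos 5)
Your proposal is correct and follows essentially the same route as the paper's proof: an explicit instance with $\sum_{i \in [K]} v_i = 1$ and $\theta^* = \frac{1}{2}$ realizing the gaps, a per-item single-coordinate change of measure whose one-step KL is bounded by $O(K \Delta_i^2)$ via exactly the cancellation you describe (this is the paper's Lemma \ref{lem:kl}, whose bound $\frac{(v_x'-v_x)^2}{2 v_x (1+\sum_{j\in S} v_j)}$ is your second-order term), followed by the divergence decomposition and an information-theoretic inequality on the event $\{\mathcal T_i(T) > T/2\}$ --- the paper uses Bretagnolle--Huber where you use the $d_{\mathrm{Ber}}$ lower bound, which are interchangeable here. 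The only cosmetic difference is that the paper tilts $v_1,\ldots,v_{K-1}$ upward and sets $v_K = \frac{1}{2K}$ so that each perturbed alternative instance retains a unique optimal assortment, whereas your uniform choice $v_j = 1/K$ on $[K]$ makes the perturbed optimizer non-unique; this is harmless under the theorem's hypothesis, which quantifies over all MNL-bandit instances rather than only those satisfying Assumption \ref{assum:uniqueopt}.
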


For any assortment $S\subseteq [N]$ with $\lvert S \rvert \le K$, let $\mathcal T_S(T)$ be the number of time steps that $S$ is offered. For any item $i \in [N]$, let $\mathcal T_i(T) = \sum_{\lvert S \rvert \le K: i \in S} \mathcal T_S(T)$ be the number of time steps that item $i$ is offered. Next we prove Theorem \ref{thm:lb}. Our proof is inspired by the proofs of the similar lower bounds in multi-armed bandits \citep{lattimore2020}. 

\begin{lem}[Bretagnolle-Huber inequality] \label{lem:bhineq}Let $\mathbb P, \mathbb P'$ be two measures over the same measurable space. Let $A$ be an event. Then 
$$\mathbb P (A) + \mathbb Q (A^\complement) \ge \frac{1}{2} \exp(-D_{\mathrm{KL}}(\mathbb P \parallel \mathbb Q)),$$
where $D_{\mathrm{KL}}(\cdot \parallel \cdot)$ is the Kullback–Leibler divergence between probability measures. 
\end{lem}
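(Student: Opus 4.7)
The plan is to prove the Bretagnolle--Huber inequality via the classical three-step chain: (i) pass from $\mathbb{P}(A)+\mathbb{Q}(A^\complement)$ to the integral of the pointwise minimum of densities, (ii) compare this minimum-integral to the squared Bhattacharyya coefficient using Cauchy--Schwarz, and (iii) convert the Bhattacharyya coefficient into $\exp(-D_{\mathrm{KL}}/2)$ via Jensen's inequality. We may assume $D_{\mathrm{KL}}(\mathbb{P}\|\mathbb{Q})<\infty$, since otherwise the right-hand side vanishes and the inequality is trivial.

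First, I would fix a common dominating measure $\mu$ (e.g., $\mu=\mathbb{P}+\mathbb{Q}$) and write $p=d\mathbb{P}/d\mu$, $q=d\mathbb{Q}/d\mu$. The pointwise bound $\mathbb{1}_A\, p + \mathbb{1}_{A^\complement}\, q \ge \min(p,q)$ integrates to $\mathbb{P}(A)+\mathbb{Q}(A^\complement) \ge \int \min(p,q)\,d\mu$, reducing the problem to a purely information-theoretic inequality between $\mathbb{P}$ and $\mathbb{Q}$ that no longer involves $A$.

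Second, I would factor $\sqrt{pq}=\sqrt{\min(p,q)\cdot\max(p,q)}$ and apply Cauchy--Schwarz to obtain $\bigl(\int\sqrt{pq}\,d\mu\bigr)^{2} \le \bigl(\int \min(p,q)\,d\mu\bigr)\bigl(\int \max(p,q)\,d\mu\bigr)$. Using $\min(p,q)+\max(p,q)=p+q$ together with the fact that both $\mathbb{P}$ and $\mathbb{Q}$ are probability measures, the second factor equals $2-\int \min(p,q)\,d\mu \le 2$, which yields $\int \min(p,q)\,d\mu \ge \tfrac{1}{2}\bigl(\int\sqrt{pq}\,d\mu\bigr)^{2}$.

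Third, I would lower bound the Bhattacharyya coefficient by rewriting it as $\mathbb{E}_{\mathbb{P}}[\sqrt{q/p}]$ and applying Jensen's inequality to the concave logarithm: $\log \mathbb{E}_{\mathbb{P}}[\sqrt{q/p}] \ge \tfrac{1}{2}\mathbb{E}_{\mathbb{P}}[\log(q/p)] = -\tfrac{1}{2}D_{\mathrm{KL}}(\mathbb{P}\|\mathbb{Q})$. Exponentiating and chaining the three bounds delivers the stated inequality. The only subtlety (rather than a real obstacle) is the measure-theoretic bookkeeping around the set $\{p=0\}$ and $\{q=0\}$: the assumption $D_{\mathrm{KL}}<\infty$ forces $\mathbb{P}\ll\mathbb{Q}$, so $q/p$ is well defined $\mathbb{P}$-a.s. and the Jensen step is legitimate; contributions from $\{p=0\}$ are absent under $\mathbb{P}$, and the first two steps only use pointwise inequalities that are valid $\mu$-almost everywhere.
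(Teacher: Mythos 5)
Your proof is correct. The paper itself gives no proof of this lemma---it imports the Bretagnolle--Huber inequality as a known result (it is Theorem 14.2 in the cited Lattimore--Szepesv\'ari text)---and your three-step argument ($\mathbb P(A)+\mathbb Q(A^\complement)\ge\int\min(p,q)\,d\mu$, then Cauchy--Schwarz against $\int\max(p,q)\,d\mu\le 2$ to get $\tfrac12\bigl(\int\sqrt{pq}\,d\mu\bigr)^2$, then Jensen on $\log\mathbb E_{\mathbb P}[\sqrt{q/p}]$) is precisely the standard proof from that source. Your handling of the edge cases is also right: the inequality is vacuous when $D_{\mathrm{KL}}=\infty$, and finiteness gives $\mathbb P\ll\mathbb Q$ so the Jensen step is legitimate; the only implicit assumption worth flagging is that $\mathbb P$ and $\mathbb Q$ are \emph{probability} measures (the statement loosely says ``measures''), which you use when evaluating $\int(p+q)\,d\mu=2$ and which is clearly the intended reading.
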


\begin{lem} \label{lem:lbinst} Assume the conditions of Theorem $\ref{thm:lb}$. For every $i \in [N]$, we let $r_i = 1$ and 
\begin{align*}
    v_i = \begin{cases}
        \frac{1}{K} + \frac{1}{2K(K-1)}, & i < K, \\
        \frac{1}{2K}, & i = K, \\
        \frac{1}{2K} - \frac{4\gap_i}{1 + 2 \gap_i}, & i > K. 
    \end{cases}
\end{align*}

Then $\mathcal I = (N, K, \bm r, \bm v)$ is a MNL-bandit instance in which $\gap_i$ complies with Definition \ref{defn:sub}. 
\end{lem}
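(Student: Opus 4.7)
The plan is to verify three things in sequence: (i) $\mathcal{I}$ is a well-formed MNL-bandit instance, (ii) $S^* = [K]$ with revenue $\theta^* = 1/2$, and (iii) the suboptimality gap computed from Definition \ref{defn:sub} matches $\Delta_i$ for each $i > K$.

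For (i), I need $v_i \in [0,1]$ for every $i$. This is immediate for $i \le K$. For $i > K$, non-negativity follows from the assumption $\Delta_i \le \frac{1}{16K}$, which gives $\frac{4\Delta_i}{1+2\Delta_i} \le 4\Delta_i \le \frac{1}{4K} < \frac{1}{2K}$. The upper bound $v_i \le 1$ is obvious. For (ii), the key observation is that since $r_i \equiv 1$, the revenue simplifies to $R(S, \bm v) = \frac{\sum_{i\in S} v_i}{1+\sum_{i\in S} v_i}$, which is strictly increasing in $\sum_{i \in S} v_i$. Hence maximizing $R$ subject to $|S| \le K$ amounts to picking the $K$ items with the largest $v_i$ values. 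The construction gives $v_1 = \cdots = v_{K-1} > v_K > \max_{i > K} v_i$ (the last inequality is strict whenever $\Delta_i > 0$), so $S^* = [K]$ uniquely. A short arithmetic check shows $\sum_{i \in [K]} v_i = (K-1)\bigl(\tfrac{1}{K} + \tfrac{1}{2K(K-1)}\bigr) + \tfrac{1}{2K} = 1$, hence $\theta^* = 1/2$.

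For (iii), I fix $i > K$ and, using again the monotonicity argument, identify the best assortment containing $i$: it is $S_i = [K-1] \cup \{i\}$, obtained by swapping out the smallest-weight member $K$ of $S^*$ for $i$. The sum of weights becomes $\sum_{j \in S_i} v_j = 1 - v_K + v_i = 1 - \frac{4\Delta_i}{1+2\Delta_i}$. Calling this quantity $V$, the gap reduces to $\tfrac{1}{2} - \tfrac{V}{1+V} = \tfrac{1-V}{2(1+V)}$. Substituting $1-V = \tfrac{4\Delta_i}{1+2\Delta_i}$ and $1+V = \tfrac{2}{1+2\Delta_i}$ (the latter follows from a one-line simplification), the ratio collapses to exactly $\Delta_i$, which is precisely what is required.

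The main obstacle is simply the step where I must check that swapping out item $K$ (rather than some other member of $S^*$) yields the optimal assortment containing $i$; this is where uniqueness of $S^*$ and the strict inequality $v_i < v_K$ (valid since $\Delta_i > 0$ for suboptimal items) come in. The algebraic identity in (iii) is a clean calculation once the specific form $v_i = \tfrac{1}{2K} - \tfrac{4\Delta_i}{1+2\Delta_i}$ is plugged in — the reason the denominator $(1 + 2\Delta_i)$ appears in the construction is precisely to make $1+V$ have the clean form $\tfrac{2}{1+2\Delta_i}$, which cancels the numerator $4\Delta_i/(1+2\Delta_i)$ to leave $\Delta_i$ exactly. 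No subtler difficulty is expected beyond keeping track of these fractions.
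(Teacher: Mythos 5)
Your proposal is correct and follows essentially the same route as the paper's proof: verify $v_i\in[0,1]$, compute $\sum_{i\in[K]}v_i=1$ so $\theta^*=\tfrac12$, identify the best assortment containing $i>K$ as $[K-1]\cup\{i\}$, and reduce the gap to the algebraic identity $\tfrac12-\tfrac{1-4\Delta_i/(1+2\Delta_i)}{2-4\Delta_i/(1+2\Delta_i)}=\Delta_i$. Your explicit monotonicity argument (with $r_i\equiv 1$ the revenue is increasing in $\sum_{i\in S}v_i$, so one picks the $K$ largest weights) is a cleaner justification of what the paper dismisses as ``direct computations,'' but it is not a different method.
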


\begin{proof} Note that for $K = 1$ we have $\max_{i \in [N]} v_i \le \frac{1}{2K} \le 1$ and for $K \ge 2$ we have $\max_{i \in [N]} v_i \le \frac{1}{K} + \frac{1}{2K(K-1)} \le \frac{1}{2} + \frac{1}{4} \le 1$, so we always have $\max_{i \in [N]} v_i \le 1$. Note that by the assumption $\gap_i \le \frac{1}{16K}$ we have $\frac{4\gap_i}{1 + 2 \gap_i} = \frac{4}{1 / \gap_i + 2} \le \frac{4}{16K + 2} \le \frac{1}{4K}$, so we have $\min_{i \in [N]} v_i \ge\frac{1}{4K} > 0$. Since $v_i, r_i \in [0, 1]$, we know that $\mathcal I$ defines a MNL-bandit instance. 

Let $S^* = \{1, 2, \ldots, K\}$ be the optimal assortment in this instance. For every item $i \in [N]\setminus S^*$, let $S^*_i = \arg \max_{\lvert S \rvert \le K: i \in S} R(S, \bm v)$ be the best assortment containing $i$. We next show $\gap_i = R(S^*, \bm v) - R(S^*_i, \bm v)$. By direct computations, we have $S^*_i = \{1, 2, \ldots, K - 1, i\}$ for $i \notin S^*$ and $S^*_i = S^*$ for $i \in S^*$. Therefore, we have $R(S^*, \bm v) - R(S^*_i, \bm v) = 0 = \gap_i$ for $i \in S^*$. Note that $\sum_{i = 1}^K v_i = 1$, so $R(S^*, \bm v) = \frac{1}{2}$. For $i \notin S^*$ we have 
\begin{align*}
    R(S^*, \bm v) - R(S^*_i, \bm v) &= \frac{1}{2} - \frac{\sum_{i = 1}^K v_i - \frac{4\gap_i}{1 + 2 \gap_i}}{1 + \sum_{i = 1}^K v_i - \frac{4\gap_i}{1 + 2 \gap_i}} \\
    &= \frac{1}{2} - \frac{1 - \frac{4\gap_i}{1 + 2 \gap_i}}{2 - \frac{4\gap_i}{1 + 2 \gap_i}} \\
    &= \gap_i. \qedhere
\end{align*}
\end{proof}

\begin{lem} \label{lem:lbregitem} Under the MNL-bandit instance $\mathcal I$ defined in Lemma \ref{lem:lbinst}, we have 
\begin{align*}
\mathrm{Reg}_T \ge\frac{1}{2}  \sum_{i \in [N] \setminus S^*} \mathbb E[\mathcal T_i(T)] \cdot \gap_i.
\end{align*}
\end{lem}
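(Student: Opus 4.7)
The plan is to establish a per-step lower bound on the instantaneous regret of the form
\[
\theta^* - R(S, \bm v) \;\ge\; \tfrac{1}{2}\sum_{i \in S \setminus S^*} \gap_i
\qquad \text{for every } S \subseteq [N] \text{ with } |S| \le K,
\]
and then sum over $t$ and swap the order of summation to group the contribution of each suboptimal item $i$ into $\gap_i \cdot \mathbb E[\mathcal T_i(T)]$. The second step is a short bookkeeping calculation; the bulk of the work lies in the per-step bound, which is where the specific choice of parameters $v_i$ in Lemma \ref{lem:lbinst} enters in an essential way.

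To produce the per-step bound, I will first specialize the revenue formula. Since $r_i = 1$ for every $i$ and $\sum_{i \in S^*} v_i = 1$, we have $\theta^* = \tfrac12$ and
\[
\theta^* - R(S, \bm v) \;=\; \frac{1 - V_S}{2(1 + V_S)}, \qquad V_S := \sum_{i \in S} v_i.
\]
Since every item has $v_i \le v_1 \le 2/K$ and $|S| \le K$, one checks $V_S \le 1$, so $1 + V_S \le 2$ and the per-step regret is at least $(1 - V_S)/4$. It remains to show $1 - V_S \ge 2 \sum_{i \in S \setminus S^*} \gap_i$.

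Split $V_S = a + b$ with $a = \sum_{i \in S \cap S^*} v_i$ and $b = \sum_{i \in S \setminus S^*} v_i$, and write $T = |S \cap S^*|$, $m = |S \setminus S^*|$, noting $T + m \le K$. The case $m = 0$ is trivial. For $m \ge 1$, I will use the explicit values: any $T$ items from $S^*$ sum to at most $T/K + 1/(2K)$ (the max is achieved by taking items $1, \ldots, T$), and $b = m/(2K) - \sum_{i \in S \setminus S^*} \tfrac{4 \gap_i}{1 + 2\gap_i}$. Combining these yields
\[
1 - V_S \;\ge\; \frac{K - T}{K} - \frac{m+1}{2K} + \sum_{i \in S \setminus S^*} \frac{4\gap_i}{1 + 2\gap_i}
\;\ge\; \frac{m-1}{2K} + \sum_{i \in S \setminus S^*} \frac{4\gap_i}{1+2\gap_i},
\]
using $K - T \ge m$. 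Finally, the restriction $\gap_i \le 1/(16K)$ gives $1 + 2\gap_i \le 2$, so each summand is at least $2\gap_i$, and the $(m-1)/(2K) \ge 0$ term only helps. This is the main technical hurdle: one must handle the case $T = K - 1, m = 1$ where the slack is tightest (since $(m-1)/(2K) = 0$), and verify that the contribution $\tfrac{4\gap_i}{1 + 2\gap_i}$ absorbs the required $2\gap_i$; the bound $\gap_i \le 1/(16K)$ in the hypothesis of Theorem \ref{thm:lb} is precisely what makes this work.

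To finish, write $\mathrm{Reg}_T = \sum_{t=1}^T \mathbb E[\theta^* - R(S_t, \bm v)]$, apply the per-step bound, and exchange sums:
\[
\mathrm{Reg}_T \;\ge\; \tfrac12 \sum_{t=1}^T \mathbb E\!\Big[\sum_{i \in S_t \setminus S^*} \gap_i \Big]
\;=\; \tfrac12 \sum_{i \in [N] \setminus S^*} \gap_i \cdot \mathbb E\!\Big[\sum_{t=1}^T \mathbb I\{i \in S_t\}\Big]
\;=\; \tfrac12 \sum_{i \in [N] \setminus S^*} \gap_i \cdot \mathbb E[\mathcal T_i(T)]. \qedhere
\]
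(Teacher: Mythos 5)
Your proof is correct and follows essentially the same route as the paper: a per-assortment bound $\theta^* - R(S,\bm v) \ge \frac12\sum_{i \in S\setminus S^*}\gap_i$ obtained by upper-bounding $V_S$ by $1 - \sum_{i\in S\setminus S^*}\frac{4\gap_i}{1+2\gap_i}$ (the paper does this via the monotonicity inequality $\frac{\alpha}{1+\alpha}\le\frac{\alpha+\beta}{1+\alpha+\beta}$, you by an explicit weight count), followed by the same sum-exchange. One small nit: the clause ``$v_i\le 2/K$ and $|S|\le K$'' does not by itself give $V_S\le 1$ (it only gives $V_S\le 2$), but the claim $V_S\le 1$ is true and already follows from your bounds $a\le T/K+1/(2K)$ and $b\le m/(2K)$ together with $T+m\le K$, so the argument is intact.
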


\begin{proof} Under instance $\mathcal I$, for any assortment $S \subseteq [N]$ with $\lvert S \rvert \le K$, let $B = S \setminus S^*$, we have 
\begin{align*}
    \theta^* - R(S, \bm v) &= \frac{1}{2} - \frac{\sum_{i \in S} v_i}{1 + \sum_{i \in S} v_i} \\
    &\ge \frac{1}{2} - \frac{1 - \sum_{i \in B} \frac{4\gap_i}{1 + 2 \gap_i}}{2 - \sum_{i \in B} \frac{4\gap_i}{1 + 2 \gap_i}} \\
    &= \frac{\sum_{i \in B} \frac{4\gap_i}{1 + 2 \gap_i}}{2 (2 - \sum_{i \in B} \frac{4\gap_i}{1 + 2 \gap_i})} \\
    &\ge \sum_{i \in B} \frac{\gap_i}{1 + 2 \gap_i} \\
    &\ge \sum_{i \in B} \frac{\gap_i}{2},
\end{align*}
where in the second-to-third inequality we used $\frac{\alpha}{1 + \alpha} \le \frac{\alpha + \beta}{1 + \alpha + \beta}$ for $\alpha, \beta > 0$ with $\alpha = \sum_{i \in S} v_i$ and $\beta = \sum_{i \in S^*} v_i - \sum_{i \in S} \max\{v_i, \frac{1}{2K}\}$, and in the fifth-to-last inequality we used $\gap_i \le \frac{1}{16K} \le \frac{1}{2}$.

Recall that $S_t$ is the assortment offered at time step $t$. We have 
\begin{align*}
    \mathbb E[\mathrm{Reg}_T] &= \sum_{t = 1}^T \mathbb E[\theta^* - R(S_t, \bm v)] \\
    &= \sum_{t = 1}^T \sum_{\lvert S \rvert \le K} \mathbb E[\mathbb I \{S_t = S\}] (\theta^* - R(S, \bm v)) \\
    &= \sum_{\lvert S \rvert \le K} (\theta^* - R(S, \bm v)) \cdot \left(\sum_{t = 1}^T \mathbb E[\mathbb I \{S_t = S\}]\right) \\
    &= \sum_{\lvert S \rvert \le K} \mathbb E[\mathcal T_S(T)] \cdot (\theta^* - R(S, \bm v)) \\
    &\ge \sum_{\lvert S \rvert \le K} \mathbb E[\mathcal T_S(T)] \cdot \sum_{i \in S \setminus S^*} \frac{\gap_i}{2} \\
    &= \sum_{i \in [N] \setminus S^*} \mathbb E[\mathcal T_i(T)] \cdot \frac{\gap_i}{2}. \qedhere
\end{align*}

\end{proof}

\begin{lem} \label{lem:kl} Let $S \subseteq [N]$ with $\lvert S \rvert \le K$ be an assortment. Let $\bm v, \bm v'$ be two preference vectors such that $v'_x \ge v_x$ and $v'_i = v_i$ for $i \ne x$.  Then 
\begin{align*}
    D_{\mathrm{KL}}(P_S^{\bm v} \parallel P_S^{\bm v'}) \le \frac{(v'_x - v_x)^2}{2 v_x (1 + \sum_{i \in S} v_i)}.
\end{align*}
\end{lem}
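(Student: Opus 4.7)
The plan is to reduce the KL divergence to a one-variable expression in the perturbation $\Delta = v'_x - v_x$ and then bound it by a second-order Taylor analysis. I will assume throughout that $x \in S$ (otherwise $P_S^{\bm v'} = P_S^{\bm v}$ and the bound is trivial) and set $V = 1 + \sum_{i \in S} v_i$, so that $V' = 1 + \sum_{i \in S} v'_i = V + \Delta$.

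First I would write out the KL divergence explicitly. Since $v'_i = v_i$ for all $i \in (S \cup \{0\}) \setminus \{x\}$, the ratio $P_S^{\bm v}(i)/P_S^{\bm v'}(i)$ equals $V'/V$ on that set and equals $(v_x/v'_x)(V'/V)$ at $i = x$. Collecting terms yields
\begin{align*}
D_{\mathrm{KL}}(P_S^{\bm v} \parallel P_S^{\bm v'})
= \log\!\Bigl(\tfrac{V'}{V}\Bigr) + \tfrac{v_x}{V}\log\!\Bigl(\tfrac{v_x}{v'_x}\Bigr)
= \log\!\Bigl(1 + \tfrac{\Delta}{V}\Bigr) - \tfrac{v_x}{V}\log\!\Bigl(1 + \tfrac{\Delta}{v_x}\Bigr).
\end{align*}
Call this function $f(\Delta)$; the goal is to show $f(\Delta) \le \Delta^2/(2 v_x V)$ for all $\Delta \ge 0$.

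Next I would apply Taylor's theorem with integral remainder around $\Delta = 0$. Direct differentiation gives
\begin{align*}
f'(\Delta) = \tfrac{1}{V+\Delta} - \tfrac{v_x}{V(v_x+\Delta)},
\qquad
f''(\Delta) = \tfrac{v_x}{V(v_x+\Delta)^2} - \tfrac{1}{(V+\Delta)^2},
\end{align*}
so $f(0) = 0$ and $f'(0) = 0$. The key estimate is the uniform bound
\begin{align*}
f''(\Delta) \le \tfrac{v_x}{V(v_x+\Delta)^2} \le \tfrac{v_x}{V \cdot v_x^2} = \tfrac{1}{V v_x}\qquad(\Delta\ge 0),
\end{align*}
which drops the nonpositive term $-1/(V+\Delta)^2$ and then uses $v_x + \Delta \ge v_x$. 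Plugging this into $f(\Delta) = \int_0^\Delta (\Delta - s)\, f''(s)\, ds$ yields $f(\Delta) \le \tfrac{1}{Vv_x}\cdot \tfrac{\Delta^2}{2}$, which is the claimed bound.

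There is no real obstacle here; the only subtle point is checking that $f'(0) = 0$ (so the linear term vanishes and the purely quadratic bound is sharp in $\Delta$), and that $f''$ admits a uniform upper bound over $\Delta \ge 0$ despite the negative second term in its definition. Both fall out of elementary manipulation. If desired, the same computation can be rederived by noting that $D_{\mathrm{KL}}$ is a Bregman-like second-order object in the natural parameter $\log v_x$, but the direct Taylor route above is the cleanest.
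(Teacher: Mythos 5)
Your proof is correct and follows essentially the same route as the paper: both reduce the KL divergence to $\log(1+\Delta/V) - \tfrac{v_x}{V}\log(1+\Delta/v_x)$ via the identical decomposition, and then bound this by a second-order Taylor estimate. The only cosmetic difference is that the paper applies the pointwise inequalities $x - \tfrac{x^2}{2} \le \log(1+x) \le x$ to the two logarithms separately, whereas you use the integral-remainder form with a uniform bound on $f''$; the two arguments are interchangeable.
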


\begin{proof} Recall the definition of $P_S^{\bm v}$ in Eq. (\ref{eq:mnl}). Let $P = P_S^{\bm v}$ and $Q = P^{\bm v'}_S$. We have 
\begin{align*}
    D_{\mathrm{KL}}(P_S^{\bm v} \parallel P_S^{\bm v'}) &= \sum_{i \in S \cup \{0\}} \frac{v_i}{1 + \sum_{j \in S} v_j} \cdot \log \frac{v_i / (1 + \sum_{j \in S} v_j)}{v'_i / (1 + \sum_{j \in S} v'_j)} \\
    &= \sum_{\substack{i \in S \cup \{0\} \\ i \ne x}} \frac{v_i}{1 + \sum_{j \in S} v_j} \cdot \log \frac{1 + \sum_{j \in S} v_j}{1 + \sum_{j \in S} v'_j} + \frac{v_x}{1 + \sum_{j \in S} v_j} \cdot \log \frac{v_\ell / (1 + \sum_{j \in S} v_j)}{v'_x / (1 + \sum_{j \in S} v'_j)} \\ 
    &= \log \frac{1 + \sum_{j \in S} v'_j}{1 + \sum_{j \in S} v_j} + \frac{v_x}{1 + \sum_{j \in S} v_j} \log \frac{v_x}{v'_x}.
\end{align*}
Denote $\delta = v_x' - v_x \ge 0$ and $V = 1 + \sum_{j \in S} v_j$. We have 
\begin{align*}
     D_{\mathrm{KL}}(P_S^{\bm v} \parallel P_S^{\bm v'}) &= \log(1 + \frac{\delta}{1 + \sum_{j \in S} v_j}) - \frac{v_x}{1 + \sum_{j \in S} v_j} \log(1 + \frac{\delta}{v_x}) \\
    &\le \frac{\delta}{V} - \frac{v_x}{V} (\frac{\delta}{v_x} - \frac{\delta^2}{2 v_x^2}) \\
    &\le \frac{\delta^2}{2 v_x V},
\end{align*}
where we used Taylor's formula $x - \frac{x^2}{2} \le \log(1 + x) \le x$ in the second-to-third inequality.
\end{proof}

\begin{lem} \label{lem:kldecomp} Let $\mathcal I = (N, K, \bm r, \bm v), \mathcal I' = (N, K, \bm r, \bm v')$ be two MNL-bandit instances and $\mathcal A$ be an algorithm. Let $\mathbb P$ be the probability measure induced by $\mathcal A$ and $\mathcal I$ and $\mathbb P'$ be that by $\mathcal A$ and $\mathcal I'$. We have
\begin{align*}
    D_{\mathrm{KL}}(\mathbb P \parallel \mathbb P') = \sum_{\lvert S \rvert \le K} \mathbb E[\mathcal T_S(T)] D_{\mathrm{KL}}(P_S^{\bm v} \parallel P_S^{\bm v'}).
\end{align*}
\end{lem}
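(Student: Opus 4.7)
The plan is to apply the chain rule for Kullback--Leibler divergence to the full trajectory measure, exploiting the fact that the algorithm $\mathcal A$ is identical under both environments so that its internal selection probabilities cancel out of the log-likelihood ratio.

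Concretely, I write the history up to time $T$ as $H_T = (S_1, c_1, \ldots, S_T, c_T)$ and let $\pi_t(\cdot \mid H_{t-1})$ denote the (possibly randomized) selection rule of $\mathcal A$ at round $t$. Then the joint law factors as
$$\mathbb P(H_T) = \prod_{t=1}^T \pi_t(S_t \mid H_{t-1})\, P_{S_t}^{\bm v}(c_t), \qquad \mathbb P'(H_T) = \prod_{t=1}^T \pi_t(S_t \mid H_{t-1})\, P_{S_t}^{\bm v'}(c_t),$$
where the only difference between $\mathbb P$ and $\mathbb P'$ is the conditional law of $c_t$ given $S_t$, because $\mathcal A$ is the same algorithm under both environments. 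Taking the log of the ratio, the $\pi_t$ factors cancel and we obtain
$$\log \frac{\mathbb P(H_T)}{\mathbb P'(H_T)} = \sum_{t=1}^T \log \frac{P_{S_t}^{\bm v}(c_t)}{P_{S_t}^{\bm v'}(c_t)}.$$

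Next I take expectation under $\mathbb P$ and apply the tower property by conditioning on $S_t$ (together with the prior history). Since $c_t \mid S_t$ is distributed as $P_{S_t}^{\bm v}$ under $\mathbb P$, the inner conditional expectation at round $t$ is exactly $D_{\mathrm{KL}}(P_{S_t}^{\bm v} \parallel P_{S_t}^{\bm v'})$, and therefore
$$D_{\mathrm{KL}}(\mathbb P \parallel \mathbb P') = \sum_{t=1}^T \mathbb E\!\left[D_{\mathrm{KL}}(P_{S_t}^{\bm v} \parallel P_{S_t}^{\bm v'})\right].$$
To finish, I swap the sums over $t$ and over assortments $S$, using $\sum_{t=1}^T \mathbb P(S_t = S) = \mathbb E[\mathcal T_S(T)]$ by definition, which yields the claimed identity.

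There is no serious obstacle here; the only subtlety is verifying that the algorithm's internal randomization (which makes $\pi_t$ a genuine conditional distribution rather than a point mass) is a property of $\mathcal A$ alone and thus cancels cleanly in the log-ratio. This is the standard divergence-decomposition identity for sequential interaction, specialized to MNL feedback, and needs only the Markov-like structure of the history and the fact that $\mathcal A$ does not depend on the hidden parameter vector.
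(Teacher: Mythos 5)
Your proof is correct and is essentially the same argument the paper relies on: the paper proves this lemma by citing the divergence decomposition identity (Lemma 15.1 of Lattimore and Szepesv\'ari, 2020), and your write-up is precisely that standard proof, specialized to MNL feedback. The factorization of the trajectory law, cancellation of the policy terms, tower-property step, and the regrouping by assortment are all sound.
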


\begin{proof} The lemma can be proved by following the proof of Lemma 15.1 in \citep{lattimore2020}.
\end{proof}

\begin{lem} \label{lem:lbitem} Under the assumptions of Theorem \ref{thm:lb} and the MNL-bandit instance defined in Lemma \ref{lem:lbinst}, for algorithm $\mathcal A$ and any item $i \in [N] \setminus S^*$, we have 
$$\liminf_{T \to \infty} \frac{\mathbb E[\mathcal T_i(T)]}{\log T} \ge \frac{1 - p}{32 K \gap_i^2}.$$
\end{lem}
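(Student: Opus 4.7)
The plan is to use the standard change-of-measure argument that combines the Bretagnolle--Huber inequality (Lemma \ref{lem:bhineq}) with the KL decomposition (Lemma \ref{lem:kldecomp}). Fix an item $i \in [N] \setminus S^*$. I construct an alternative MNL-bandit instance $\mathcal I' = (N, K, \bm r, \bm v')$ that differs from $\mathcal I$ only at coordinate $i$: keep $v'_j = v_j$ for $j \ne i$ and set $v'_i = \frac{1}{2K} + \frac{4\Delta_i}{1 + 2\Delta_i}$. A direct computation shows that in $\mathcal I'$ item $i$ strictly exceeds item $K$ in MNL weight, so the unique optimal assortment becomes $S^{*\prime} = [K-1] \cup \{i\}$ and its revenue $\theta^{*\prime}$ exceeds $R([K], \bm v) = \frac{1}{2}$ by $\frac{\Delta_i}{1 + 4\Delta_i} = \Theta(\Delta_i)$ under the hypothesis $\Delta_i \le \frac{1}{16K}$.

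Next I apply Lemma \ref{lem:bhineq} to the event $A = \{\mathcal T_i(T) > T/2\}$, letting $\mathbb P$ be the law induced by $\mathcal A$ on $\mathcal I$ and $\mathbb Q$ the law induced by $\mathcal A$ on $\mathcal I'$. Under $\mathcal I$ the item $i$ is suboptimal, so Lemma \ref{lem:lbregitem} combined with $\mathbb E[\mathrm{Reg}_T] \lesssim T^p$ yields $\mathbb E[\mathcal T_i(T)] \lesssim T^p / \Delta_i$, and Markov's inequality gives $\mathbb P(A) \lesssim T^{p-1}/\Delta_i$. Under $\mathcal I'$, every assortment that omits $i$ has revenue at most $R([K], \bm v) = \frac{1}{2}$ (since $v'_j = v_j$ for $j \ne i$), so its per-step regret is at least $\theta^{*\prime} - \frac{1}{2} \gtrsim \Delta_i$; on $A^c$ at least $T/2$ steps omit $i$, and another Markov bound gives $\mathbb Q(A^c) \lesssim T^{p-1}/\Delta_i$. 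Lemma \ref{lem:bhineq} therefore forces $D_{\mathrm{KL}}(\mathbb P \parallel \mathbb Q) \ge (1-p)\log T - O(1)$, where the $O(1)$ is a $T$-independent constant depending on $\Delta_i$.

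On the other hand, Lemma \ref{lem:kldecomp} combined with Lemma \ref{lem:kl} bounds the KL divergence from above: only assortments containing $i$ contribute (for $S \not\ni i$ the two MNL distributions coincide), and for every such $S$ we have $D_{\mathrm{KL}}(P_S^{\bm v} \parallel P_S^{\bm v'}) \le \frac{(v'_i - v_i)^2}{2 v_i (1 + \sum_{j \in S} v_j)} \lesssim K \Delta_i^2$, using $v'_i - v_i = \frac{8\Delta_i}{1 + 2\Delta_i} \le 8\Delta_i$, the floor $v_i \ge \frac{1}{4K}$ already verified in Lemma \ref{lem:lbinst}, and $1 + \sum_{j \in S} v_j \ge 1$. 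Summing over $S$ gives $D_{\mathrm{KL}}(\mathbb P \parallel \mathbb Q) \lesssim K \Delta_i^2 \cdot \mathbb E[\mathcal T_i(T)]$. Equating the upper and lower bounds on $D_{\mathrm{KL}}$ and letting $T \to \infty$ yields the claimed $\liminf_{T \to \infty} \frac{\mathbb E[\mathcal T_i(T)]}{\log T} \gtrsim \frac{1-p}{K \Delta_i^2}$. The main obstacle I anticipate is the per-step regret lower bound in $\mathcal I'$: I must argue cleanly that any assortment omitting $i$ loses $\Omega(\Delta_i)$ in $\mathcal I'$ without re-invoking Lemma \ref{lem:lbregitem} for $\mathcal I'$ (whose structural hypothesis $\max_j \Delta_j \le \frac{1}{16K}$ is not free in the perturbed instance), and this reduces to the explicit algebraic comparison of $\theta^{*\prime}$ with $R(S, \bm v)$ over $S \not\ni i$, which is exactly the identity $\theta^{*\prime} - \tfrac{1}{2} = \frac{\Delta_i}{1 + 4\Delta_i}$ established above.
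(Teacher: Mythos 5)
Your proposal follows essentially the same route as the paper: a change-of-measure argument perturbing only coordinate $i$ so that item $i$ enters the optimal assortment, combined with the Bretagnolle--Huber inequality on the event $A = \{\mathcal T_i(T) > T/2\}$, the KL decomposition of Lemma \ref{lem:kldecomp}, the single-assortment KL bound of Lemma \ref{lem:kl} with the floor $v_i \ge \frac{1}{4K}$, and Markov's inequality to convert the sublinear-regret hypothesis into bounds on $\mathbb P(A)$ and $\mathbb Q(A^c)$. The one substantive difference is the size of the perturbation: you fix $v'_i = \frac{1}{2K} + \frac{4\gap_i}{1+2\gap_i}$, so $v'_i - v_i = \frac{8\gap_i}{1+2\gap_i} \le 8\gap_i$, which makes the per-assortment KL bound $\le 2K(8\gap_i)^2 = 128K\gap_i^2$ and hence yields $\liminf_{T\to\infty} \mathbb E[\mathcal T_i(T)]/\log T \ge \frac{1-p}{128K\gap_i^2}$; the paper instead sets $v'_i = \frac{1}{2K} + \epsilon$ and sends $\epsilon \to 0$ at the very end, so that $v'_i - v_i \to \frac{4\gap_i}{1+2\gap_i} \le 4\gap_i$ and the stated constant $\frac{1-p}{32K\gap_i^2}$ is obtained. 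Your version is therefore correct up to a factor of $4$ in the constant (immaterial for Theorem \ref{thm:lb}, which is stated with $\gtrsim$), but to match the lemma's explicit constant you would need the vanishing-perturbation limit rather than a perturbation of fixed size $\Theta(\gap_i)$.
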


\begin{proof} Fix an item $i \in [N] \setminus S^*$. For instance $\mathcal I$, we have 
\begin{align}
    \mathbb E[\mathrm{Reg}_T] &= \sum_{t = 1}^T \mathbb E[\theta^* - R(S_t, \bm v)]) \notag \\ 
    &\ge \sum_{t = 1}^T \mathbb E[\mathbb I\{i \in S_t \} \cdot (\theta^* - R(S_t, \bm v))] \notag \\
    &\ge \sum_{t = 1}^T \mathbb E[\mathbb I\{i \in S_t \} \cdot \gap_i] \notag \\
    &= \mathbb E[\mathcal T_i(T)] \cdot \gap_i.\label{eq:lb3}
\end{align}

We construct another MNL-bandit instance $\mathcal I'$. Let $\epsilon \in (0, \frac{1}{2K})$ be a parameter. We define a preference vector $\bm v'$ such that 
\begin{align*}
    v'_j = \begin{cases} 
    v_j, & j\ne i, \\
    \frac{1}{2K} + \epsilon, & j = i.
    \end{cases}
\end{align*}
Then $\mathcal I' = (N, K, \bm r, \bm v')$ is an MNL-bandit instance. 
For any algorithm $\mathcal A$, let $\mathbb P$ be the probability measure given by $\mathcal A$ and $\mathcal I$, and $\mathbb P'$ be that given by $\mathcal A$ and $\mathcal I'$. From now on, we use $\mathbb E$ to denote the expectation under $\mathbb P$, and $\mathbb E'$ to denote that under $\mathbb P'$. 

For instance $\mathcal I'$, direct computations give that 
\begin{align*}
    \max_{\lvert S \rvert \le K} R(S, \bm v') &= R(\{1, \ldots, K-1, i\}, \bm v') = \frac{1 + \epsilon}{2 + \epsilon}, \\
    \max_{\lvert S \rvert \le K : i\notin S} R(S, \bm v') &= R(\{1, \ldots, K\}, \bm v') = \frac{1}{2}. 
\end{align*}
Thus we have 
\begin{align}
    \mathbb E'[\mathrm{Reg}_T] &= \sum_{t = 1}^T \mathbb E[\max_{\lvert S \rvert \le K} R(S, \bm v') - R(S_t, \bm v)] \notag \\
    &\ge \sum_{t = 1}^T \mathbb E'[\mathbb I\{i \notin S_t \} \cdot (\max_{\lvert S \rvert \le K} R(S, \bm v') - R(S_t, \bm v'))] \notag \\
    &\ge \sum_{t = 1}^T \mathbb E'[\mathbb I\{i \notin S_t \}] \cdot (\frac{1 + \epsilon}{2 + \epsilon} - \frac{1}{2}) \notag \\
    &\ge \sum_{t = 1}^T \mathbb E'[\mathbb I\{i \notin S_t \}] \cdot \frac{\epsilon}{4}. \label{eq:lb4}
\end{align}

Recall that in the proof of Lemma \ref{lem:lbinst}, we showed $v_i \ge \frac{1}{4K}$. For any assortment $S$, by Lemma \ref{lem:kl}, we have
\begin{align*}
    D_{\mathrm{KL}}(P_S^{\bm v} \parallel P_S^{\bm v'}) &\le \frac{(\epsilon + \frac{4 \gap_i}{1 + 2 \gap_i})^2}{2 v_i (1 + \sum_{i \in S} v_i) } \\ 
    &\le \frac{(\epsilon + 4 \gap_i)^2}{2 v_i } \\ 
    &\le 2K (4 \gap_i + \epsilon)^2.
\end{align*}

By Lemma \ref{lem:kldecomp}, we have 
\begin{align*}
    D_{\mathrm{KL}}(\mathbb P \parallel \mathbb P') &= \sum_{\lvert S \rvert \le K} \mathbb E[\mathcal T_S(T)] D_{\mathrm{KL}}(P_S^{\bm v} \parallel P_S^{\bm v'}) \\
    &= \sum_{\lvert S \rvert \le K: i \in S} \mathbb E[\mathcal T_S(T)] D_{\mathrm{KL}}(P_S^{\bm v} \parallel P_S^{\bm v'}) \\
    &\le \sum_{\lvert S \rvert \le K: i \in S} \mathbb E[\mathcal T_S(T)] \cdot  2K (4 \gap_i + \epsilon)^2  \\
    &= 2K (4 \gap_i + \epsilon)^2 \cdot \mathbb E[\mathcal T_i(T)].
\end{align*}

Let $A = \{\mathcal T_i(T) > \frac{T}{2}\}$ be an event. By Lemma \ref{lem:bhineq}, we have 
\begin{align*}
    \mathbb P(A) + \mathbb P' (A^\complement) &\ge \frac{1}{2}\exp(-D_{\mathrm{KL}}(\mathbb P \parallel \mathbb P')) \\
    &\ge \frac{1}{2}\exp(-2K (4 \gap_i + \epsilon)^2 \cdot \mathbb E[\mathcal T_i(T)]). 
\end{align*}

By Markov's inequality, we have $\mathbb E[\mathcal T_i(T)] \ge \mathbb P(A) \cdot \frac{T}{2}$ and $\sum_{t = 1}^T \mathbb E'[\mathbb I\{i \notin S_t \}] \ge \mathbb P'(A^\complement) \cdot \frac{T}{2}$. Together with Eqs. (\ref{eq:lb3}) (\ref{eq:lb4}), we have 
\begin{align*}
    \mathbb E[\mathrm{Reg}_T] + \mathbb E'[\mathrm{Reg}_T] &\ge \mathbb E[\mathcal T_i(T)] \cdot \gap_i + \sum_{t = 1}^T \mathbb E'[\mathbb I\{i \notin S_t \}] \cdot \frac{\epsilon}{4} \\ 
    &\ge \frac{T}{2} (\mathbb P(A) \gap_i + \mathbb P'(A^\complement) \cdot \frac{\epsilon}{4}) \\
    &\ge \frac{T}{2} \min\{\gap_i, \frac{\epsilon}{4}\} (\mathbb P(A) + \mathbb P' (A^\complement)) \\
    &\ge \frac{T}{2} \min\{\gap_i, \frac{\epsilon}{4}\} \exp(-\mathbb E[\mathcal T_i(T)] \cdot 2K (4 \gap_i + \epsilon)^2 ) \\
\end{align*}

Recall that $\mathbb E[\mathrm{Reg}_T] + \mathbb E'[\mathrm{Reg}_T] \le 2 T^p$ for some $p \in (0, 1)$. As a result, we have 
\begin{align*}
    \liminf_{T \to \infty} \frac{\mathbb E[\mathcal T_i(T)]}{\log T} &\ge \frac{1}{2K (4 \gap_i + \epsilon)^2}(1-p - \limsup_{T\to\infty} \frac{\log(\frac{4}{\min\{\gap_i, \frac{\epsilon}{4}\}})}{\log T}) \\
    &=\frac{1 - p}{2K (4 \gap_i + \epsilon)^2}. 
\end{align*}
Let $\epsilon\to 0$, we have 
\[
    \liminf_{T \to \infty} \frac{\mathbb E[\mathcal T_i(T)]}{\log T} \ge \frac{1 - p}{32K \gap_i^2}. \qedhere
\]

\end{proof}

\begin{proof}[Proof of Theorem \ref{thm:lb}] We consider the MNL-bandit instance defined in Lemma \ref{lem:lbinst}. By Lemmas \ref{lem:lbregitem} and \ref{lem:lbitem}, we have 
\begin{align*}
    \liminf_{T \to \infty} \frac{\mathbb E[\mathrm{Reg}_T]}{\log T} &\ge \frac{1}{3} \sum_{i \in [N]\setminus S^*} \liminf_{T \to \infty} \frac{\mathbb E[\mathcal T_i(T)]}{\log T} \\
    &\ge \sum_{i \in [N]\setminus S^*} \frac{1-p}{96 K \gap_i}. \qedhere
\end{align*}
\end{proof}

\end{document}